\newcommand{\condition}{\ensuremath{\,|\,}}
\newcommand\ours{Natural Posterior Network}
\newcommand\oursacro{NatPN}
\newcommand\x{\bm{x}}
\newcommand\z{\bm{z}}
\newcommand\expparam{\bm{\theta}}
\newcommand\priorparam{\bm{\chi}}
\newcommand\evidence{n}
\newcommand\y{y}
\newcommand{\inputdim}{D}
\newcommand{\latentdim}{H}
\newcommand{\suffstatdim}{L}
\newcommand\idata{i\xspace}
\newcommand\ndata{N\xspace}
\newcommand\dataix{^{(\idata)}}
\newcommand\iclass{c\xspace}
\newcommand\nclass{C\xspace}
\newtheorem{theorem}{Theorem}
\newtheorem*{theorem*}{Theorem}
\newtheorem{lemma}[theorem]{Lemma}
\DeclareMathOperator{\DCat}{Cat}
\DeclareMathOperator{\DDir}{Dir}
\DeclareMathOperator{\DGamma}{\Gamma}
\DeclareMathOperator{\DNIG}{\mathcal{N}\Gamma^{-1}}
\DeclareMathOperator{\DNormal}{\mathcal{N}}
\DeclareMathOperator{\DPoi}{Poi}
\DeclareMathOperator{\real}{\mathbb{R}}
\DeclareMathOperator{\prob}{\mathbb{P}}
\DeclareMathOperator{\prior}{\mathbb{Q}}
\DeclareMathOperator{\entropy}{\mathbb{H}}
\DeclareMathOperator{\expectation}{\mathbb{E}}
\title{Natural Posterior Network: Deep Bayesian Uncertainty for Exponential Family Distributions}
\author{
   Bertrand Charpentier\thanks{Equal contribution}, Oliver Borchert\footnote[1]{}, Daniel Z\"ugner, Simon Geisler, Stephan G\"unnemann\\
   Department of Informatics \& Munich Data Science Institute\\
   Technical University of Munich, Germany\\
   \texttt{\{charpent, borchero, zuegnerd, geisler, guennemann\}@in.tum.de} \\
}
\begin{document}

\maketitle

\begin{abstract}

    Uncertainty awareness is crucial to develop reliable machine learning models. In this work, we propose the \ours{} (\oursacro{}) for fast and high-quality uncertainty estimation for any task where the target distribution belongs to the exponential family. Thus, \oursacro{} finds application for both classification and general regression settings. Unlike many previous approaches, \oursacro{} does not require out-of-distribution (OOD) data at training time. Instead, it leverages Normalizing Flows to fit a single density on a learned low-dimensional and task-dependent latent space. For any input sample, \oursacro{} uses the predicted likelihood to perform a Bayesian update over the target distribution. Theoretically, \oursacro{} assigns high uncertainty far away from training data. Empirically, our extensive experiments on calibration and OOD detection show that \oursacro{} delivers highly competitive performance for classification, regression and count prediction tasks.
    
\end{abstract}

\section{Introduction}

\begin{wrapfigure}{r}{0.45\textwidth}
\vspace{-6mm}
	\centering
	\includegraphics[width=0.42\textwidth]{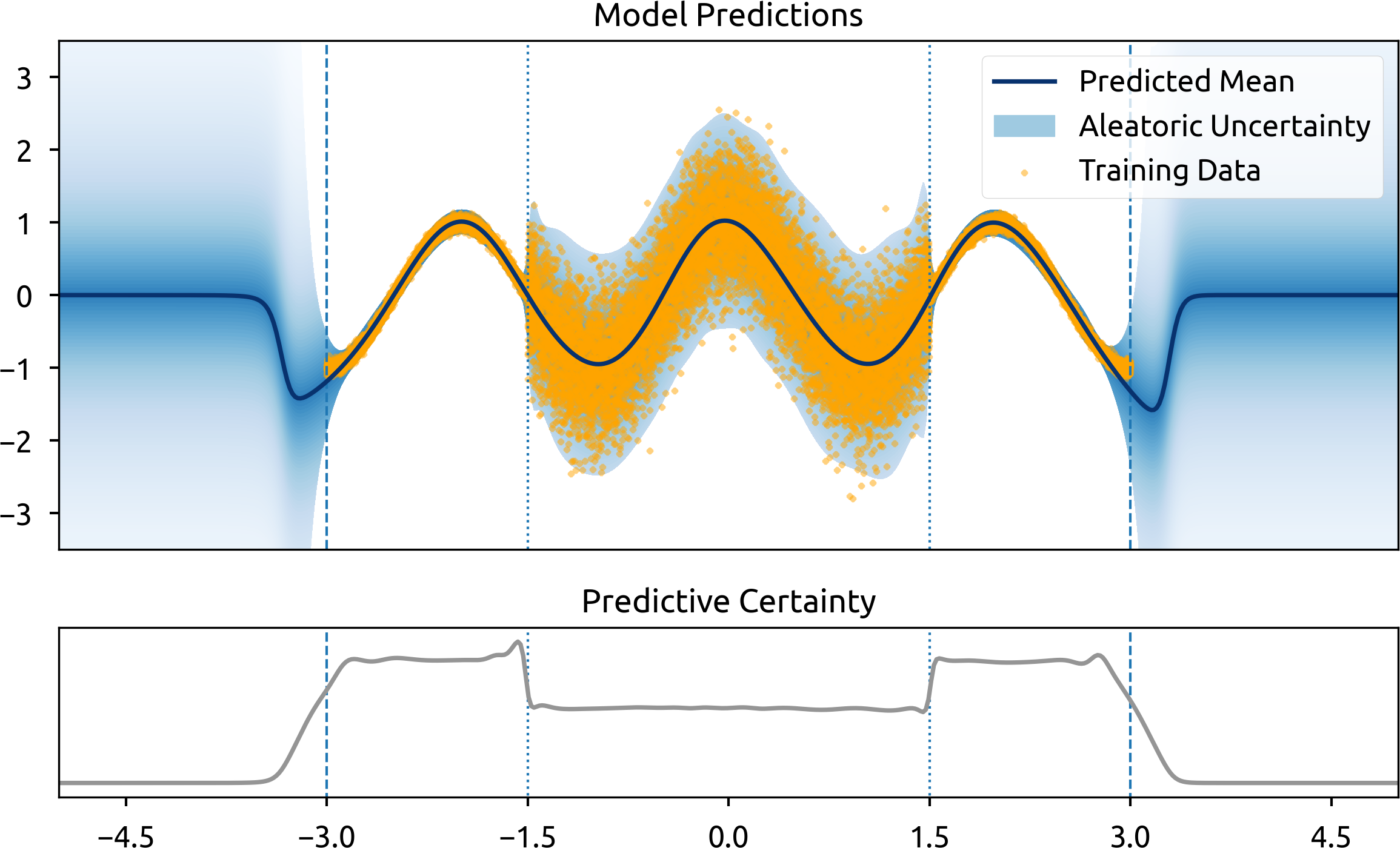}
    \caption*{Toy Regression Task}
    \vspace{0.3cm}
    \includegraphics[width=0.42\textwidth]{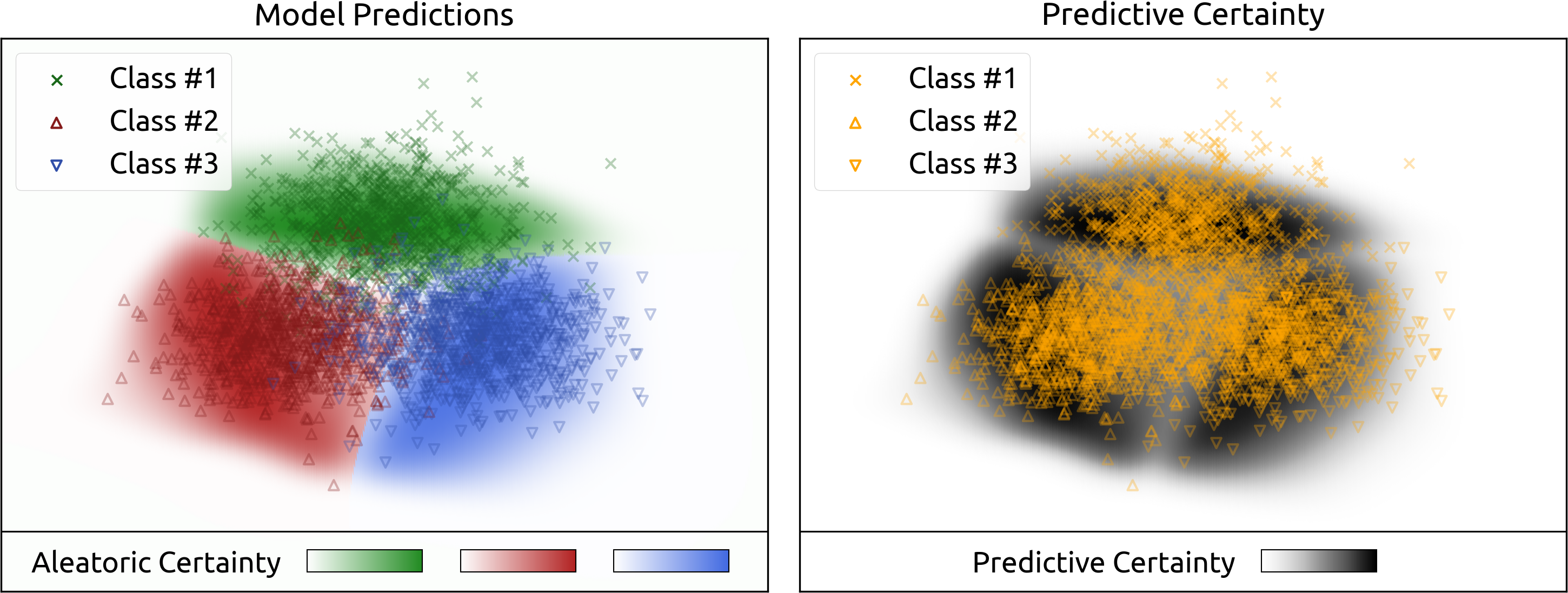}
    \caption*{Toy Classification Task}
    \caption{Visualization of the aleatoric and predictive uncertainty estimates of \oursacro{} on two toy regressions and classification tasks. \oursacro{} correctly assigns higher uncertainty to regions far from the training data.}
    \label{fig:toy-example-uncertainty}
    \vspace{-6mm}
\end{wrapfigure}

Accurate and rigorous uncertainty estimation is key for reliable machine learning models in safety-critical domains. It quantifies the confidence of machine learning models, thus allowing them to validate knowledgeable predictions corresponding to correct/wrong predictions, flag predictions on unknown input domains corresponding to anomaly or Out-of-Distribution detection, or detect natural shifts of the data facilitating real-time model maintenance \citep{dataset-shift, shifts-dataset, comparison-bayesian-diabetic}. Specifically, a reliable model can handle all these failure modes with high-quality estimates of \emph{aleatoric} and \emph{epistemic} uncertainty \citep{uncertainty-deep-learning}. These two levels of uncertainty allow a model to account for both irreducible data uncertainty (e.g. a fair dice's chance of $1/6$ for each face) and uncertainty due to the lack of knowledge about unseen data (e.g. input features differing significantly from training data or a covariate shift) respectively. Aleatoric and epistemic uncertainty levels can eventually be combined into an overall \emph{predictive uncertainty} \citep{uncertainty-deep-learning}

Traditional neural networks are not readily applicable in safety-critical domains as they show overconfident prediction, in particular on data that is different from training data \citep{calibration-network, ensembles}. To mitigate this problem, an important model family for uncertainty estimation directly predicts the parameters of a \emph{conjugate prior distribution} on the predicted \emph{target probability distribution}, thus accounting for the different levels of uncertainty. These models are efficient as they only require \emph{a single forward pass} for target and uncertainty prediction. Most of those models focus on classification and thus predict parameters of a Dirichlet distribution \citep{uceloss,postnet,priornet,reverse-kl,max_gap_id_ood,uncertainty-generative-classifier,multifaceted_uncertainty,graph_posterior,graph_uncertainty, lightweight-prob-net}. However, only two works \citep{evidential-regression, regression-priornet} have focused on regression by learning parameters of a Normal Inverse-Gamma (NIG) distribution as conjugate prior. Hence, all these models are limited to a \emph{single} task (e.g. either classification or regression). Some approaches even require out-of-distribution (OOD) data at training time \citep{priornet, reverse-kl} which is an unrealistic assumption in many real-world applications where anomalies are a priori diverse, rare or unknown.

\textbf{Our contribution.} We propose \ours{} (\oursacro{}) as a new approach parametrizing conjugate prior distributions for versatile uncertainty estimation. \oursacro{}{} is motivated from both the theoretical and practical perspective. 
\textbf{(1)} \oursacro{} can estimate predictive uncertainty for \emph{any} task described by the general group of exponential family distributions contrary to existing approaches from this family of models. Notably, this encompasses very common tasks such as classification, regression and count prediction which can be described with Categorical, Normal and Poisson distributions, respectively. 
\textbf{(2)} In theory, \oursacro{} is based on a \emph{new unified exponential family framework} which performs an input-dependent Bayesian update. For every input, it predicts the parameters of the posterior over the target exponential family distribution. We show that this Bayesian update is \emph{guaranteed} to predict high uncertainty far from training data.
\textbf{(3)} In practice, \oursacro{} requires \emph{no OOD data for training}, only adds \emph{a single normalizing flow} density to the last predictor layer and provides fast uncertainty estimation in a \emph{single forward pass}. Our extensive experiments showcase the high performances of \oursacro{} for various criteria (accuracy, calibration, OOD and shift detection) and tasks (classification, regression and count prediction). We illustrate the accurate aleatoric and predictive uncertainty predictions of \oursacro{} on two toy examples for classification and regression in Fig.~\ref{fig:toy-example-uncertainty}. \emph{None of the conjugate prior related works} have similar theoretical and practical properties.


\section{Related Work}

In this section, we describe other work related to uncertainty estimation for supervised learning. We refer to \citet{uncertainty-survey} for a detailed survey on uncertainty estimation in deep learning. 

\textbf{Sampling-based methods.} A first family of models estimates uncertainty by aggregating statistics (e.g. mean and variance) from different samples of an implicit predictive distribution. Examples are ensemble \citep{bayesian-classifier-combination,ensembles, dynamic-bayesian-combination-classifiers,batch-ensembles,hyper-ensembles} and dropout \citep{dropout} models which provide high-quality uncertainty estimates \citep{dataset-shift} at the cost of an
expensive sampling phase at inference time. Moreover, ensembles usually require training multiple models. Further, Bayesian neural networks (BNN) \citep{bayesian-networks, scalable-laplace-bnn, simple-baseline-uncertainty} model the uncertainty on the weights and also require multiple samples to estimate the uncertainty on the final prediction. While recent BNNs have shown reasonably good performance \citep{rank-1-bnn,practical-bayesian,liberty-depth-bnn}, modelling the distribution on the weights suffers from pathological behavior thus limiting these approaches in practice \citep{expressiveness-bnn, practical-bnn, what-bnn-posterior}. In particular, \citet{what-bnn-posterior} uses an enormous computation budget by parallelizing the computation over 512 TPUv3  devices and running tens of thousands of training epochs to achieve a more exact Bayesian inference which is not suitable for practical applications. In contrast, \oursacro{} predicts uncertainty in \emph{a single forward pass} with a \emph{closed-form posterior distribution} over the target variable. \oursacro{} \emph{does not} model uncertainty on the weights.

\textbf{Sampling-free methods.} A second family of models is capable of estimating uncertainty in a single forward pass. The family of models parametrizing conjugate prior distributions is the main focus of this paper \citep{survey_evidential_uncertainty,evaluating_dbu,max_gap_id_ood,uncertainty-generative-classifier,multifaceted_uncertainty,graph_posterior, lightweight-prob-net}. Beyond this family of models, we differentiate between four other families of sampling-free models for uncertainty estimation. A first family aims at learning deep Gaussian processes with random features projections or learned inducing points \citep{uncertainty-distance-awareness, due, duq, uceloss}. A second family aims at learning deep energy-based models \citep{ood_ebm, jem_ebm}. Another family of models aims at propagating uncertainty across layers \citep{natural-parameter-network, sampling-free-variance-propagation, feed-forward-propagation, lightweight-prob-net, probabilistic-backprop-scalable-bnn}. They model uncertainty at the weight and/or activation levels and are generally constrained to specific transformations. In contrast, \oursacro{} only models the uncertainty on the predicted target variable and does not enforce any constraint on the encoder architecture. Further, some of the models propagating uncertainty already used the exponential family framework \citep{natural-parameter-network, deep-exponential-families}. However, while they parametrize exponential family distributions, \oursacro{} parametrizes the \emph{conjugate prior of the target exponential family distributions} which accounts for the epistemic uncertainty. Finally, while the family of calibration models aims at calibrating predictions \citep{accurate-uncertainties-deep-learning-regression, confidence-aware-learning, individual-calibration, distribution-calibration-regression, intra-order-preserving}, \oursacro{} aims at accurately modelling both aleatoric and epistemic uncertainty on in- and out-of-distribution data.

\section{Natural Posterior Network}

At the very core of \oursacro{} stands the Bayesian update rule: $    \prior(\expparam \condition \mathcal{D}) \propto \prob(\mathcal{D} \condition \expparam) \times \prior(\expparam)$
%
%
where $\prob(\mathcal{D} \condition \expparam)$ is the target distribution of the target data $\mathcal{D}$ given its parameter $\expparam$, and $\prior(\expparam )$ and $\prior(\expparam \condition \mathcal{D})$ are the prior and posterior distributions, respectively, over the target distribution parameters. The target distribution $\prob(\mathcal{D} \condition \expparam)$ could be any likelihood describing the observed target labels. The Bayesian update has three main advantages: \textbf{(1)} it introduces a prior belief which represents the safe default prediction if no data is observed, \textbf{(2)} it updates the prior prediction based on observed target labels, and \textbf{(3)} it assigns a confidence for the new target prediction given the aggregated evidence count of observed target labels. While \oursacro{} is capable to perform a Bayesian update for every possible input given the observed training data, we first recall the Bayesian background for a single exponential family distribution.

\begin{table*}[ht!]
	\vspace{-3mm}
	\centering
	\resizebox{.89\textwidth}{!}{%
\begin{tabular}{lccl}
\toprule
\multicolumn{1}{c}{Likelihood $\prob$} & \multicolumn{1}{c}{Conjugate Prior $\prior$} & \multicolumn{1}{c}{Parametrization Mapping $m$} & \multicolumn{1}{c}{Bayesian Loss (Eq.~\ref{eq:bayesian-loss})}\\
\midrule
\midrule
$\y \sim \DCat(\bm{p})$ & 
$\bm{p} \sim \DDir(\bm{\alpha})$ & 
\begin{tabular}{@{}l@{}}
$\priorparam=\bm{\alpha}/\evidence$ \\
$\evidence=\sum_\iclass \alpha_\iclass$
\end{tabular} &
\begin{tabular}{@{}l@{}}
    \textbf{(i)} $= \psi(\alpha_{\y*}\dataix) - \psi(\alpha_0\dataix)$ \\
    \textbf{(ii)} $= \log B(\bm{\alpha}\dataix) + (\alpha_0\dataix - \nclass) \psi(\alpha_0\dataix) - \sum_\iclass (\alpha_\iclass\dataix - 1) \psi(\alpha_\iclass\dataix)$
\end{tabular} \\
\midrule
$\y \sim \DNormal(\mu, \sigma)$ & 
$\mu, \sigma \sim \DNIG(\mu_0, \lambda, \alpha, \beta)$ & 
\begin{tabular}{@{}l@{}}
$\priorparam=\begin{pmatrix}\mu_0 \\ \mu_0^2 + \frac{2\beta}{\evidence} \end{pmatrix}$\\
$\evidence = \lambda= 2 \alpha$
\end{tabular} &
\begin{tabular}{@{}l@{}}
    \textbf{(i)} $= \frac{1}{2}\left(- \frac{\alpha}{\beta} (\y - \mu_0)^2 - \frac{1}{\lambda} + \psi(\alpha) - \log{\beta} - \log{2\pi}\right)$ \\
    \textbf{(ii)} $= \frac{1}{2} + \log\left((2\pi)^{\frac{1}{2}}\beta^{\frac{3}{2}}\Gamma(\alpha)\right) - \frac{1}{2} \log{\lambda} + \alpha - (\alpha+\frac{3}{2})\psi(\alpha)$
\end{tabular}\\
\midrule
$\y \sim \DPoi(\lambda)$ &
$\lambda \sim \DGamma(\alpha, \beta)$ &
\begin{tabular}{@{}l@{}}
$\chi=\alpha/\evidence$ \\
$\evidence=\beta$
\end{tabular} &
\begin{tabular}{@{}l@{}}
    \textbf{(i)} $= (\psi(\alpha) - \log{\beta}) \y - \frac{\alpha}{\beta} - \sum_{k=1}^{\y} \log k$ \\
    \textbf{(ii)} $= \alpha + \log{\Gamma(\alpha)} - \log{\beta} + (1 - \alpha) \psi(\alpha)$
\end{tabular}\\
\bottomrule
\end{tabular}}
	\caption{Examples of Exponential Family Distributions where $\psi(x)$ and $B(x)$ denote Digamma and Beta function, respectively.}
	\label{tab:summary_exp_dist}
	\vspace{-3mm}
\end{table*}

\subsection{Exponential Family Distribution}
Distributions from the exponential family are very widely used and have favorable analytical properties. Indeed, \textbf{(1)} they cover a wide range of target variables like discrete, continuous, counts or spherical coordinates, and \textbf{(2)} they benefit from intuitive and generic formulae for their parameters, density functions and statistics which can often be evaluated in closed-form. Important examples of exponential family distributions are Normal, Categorical and Poisson distributions (see Tab.~\ref{tab:summary_exp_dist}). Formally, an exponential family distribution on a target variable $\y \in \real$ with \emph{natural parameters} $\expparam \in \real^\suffstatdim$ can be denoted as
\begin{equation}\label{eq:exponential-family}
    \prob(\y \condition \expparam) = h(\y) \exp\left(\expparam^T \bm{u}(\y) - A(\expparam)\right)
\end{equation}
where ${h: \real \rightarrow \real}$ is the \emph{carrier or base measure}, ${A: \real^\suffstatdim \rightarrow \real}$ the \emph{log-normalizer} and ${\bm{u}: \real \rightarrow \real^\suffstatdim}$ the \emph{sufficient statistics} \citep{bishop,exponential-entropy}. The entropy of an exponential family distribution can always be written as $\entropy[\prob] = A(\expparam) - \expparam^T \nabla_{\bm{\theta}}A(\expparam) - \expectation[\log{h(\y)}]$ \citep{exponential-entropy}.
An exponential family distribution always admits a conjugate prior, which often also is a member of the exponential family:
\begin{equation}\label{eq:prior}
    \prior(\expparam \condition \priorparam, \evidence) = \eta(\priorparam, \evidence) \exp\left( \evidence \, \bm{\theta}^T\priorparam  - \evidence A(\expparam) \right)
\end{equation}
where $\eta(\priorparam, \evidence)$ is a normalization coefficient, $\priorparam \in \real^L$ are \emph{prior parameters} and $\evidence \in \real^+$ is the \emph{evidence}. Given a set of $\ndata$ target observations $\{\y^{(i)}\}_{i}^{\ndata}$, it is easy to compute a closed-form Bayesian update $\prior(\expparam \condition \priorparam^\text{post}, \evidence^\text{post}) \propto \prob(\{\y^{(i)}\}_{i}^{\ndata} \condition \expparam) \times \prior(\expparam \condition \chi^\text{prior}, n^\text{prior})$:
\begin{equation}\label{eq:posterior}
    \prior(\expparam \condition \priorparam^\text{post}, \evidence^\text{post}) \propto \exp\left( \evidence^\text{post} \expparam^T\priorparam^\text{post} - \evidence^\text{post} A(\expparam) \right)
\end{equation}
where $\priorparam^\text{post}=\frac{\evidence^\text{prior} \priorparam^\text{prior}+ \sum_{j}^\ndata{\bm{u}(\y^{(j)})}}{\evidence^\text{prior} + \ndata}$ and $\evidence^\text{post}=\evidence^\text{prior} + \ndata$. We see that $\priorparam^{\text{prior}}$ (resp. $\priorparam^{\text{post}}$) can be viewed as the average sufficient statistics of $\evidence^{\text{prior}}$ (resp. $\evidence^{\text{post}}$) fictitious samples \citep{bishop}. 
Further, the average sufficient statistic of fictitious samples is equal to the expected sufficient statistic of the conjugate distribution, i.e. $\priorparam = \expectation_{\prior(\priorparam, \evidence)}[\expparam]$ \citep{exponential-family-stats, conjugate-prior-exponential-family}. Thus, the parameter $\priorparam^\text{post}$ carries the inherent aleatoric uncertainty on the target distribution with natural parameters $\expparam$, while the evidence $\evidence^\text{post}$ aligns well with the epistemic uncertainty (i.e. a low evidence means few prior target observations). We stress that the natural conjugate prior parametrization $\priorparam, \evidence$ is often different from the ``well-known'' parametrization $\bm{\kappa}$ used by standard coding libraries. By definition, a bijective mapping $m(\bm{\kappa}) = (\priorparam, \evidence)$ from the natural parametrization to the commonly used parametrization always exists (see examples in Tab.~\ref{tab:summary_exp_dist}). Finally, exponential family distributions always admit a closed-form posterior predictive distribution \citep{bayesian-data-analysis}.

\subsection{Input-Dependent Bayesian Update for Exponential Family Distributions}
We propose to leverage the power of exponential family distributions for the more complex task when the prediction $\y\dataix$  depends on the input $\x\dataix$. Hence, \oursacro{} extends the Bayesian treatment of a single exponential family distribution prediction by predicting an individual posterior update per input. We distinguish between the chosen prior parameters $\priorparam^\text{prior}$, $\evidence^\text{prior}$ shared among samples, and the additional predicted parameters $\priorparam\dataix$, $\evidence\dataix$ dependent on the input $\x\dataix$ leading to the updated posterior parameters:
\begin{equation}\label{eq:parameter-update}
    \priorparam^{\text{post},(\idata)} = \frac{\evidence^\text{prior}\priorparam^\text{prior} + \evidence\dataix \priorparam\dataix}{\evidence^\text{prior} + \evidence\dataix}, \hspace{5mm}
    \evidence^{\text{post},(\idata)} = \evidence^\text{prior} + \evidence\dataix
\end{equation}
Equivalently, \oursacro{} may be interpreted as predicting a set of $\evidence\dataix$ pseudo observations $\{\y^{(j)}\}_{j}\dataix$ such that their aggregated sufficient statistics satisfy \smash{$\sum_{j}^{\evidence\dataix} \y^{(j)} = \evidence\dataix \priorparam\dataix$}, and perform the respective Bayesian update.
%
%
This Bayesian update works for \emph{any} choice of exponential family distributions as long as parameters are mapped to their standard form (see Tab.~\ref{tab:summary_exp_dist}). According to the \emph{principle of maximum entropy} \citep{maximum-entropy-principle}, a practical choice for the prior is to enforce high entropy for the prior distribution which is usually considered less informative. It is typically achieved when the prior pseudo-count $\evidence^\text{prior}$ is small and the prior parameter $\priorparam^\text{prior}$ shows a high aleatoric uncertainty.

\begin{figure*}[t]
    \centering
    \includegraphics[width=.75\linewidth]{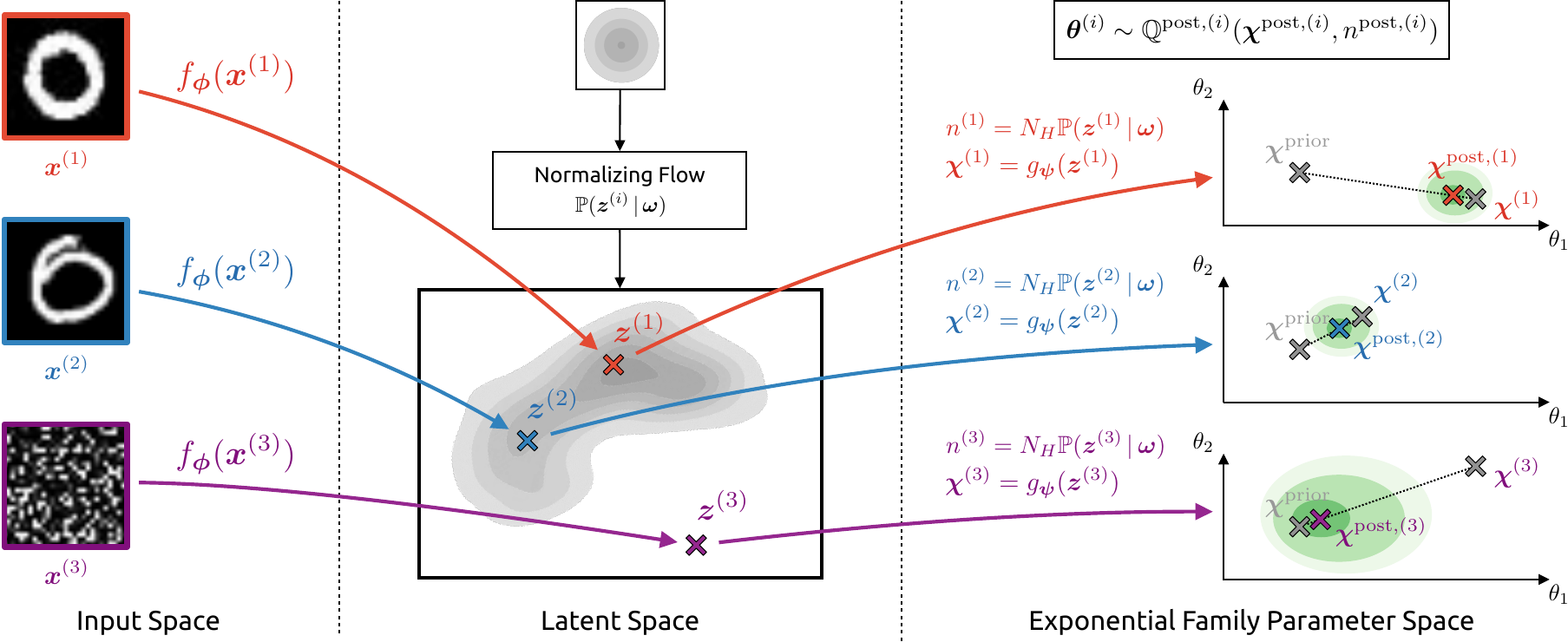}
    \caption{Overview of \ours{}. Inputs $\bm{x}^{(i)}$ are first mapped to a low-dimensional latent representation $\bm{z}^{(i)}$ by the encoder $f_{\bm{\phi}}$. From $\bm{z}^{(i)}$, the decoder $g_{\bm{\psi}}$ derives the parameter update $\bm{\chi}^{(i)}$ while a normalizing flow $\mathbb{P}_{\bm{\omega}}$ yields the evidence update $n^{(i)}$. Posterior parameters are obtained from a weighted combination of prior and update parameters according to $n^{\text{post},(i)}$.}
    \label{fig:npn}
    \vspace{-6mm}
\end{figure*}

Hence, \oursacro{} proposes a generic way to perform the input-dependent Bayesian update $\priorparam\dataix$, $\evidence\dataix$ for \emph{any} exponential family distribution in three steps (see Fig.~\ref{fig:npn}): \textbf{(1)} An encoder $f_{\bm{\phi}}$ maps the input $\x\dataix$ onto a low-dimensional latent vector $\z\dataix = f_{\phi}(\x\dataix) \in \real^H$ representing useful features for the prediction task (see left Fig.\ref{fig:npn}). Note that the architecture of the encoder can be arbitrarily complex. Then, \textbf{(2)} the latent representation $\z\dataix$ is used in two different ways to predict the parameter update $\priorparam\dataix$ and the evidence update $\evidence\dataix$ (see center Fig.\ref{fig:npn}). On the one hand, a linear decoder $g_{\bm{\psi}}$ is trained to output the parameter update $\priorparam\dataix = g_{\bm{\psi}}(\z\dataix) \in \real^L$ accounting for the aleatoric uncertainty. On the other hand, a single normalized density is trained to output the evidence update $\evidence\dataix = N_H\prob(\z\dataix \condition \bm{\omega})$ accounting for the epistemic uncertainty. The intuition is that increasing the evidence on training data during training forces the evidence everywhere else (incl. far from training data) to decrease thanks to the density normalization constraint. The constant $N_H$ is a certainty budget distributed by the normalized density $\prob(\z\dataix \condition \bm{\omega})$ over the latent representations $\z\dataix$ i.e. $N_H = \int N_H \prob(\z\dataix \condition \bm{\omega}) d\z\dataix = \int \evidence\dataix d\z\dataix$. In practice, we observed that scaling the certainty budget w.r.t. the latent dimension $H$ helped the density to cover larger volumes in higher dimension (see app.). Finally, \textbf{(3)} \oursacro{} computes the posterior parameters $\priorparam^{\text{post}, (\idata)}$ and $\evidence^{\text{post}, (\idata)}$ which can be viewed respectively as the mean and concentration of the posterior distribution (see right Fig.\ref{fig:npn}). Note that the posterior parameter $\priorparam^{\text{post}, (\idata)}$ is a simple weighted average of the prior parameter $\priorparam^\text{prior}$ and the update parameter $\priorparam\dataix$ as shown by Eq.~\ref{eq:parameter-update}.

\oursacro{} extends PostNet \citep{postnet} which also performs an input-dependent Bayesian update with density estimation. Yet, it has three crucial differences which lead to major practical improvements. First, the new exponential family framework is significantly more flexible and is not restricted to classification. Second, the Dirichlet $\bm{\alpha}$ parameter computation is different: \oursacro{} computes the $\priorparam$ parameters -- which can be viewed as standard softmax output -- and the $\evidence$ evidence separately (i.e. $\bm{\alpha} = \evidence \priorparam$) while PostNet computes one evidence pseudo-count per class. Third, \oursacro{} is computationally more efficient. It requires a single density while PostNet requires $\nclass$ densities.

\subsection{ID and OOD Uncertainty Estimates}
\oursacro{} intuitively leads to reasonable uncertainty estimation for the two limit cases of strong in-distribution (ID) and out-of-distribution (OOD) inputs (see red and purple samples in Fig.~\ref{fig:npn}). For very likely \emph{in-distribution} data (i.e. $\prob(\z\dataix \condition \bm{\omega}) \rightarrow \infty$), the posterior parameter overrules the prior (i.e. $\priorparam^{\text{post}, (\idata)} \rightarrow \priorparam\dataix$). Conversely, for very unlikely \emph{out-of-distribution} data (i.e. $\prob(\z\dataix \condition \bm{\omega}) \rightarrow 0$), the prior parameter takes over in the posterior update (i.e. $\priorparam^{\text{post}, (\idata)} \rightarrow \priorparam^\text{prior}$). Hence, the choice of the prior parameter should reflect the default prediction when the model lacks knowledge. We formally show under mild assumptions on the encoder that \oursacro{} predicts very low additional evidence ($\evidence\dataix \approx 0$) for (almost) any input $\x\dataix$ far away from the training data (i.e. $||\x\dataix|| \rightarrow + \infty$), thus recovering prior predictions (i.e. $\priorparam^{\text{post}, (\idata)} \approx \priorparam^\text{prior}$) (see proof in app.).
\begin{theorem}
\label{thm:oodom-guarantee}
Let a \oursacro{} model be parametrized with a (deep) encoder $f_{\phi}$ with ReLU activations, a decoder $g_{\psi}$ and the density $\prob(\z \condition \bm{\omega})$. Let $f_{\phi}(\x)= V^{(l)}\x + a^{(l)}$ be the piecewise affine representation of the ReLU network $f_{\phi}$ on the finite number of affine regions $Q^{(l)}$ \citep{understanding-nn-relu}. Suppose that $V^{(l)}$ have independent rows and the density function $\prob(\z \condition \bm{\omega})$ has bounded derivatives, then for almost any $\x$ we have \smash{$\prob(f_{\phi}(\delta \cdot \x) \condition \bm{\omega}) \underset{\delta \rightarrow \infty}{\rightarrow} 0$}. i.e the evidence becomes small far from training data.
\end{theorem}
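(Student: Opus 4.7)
The goal is to show that for (almost) every direction $\x$, the latent vector $f_\phi(\delta\x)$ escapes every bounded region of $\real^H$ as $\delta\to\infty$, and then to exploit integrability plus the Lipschitz property (implied by bounded derivatives) of the normalizing-flow density to conclude that the density evaluated at $f_\phi(\delta\x)$ must vanish. My plan is to decompose the argument into three independent ingredients: (a) identifying the unique unbounded affine region that contains the tail of the ray $\delta\x$, (b) showing the image of that ray in latent space has unbounded norm for almost every $\x$, and (c) a general lemma saying that integrable densities with bounded derivative decay at infinity.

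\textbf{Step (a): localization on the ray.} The piecewise affine structure of $f_\phi$ partitions $\real^D$ into finitely many polyhedral regions $\{Q^{(l)}\}$. Fix $\x\in\real^D$ and consider the ray $R_\x=\{\delta\x:\delta>0\}$. Since there are only finitely many regions, the intersections $R_\x\cap Q^{(l)}$ are finitely many intervals, and exactly one of them is unbounded; call the corresponding region $Q^{(l^\ast)}$. Hence there exists $\delta_0(\x)$ such that $f_\phi(\delta\x)=V^{(l^\ast)}\delta\x+a^{(l^\ast)}$ for all $\delta\geq\delta_0(\x)$.

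\textbf{Step (b): escape of the latent representation.} By hypothesis, the rows of $V^{(l^\ast)}$ are linearly independent, so $V^{(l^\ast)}\in\real^{H\times D}$ has full row rank $H$ and its kernel is a linear subspace of dimension $D-H$, hence of Lebesgue measure zero in $\real^D$. Taking the union of kernels over the finite collection of regions still yields a null set, so for almost every $\x$ we have $V^{(l^\ast)}\x\neq 0$, and therefore
\begin{equation*}
\|f_\phi(\delta\x)\|=\|\delta V^{(l^\ast)}\x+a^{(l^\ast)}\|\;\xrightarrow[\delta\to\infty]{}\;\infty.
\end{equation*}

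\textbf{Step (c): decay of the density.} This is the only non-cosmetic ingredient and the step I expect to require the most care. I will prove the auxiliary lemma: any nonnegative integrable function $p:\real^H\to\real$ whose gradient is uniformly bounded (so $p$ is globally Lipschitz with some constant $L$) satisfies $p(\z)\to 0$ as $\|\z\|\to\infty$. The argument is by contradiction: if $p(\z_n)\geq\varepsilon>0$ along some sequence with $\|\z_n\|\to\infty$, then Lipschitz continuity forces $p\geq\varepsilon/2$ on a ball of radius $\varepsilon/(2L)$ around each $\z_n$; by passing to a subsequence with $\|\z_{n+1}-\z_n\|>\varepsilon/L$ these balls are disjoint, and each contributes a fixed positive mass, contradicting $\int p<\infty$. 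Combining this lemma with Step (b) yields $\prob(f_\phi(\delta\x)\condition\bm{\omega})\to 0$ for almost every $\x$.

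\textbf{Main obstacle.} The measure-zero bookkeeping in Step (b) is the subtlest part: one must be careful that the assignment $\x\mapsto l^\ast(\x)$ of the ``terminal'' region depends on $\x$, so I must take the union of the kernels of $V^{(l)}$ over \emph{all} unbounded regions, and also exclude the (lower-dimensional) boundaries between regions where the affine representation is ambiguous. Because there are only finitely many regions, this union remains null and the ``almost any $\x$'' conclusion survives; once this is tidy, Steps (a) and (c) are essentially mechanical.
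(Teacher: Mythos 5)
Your proof is correct, and your Steps (a)--(b) mirror the paper's argument (its Lemma~\ref{lem:relu-regions} supplies the terminal affine region on the ray, and the independent-rows assumption gives $\|f_\phi(\delta\x)\|\to\infty$ for almost every $\x$; you are if anything more careful than the paper about unioning the kernels over all regions). Where you genuinely diverge is Step (c). The paper restricts the density to the affine half-line $S_\x=\{\delta\,V^{(l)}\x+a^{(l)}:\delta>\delta^*\}$, asserts that the resulting one-variable function $p(\delta)=\prob(\z_\delta\condition\bm{\omega})$ lies in $L^1(0,\infty)$ because the density is normalized, and then invokes the one-dimensional fact that an $L^1(0,\infty)$ function with bounded derivative tends to zero (its Lemma~\ref{lem:limit-not-density}). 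You instead prove the $H$-dimensional statement directly: a nonnegative integrable function on $\real^\latentdim$ with uniformly bounded gradient must vanish as $\|\z\|\to\infty$, via the disjoint-balls mass-counting contradiction. Your route buys two things. First, it sidesteps the paper's line-integrability claim, which is not automatic: the restriction of an $L^1(\real^\latentdim)$ function to one fixed line need not be integrable on that line (Fubini only guarantees this for almost every parallel line), so the paper's chain of equalities ending in $\int^{S_{\x}} \prob(\z \condition \bm{\omega})\, d\z < +\infty$ deserves more justification than it receives. Second, your lemma yields the stronger conclusion that the density decays along \emph{any} escaping sequence in latent space, which is essentially what the paper verifies by hand for Gaussian mixtures and radial flows in its Lemma~\ref{lem:limit-gmm-radial}. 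The trade-off is that you need the gradient bound globally on $\real^\latentdim$, whereas the paper's one-dimensional lemma only needs a derivative bound along the ray; both readings are consistent with the theorem's ``bounded derivatives'' hypothesis, and as written your argument is the more airtight of the two.
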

This theorem only requires that the density avoids very unlikely pathological behavior with unbounded derivatives \citep{limit-existence-infinity}. A slightly weaker conclusion holds using the notion of limit in density if the density function does not have bounded derivatives \citep{integrable-infinity}. Finally, the independent rows condition is realistic for trained networks with no constant output \citep{overconfident-relu}. It advantageously leads \oursacro{} to consistent uncertainty estimation contrary to standard ReLU networks which are overconfident far from training data \citep{overconfident-relu}.

\subsection{Bayesian \oursacro{} Ensemble} 
Interestingly, it is natural to extend the Bayesian treatment of a single \oursacro{} to an ensemble of \oursacro{} models (NatPE). An ensemble of $m$ \oursacro{} models is intuitively equivalent to performing $m$ successive Bayesian updates using each \oursacro{} member separately. More formally, given an input $\x\dataix$ and an ensemble of $m$ jointly trained \oursacro{} models, the Bayesian update for the posterior distribution becomes ${\priorparam^{\text{post}, (\idata)} = \frac{\evidence^\text{prior}\priorparam^\text{prior} + \sum_k^{m} \evidence_k\dataix \priorparam_k\dataix}{\evidence^\text{prior} + \sum_k^{m} \evidence_k\dataix}}$ and ${\evidence^{\text{post}, (\idata)} = \evidence^\text{prior} + \sum_k^{m} \evidence_k\dataix}$. 
Note that the standard Bayesian averaging which is used in many ensembling methods \citep{bayesian-ensemble-learning,ensembles,batch-ensembles,hyper-ensembles} is different from this Bayesian combination. While Bayesian averaging assume that only one model is correct, the Bayesian combination of \oursacro{} allows \emph{more} or \emph{none} of the models to be ``expert'' for some input  \citep{bayesian-averaging-to-combination}. For example, an input $\x\dataix$ unfamiliar to every model $m$ (i.e. $\evidence_m\dataix \approx 0$) would recover the prior default prediction $\priorparam^\text{prior}, \evidence^\text{prior}$. Existing models already had similar properties for Bayesian combination of classifiers \citep{bayesian-classifier-combination, dynamic-bayesian-combination-classifiers}.

\subsection{Optimization}
\label{sec:optimization}

The choice of the optimization procedure is of primary importance in order to obtain both high-quality target predictions and uncertainty estimates regardless of the task.

\textbf{Bayesian Loss.} We follow \citet{postnet} and aim at minimizing the Bayesian formulation:
\begin{equation}\label{eq:bayesian-loss}
    \mathcal{L}\dataix = - \underbrace{\expectation_{\expparam\dataix \sim \prior^{\text{post},(\idata)}}[\log \prob(\y\dataix\condition \expparam \dataix)]}_\text{(i)} - \underbrace{\entropy[\prior^{\text{post},(\idata)}]}_\text{(ii)}
\end{equation}
where $\entropy[\prior^{\text{post},(\idata)}]$ denotes the entropy of the predicted posterior distribution $\prior^{\text{post},(\idata)}$. Similarly to the ELBO loss, this loss is guaranteed to be optimal when the predicted posterior distribution is close to the true posterior distribution $\prior^*(\expparam \condition \x\dataix)$ i.e. $\prior^{\text{post},(\idata)} \approx \prior^*(\expparam \condition \x\dataix)$ \citep{update-belief-propagation, PAC-bayesian_estimator, opt-info-processing_bayes}. However, this loss is generally \emph{not} equal to the ELBO loss especially for real valued targets i.e. $y \in \real$ (see app.). The term \textbf{(i)} is the expected likelihood under the predicted posterior distribution. It can be viewed as the Uncertain Cross Entropy (UCE) loss \citep{uceloss} which is known to reduce uncertainty on observed data. The term \textbf{(ii)} is an entropy regularizer acting as a prior which favors uninformative distributions $\prior^{\text{post},(\idata)}$ with high entropy. In our case, we assume the likelihood $\prob(\y\dataix\condition\expparam\dataix)$ and the posterior $\prior^{\text{post},(\idata)}$ to be members of the exponential family. We take advantage of the convenient computations for such distributions and derive a more explicit formula for the Bayesian formulation \eqref{eq:bayesian-loss} (see derivation in the appendix):
\begin{equation}\label{eq:nll-entropy}
    \begin{aligned}
    \mathcal{L}_\lambda\dataix &\propto \expectation[\expparam]^T \bm{u}(y\dataix) - \expectation[A(\bm{\expparam})] - \lambda\entropy[\prior^{\text{post},(\idata)}]
    \end{aligned}
\end{equation}
where $\lambda$ is an additional regularization weight tuned with a grid search. Note that the term $\expectation[\expparam]^T \bm{u}(y\dataix) $ favors a good alignment of the expected sufficient statistic $\expectation[\expparam] = \priorparam$ with the observed sufficient statistic $\bm{u}(y\dataix)$. In practice, all terms can be computed efficiently in closed form for most exponential family distributions (see examples in Tab.~\ref{tab:summary_exp_dist}). In particular, simplifications are possible when the conjugate prior distribution is also in an exponential family which is often the case. Ultimately Eq.~\eqref{eq:nll-entropy} applies to \emph{any} exponential family distribution unlike \citet{postnet}.

\textbf{Optimization Scheme.} \oursacro{} is fully differentiable using the closed-form Bayesian loss. Thus, we train the encoder $f_{\bm{\phi}}$, the parameter decoder $g_{\bm{\psi}}$ and the normalizing flow $\prob(\z\dataix \condition \bm{\omega})$ w.r.t. parameters $\bm{\phi}, \bm{\psi}, \bm{\omega}$ jointly. Further, we observed that ``warm-up training'' \citep{warm-start} and ``fine-tuning'' \citep{fine-tuning-continuous} of the density helped to improve uncertainty estimation for more complex flows and datasets. Thus, we train the normalizing flow density to maximize the likelihood of the latent representations before and after the joint optimization while keeping all other parameters fixed.

\subsection{Model Limitations} \label{sec:limitations}

\textbf{Task-Specific OOD.} Previous works show that density estimation is unsuitable for acting on the raw image input \citep{anomaly-detection,deep-generative,typicality_OOD_generative} or on a non-carefully transformed space \citep{perfect-density-no-ood-guarantee}. To circumvent this issue, \oursacro{} does not perform OOD detection directly on the input but rather fits a normalizing flow on a learned space. In particular, the latent space is \textbf{(1)} low-dimensional, \textbf{(2)} task-specific and \textbf{(3)} encodes meaningful semantic features. Similarly, \citet{postnet, why-nf-fail-ood, density-states-ood, contrastive-ood} already improved OOD detection of density-based methods by leveraging a task-induced bias or low-dimensional statistics. In the case of \oursacro{}, the low-dimensional latent space has to contain relevant features to linearly predict the sufficient statistics required for the task. For example, \oursacro{} aims at a linearly separable latent space for classification. The downside is that \oursacro{} is capable of detecting OOD samples only with respect to the considered task and requires labeled examples during training. As an example, \oursacro{} likely fails to detect a change of image color if the task aims at classifying object shapes and the latent space has no notion of color. Hence, we underline that \oursacro{} comes with a task-dependent OOD definition, which is a reasonable choice in practice.

\textbf{Model-Task Mismatch.} Second, we emphasize that the uncertainty estimation quality of \oursacro{} for (close to) ID data depends on the convergence of the model, the encoder architecture (e.g. MLP, Conv., DenseDepth \citep{dense-depth}) and the target distribution (e.g. Poisson, Normal distributions) choice which should match the task needs. However, we show \emph{empirically} that \oursacro{} provides high quality uncertainty estimates in practice on a wide range of tasks. Further, we show \emph{theoretically} that \oursacro{} leads to uncertain prediction far away from training data for \emph{any} exponential family target distributions. In comparison, \citet{provable-uncertainty} showed akin guarantees for classification only.
\section{Experiments}\label{sec:experiments}

In this section, we compare \oursacro{} to existing methods on extensive experiments including three different tasks: classification, regression and count prediction. For each task type, we evaluate the prediction quality based on target error and uncertainty metrics. These various set-ups aim to highlight the versatility of \oursacro{}. In particular, \oursacro{} is the only model that adapts to all tasks and achieves high performances for all metrics without requiring multiple forward passes.

\begin{table*}[ht]
    \centering
    \vspace{-1mm}
   \caption{Classification results on Sensorless Drive with Categorical target distribution. Best scores among all single-pass models are in bold. Best scores among all models are starred.}
    \label{tab:sensorless-drive}
    \vspace{-3mm}
    \resizebox{0.8 \textwidth}{!}{
    \begin{tabular}{lcccccc}
        \toprule
        & \textbf{Accuracy} & \textbf{Brier} & \textbf{9/10 Alea.} & \textbf{9/10 Epist.} & \textbf{OODom Alea.} & \textbf{OODom Epist.} \\
        \midrule
        \textbf{Dropout} & 98.62 $\pm$ 0.11 & 3.79 $\pm$ 0.29 & 30.20 $\pm$ 0.85 & 32.57 $\pm$ 1.45 & 27.03 $\pm$ 0.51 & 95.30 $\pm$ 1.66 \\
        \textbf{Ensemble} & 98.83 $\pm$ 0.17 & 3.00 $\pm$ 0.54 & 30.79 $\pm$ 0.74 & 32.61 $\pm$ 1.06 & 27.16 $\pm$ 0.59 & 99.97 $\pm$ 0.01 \\
        \textbf{NatPE} & *99.66 $\pm$ 0.03 & *0.68 $\pm$ 0.05 & 77.05 $\pm$ 1.93 & 83.73 $\pm$ 1.89 & 99.99 $\pm$ 0.00 & *100.00 $\pm$ 0.00 \\
        \midrule
        \textbf{R-PriorNet} & 98.85 $\pm$ 0.25 & 2.01 $\pm$ 0.47 & 40.13 $\pm$ 2.99 & 30.07 $\pm$ 0.81 & \textbf{*100.00 $\pm$ 0.00} & 23.59 $\pm$ 0.00 \\
        \textbf{EnD$^2$} & 93.95 $\pm$ 2.35 & 28.09 $\pm$ 6.40 & 26.35 $\pm$ 0.60 & 24.85 $\pm$ 0.43 & 84.43 $\pm$ 15.21 & 23.58 $\pm$ 0.00 \\
        \textbf{PostNet} & \textbf{99.64 $\pm$ 0.02} & \textbf{0.75 $\pm$ 0.08} & 80.60 $\pm$ 1.68 & \textbf{*92.57 $\pm$ 1.41} & \textbf{*100.00 $\pm$ 0.00} & \textbf{*100.00 $\pm$ 0.00} \\
        \textbf{\oursacro{}} & 99.61 $\pm$ 0.05 & 1.04 $\pm$ 0.29 & \textbf{*81.43 $\pm$ 1.89} & 79.54 $\pm$ 2.62 & 99.98 $\pm$ 0.00 & \textbf{*100.00 $\pm$ 0.00} \\
        \bottomrule
    \end{tabular}
    }
            \vspace{-0mm}
\end{table*}

\begin{table*}[ht]
    \centering
    	\vspace{-4mm}
    \caption{Classification results on CIFAR-10 with Categorical target distribution. Best scores among all single-pass models are in bold. Best scores among all models are starred. Gray numbers indicate that R-PriorNet has seen samples from the SVHN dataset during training.}
    \vspace{-3mm}
    \resizebox{.9\textwidth}{!}{
    \begin{tabular}{lcccccccc}
        \toprule
        & \textbf{Accuracy} & \textbf{Brier} & \textbf{SVHN Alea.} & \textbf{SVHN Epist.} & \textbf{CelebA Alea.} & \textbf{CelebA Epist.} & \textbf{OODom Alea.} & \textbf{OODom Epist.} \\
        \midrule
        \textbf{Dropout} & 88.15 $\pm$ 0.20 & 19.59 $\pm$ 0.41 & 80.63 $\pm$ 1.59 & 73.09 $\pm$ 1.51 & 71.84 $\pm$ 4.28 & 71.04 $\pm$ 3.92 & 18.42 $\pm$ 1.11 & 49.69 $\pm$ 9.10 \\
        \textbf{Ensemble} & *89.95 $\pm$ 0.11 & 17.33 $\pm$ 0.17 & 85.26 $\pm$ 0.84 & 82.51 $\pm$ 0.63 & 76.20 $\pm$ 0.87 & 74.23 $\pm$ 0.78 & 25.30 $\pm$ 4.02 & 89.21 $\pm$ 7.55 \\
        \textbf{NatPE} & 89.21 $\pm$ 0.09 & 17.41 $\pm$ 0.12 & 85.66 $\pm$ 0.34 & *83.16 $\pm$ 0.67 & *78.95 $\pm$ 1.15 & *82.06 $\pm$ 1.30 & 87.27 $\pm$ 1.79 & *98.88 $\pm$ 0.26 \\
        \midrule
        \textbf{R-PriorNet} & \textbf{88.94 $\pm$ 0.23} & \textbf{*15.99 $\pm$ 0.32} & \textcolor{gray}{99.87 $\pm$ 0.02} & \textcolor{gray}{99.94 $\pm$ 0.01} & 67.74 $\pm$ 4.86 & 59.55 $\pm$ 7.90 & 42.21 $\pm$ 8.77 & 38.25 $\pm$ 9.82 \\
        \textbf{EnD$^2$} & 84.03 $\pm$ 0.25 & 40.84 $\pm$ 0.36 & \textbf{*86.47 $\pm$ 0.66} & \textbf{81.84 $\pm$ 0.92} & 75.54 $\pm$ 1.79 & 75.94 $\pm$ 1.82 & 42.19 $\pm$ 8.77 & 15.79 $\pm$ 0.27 \\
        \textbf{PostNet} & 87.95 $\pm$ 0.20 & 20.19 $\pm$ 0.40 & 82.35 $\pm$ 0.68 & 79.24 $\pm$ 1.49 & 72.96 $\pm$ 2.33 & 75.84 $\pm$ 1.61 & 85.89 $\pm$ 4.10 & 92.30 $\pm$ 2.18 \\
        \textbf{\oursacro{}} & 87.90 $\pm$ 0.16 & 19.99 $\pm$ 0.46 & 82.29 $\pm$ 1.11 & 77.83 $\pm$ 1.22 & \textbf{76.01 $\pm$ 1.18} & \textbf{76.87 $\pm$ 3.38} & \textbf{*93.67 $\pm$ 3.03} & \textbf{94.90 $\pm$ 3.09} \\
        \bottomrule
    \end{tabular}
    }
    \label{tab:cifar10}
            \vspace{-4mm}
\end{table*}

\begin{table*}[ht]
    \centering
        \caption{Results on the Bike Sharing Dataset with Normal $\DNormal$ and Poison $\DPoi$ target distributions. Best scores among all single-pass models are in bold. Best scores among all models are starred.}
    \label{tab:bike-sharing}
    \vspace{-3mm}
    \resizebox{0.9\textwidth}{!}{
    \begin{tabular}{lcccccc}
        \toprule
        & \textbf{RMSE} & \textbf{Calibration} & \textbf{Winter Epist.} & \textbf{Spring Epist.} & \textbf{Autumn Epist.} & \textbf{OODom Epist.} \\
        \midrule
        \textbf{Dropout-$\DNormal$} & 70.20 $\pm$ 1.30 & 6.05 $\pm$ 0.77 & 15.26 $\pm$ 0.51 & 13.66 $\pm$ 0.16 & 15.11 $\pm$ 0.46 & 99.99 $\pm$ 0.01 \\
        \textbf{Ensemble-$\DNormal$} & *48.02 $\pm$ 2.78 & 5.88 $\pm$ 1.00 & 42.46 $\pm$ 2.29 & 21.28 $\pm$ 0.38 & 21.97 $\pm$ 0.58 & *100.00 $\pm$ 0.00 \\
        \midrule
        \textbf{EvReg-$\DNormal$} & \textbf{49.58 $\pm$ 1.51} & 3.77 $\pm$ 0.81 & 17.19 $\pm$ 0.76 & 15.54 $\pm$ 0.65 & 14.75 $\pm$ 0.29 & 34.99 $\pm$ 17.02 \\
        \textbf{\oursacro{}-$\DNormal$} & 49.85 $\pm$ 1.38 & \textbf{*1.95 $\pm$ 0.34} & \textbf{*55.04 $\pm$ 6.81} & \textbf{*23.25 $\pm$ 1.20} & \textbf{*27.78 $\pm$ 2.47} & \textbf{*100.00 $\pm$ 0.00} \\
        \midrule
        \midrule
        \textbf{Dropout-$\DPoi$} & 66.57 $\pm$ 4.61 & 55.00 $\pm$ 0.22 & 16.02 $\pm$ 0.48 & 13.48 $\pm$ 0.38 & 18.09 $\pm$ 0.82 & \textbf{*100.00 $\pm$ 0.00} \\
        \textbf{Ensemble-$\DPoi$} & \textbf{*48.22 $\pm$ 2.06} & 55.31 $\pm$ 0.21 & 83.88 $\pm$ 1.22 & 34.21 $\pm$ 1.81 & 41.29 $\pm$ 3.23 & \textbf{*100.00 $\pm$ 0.00} \\
        \midrule
        \textbf{\oursacro{}-$\DPoi$} & 51.79 $\pm$ 0.78 & \textbf{*31.04 $\pm$ 1.81} & \textbf{*85.15 $\pm$ 3.61} & \textbf{*37.03 $\pm$ 2.35} & \textbf{*42.73 $\pm$ 4.38} & \textbf{*100.00 $\pm$ 0.00} \\
        \bottomrule
    \end{tabular}
    }
            \vspace{-3mm}
\end{table*}

\begin{table*}[t]
    \centering
   \caption{Regression results on models trained on different UCI datasets with Normal target distribution. The upper half displays models trained on Kin8nm, the lower half shows models trained on Concrete Compressive Strength.}
    \label{tab:uci}
    \vspace{-3mm}
    \resizebox{1.\textwidth}{!}{
    \begin{tabular}{lcccccccc}
        \toprule
        & \textbf{RMSE} & \textbf{Calibration} & \textbf{Energy Alea.} & \textbf{Energy Epist.} & \textbf{Concrete Alea.} & \textbf{Concrete Epist.} & \textbf{Kin8nm Alea.} & \textbf{Kin8nm Epist.} \\
        \midrule
        \textbf{Dropout} & 0.09 $\pm$ 0.00 & 3.13 $\pm$ 0.43 & 90.18 $\pm$ 6.00 & 99.94 $\pm$ 0.06 & *100.00 $\pm$ 0.00 & *100.00 $\pm$ 0.00 & \multicolumn{2}{c}{\multirow{3}{*}{\textbf{in-distribution}}} \\
        \textbf{Ensemble} & *0.07 $\pm$ 0.00 & 2.69 $\pm$ 0.49 & *100.00 $\pm$ 0.00 & *100.00 $\pm$ 0.00 & *100.00 $\pm$ 0.00 & *100.00 $\pm$ 0.00 & & \\
        \textbf{NatPE} & 0.08 $\pm$ 0.00 & 5.49 $\pm$ 0.30 & *100.00 $\pm$ 0.00 & *100.00 $\pm$ 0.00 & *100.00 $\pm$ 0.00 & *100.00 $\pm$ 0.00 & & \\
        \midrule
        \textbf{EvReg} & 0.09 $\pm$ 0.00 & 3.74 $\pm$ 0.53 & 88.06 $\pm$ 11.94 & 88.06 $\pm$ 11.94 & \textbf{*100.00 $\pm$ 0.00} & 86.84 $\pm$ 13.16 & \multicolumn{2}{c}{\multirow{2}{*}{\textbf{in-distribution}}} \\
        \textbf{\oursacro{}} & \textbf{0.08 $\pm$ 0.00} & \textbf{*2.04 $\pm$ 0.45} & \textbf{*100.00 $\pm$ 0.00} & \textbf{*100.00 $\pm$ 0.00} & \textbf{*100.00 $\pm$ 0.00} & \textbf{*100.00 $\pm$ 0.00} & & \\
        \midrule
        \midrule
        \textbf{Dropout} & 5.67 $\pm$ 0.07 & *3.03 $\pm$ 0.40 & 9.33 $\pm$ 0.36 & 93.53 $\pm$ 2.41 & \multicolumn{2}{c}{\multirow{3}{*}{\textbf{in-distribution}}} & 1.09 $\pm$ 0.13 & 64.30 $\pm$ 7.14 \\
        \textbf{Ensemble} & 5.69 $\pm$ 0.20 & 3.81 $\pm$ 0.67 & 54.19 $\pm$ 18.93 & *100.00 $\pm$ 0.00 & & & 72.57 $\pm$ 19.32 & *100.00 $\pm$ 0.00 \\
        \textbf{NatPE} & *4.78 $\pm$ 0.20 & 5.58 $\pm$ 1.27 & *100.00 $\pm$ 0.00 & *100.00 $\pm$ 0.00 & & & *100.00 $\pm$ 0.00 & *100.00 $\pm$ 0.00 \\
        \midrule
        \textbf{EvReg} & 6.04 $\pm$ 0.18 & 7.36 $\pm$ 1.04 & 8.93 $\pm$ 0.02 & 51.39 $\pm$ 18.56 & \multicolumn{2}{c}{\multirow{2}{*}{\textbf{in-distribution}}} & 0.93 $\pm$ 0.00 & 34.44 $\pm$ 20.95 \\
        \textbf{\oursacro{}} & \textbf{5.83 $\pm$ 0.23} & \textbf{5.41 $\pm$ 1.33} & \textbf{*100.00 $\pm$ 0.00} & \textbf{*100.00 $\pm$ 0.00} & & & \textbf{*100.00 $\pm$ 0.00} & \textbf{*100.00 $\pm$ 0.00} \\
        \bottomrule
    \end{tabular}}
    \vspace{-4mm}
\end{table*}

\begin{table*}[t]
    \centering
     \caption{Regression results on NYU Depth v2 with Normal target distribution. RMSE is in cm. OOD scores on LSUN are reported on the held-out classes `classrooms' (left) and `churches' (right).}
    \label{tab:nyu}
    \vspace{-3mm}
    \resizebox{1. \textwidth}{!}{
    \begin{tabular}{lcccccccccc}
        \toprule
        & \textbf{RMSE} & \textbf{Calibration} & \textbf{LSUN Alea.} & \textbf{LSUN Epist.} & \textbf{KITTI Alea.} & \textbf{KITTI Epist.} & \textbf{OODom Alea.} & \textbf{OODom Epist.} \\
        \midrule
        \textbf{Dropout} & 46.95 & 4.03 & *95.29 / 97.74 & 83.89 / 83.22 & 98.07 & 84.90 & 74.40 & *100.00 \\
        \midrule
        \textbf{EvReg} & \textbf{*28.88} & \textbf{*1.05} & 58.70 / 56.71 & 70.19 / 64.02 & 56.60 & 62.67 & 75.43 & 56.39 \\
        \textbf{\oursacro{}} & 29.72 & 1.14 & \textbf{94.13} / \textbf{*98.67} & \textbf{*89.08} / \textbf{*90.56} & \textbf{*98.93} & \textbf{*93.15} & \textbf{*100.00} & \textbf{*100.00} \\
        \bottomrule
    \end{tabular}}
    \vspace{-7mm}
\end{table*}

\subsection{Setup}

\begin{wrapfigure}{r}{0.5\textwidth}
\vspace{-4mm}
	\centering
    \includegraphics[width=\linewidth]{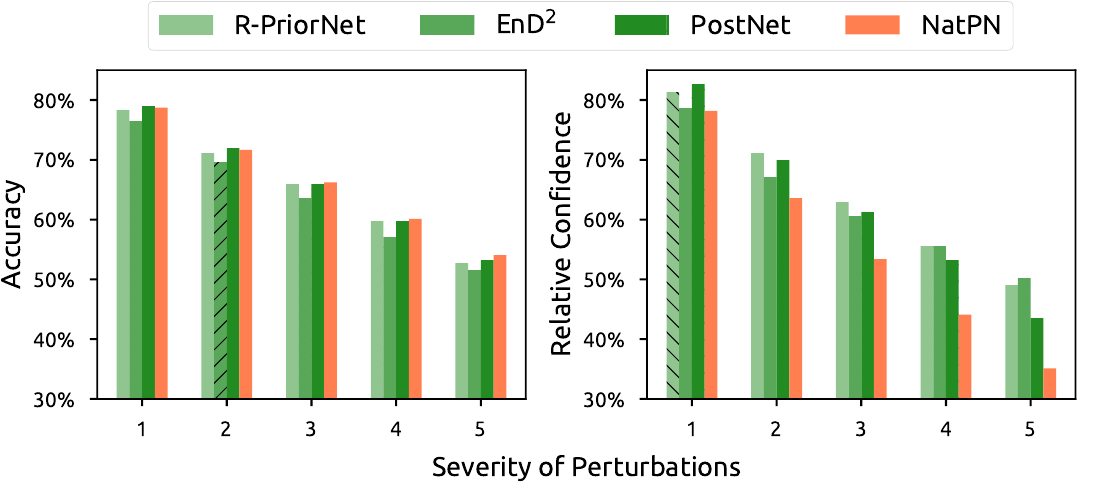}
    \caption{Averaged accuracy and confidence under 15 dataset shifts on CIFAR-10 \citep{benchmarking-corruptions}. 
    On more severe perturbations (i.e. data further away from data distribution), \oursacro{} maintains a competitive accuracy while assigning higher epistemic uncertainty as desired. Baselines provide a slower relative confidence decay.
    }
    \label{fig:data-corruption}
\vspace{-4mm}
\end{wrapfigure}

In our experiments, we change the encoder architecture of \oursacro{} to match the dataset needs. We perform a grid search over normalizing flows types (i.e. radial flows \citep{radialflow} and MAF \citep{made,maf}) and latent dimensions. We show further experiments on architecture, latent dimension, normalizing flow choices and certainty budget choice in the appendix. Furthermore, we use approximations of the log-Gamma $\log\Gamma(x)$ and the Digamma $\psi(x)$ functions for large input values to avoid unstable floating computations (see app.). As prior parameters, we set ${\bm{\chi}^\text{prior}=\bm{1}_{\nclass}/C}, {\evidence^\text{prior}=\nclass}$ for classification, ${\priorparam^\text{prior}=(0, 100)^T}, \evidence^\text{prior}=1$ for regression and $\chi^\text{prior}=1, \evidence^\text{prior}=1$ for count prediction enforcing high entropy for prior distributions.

\textbf{Baselines.} We focus on recent models parametrizing prior distributions over the target distribution. For classification, we compare \oursacro{} to Reverse KL divergence Prior Networks (R-PriorNet) \citep{reverse-kl}, Ensemble Distribution Distillation (EnD$^2$) \citep{distribution-distillation} and Posterior Networks (PostNet) \citep{postnet}. Note that Prior Networks require OOD training data --- we use an auxiliary dataset when available and Gaussian noise otherwise. For regression, we compare to Evidential Regression (EvReg) \citep{evidential-regression}. Beyond baselines parametrizing conjugate prior distributions, we also compare to dropout (Dropout) \citep{dropout} and ensemble (Ensemble) \citep{ensembles} models for all tasks. These sampling baselines require multiple forward passes for uncertainty estimation. 
Further details are given in the appendix.

\textbf{Datasets.} We split all datasets into train, validation and test sets. For classification, we use one tabular dataset (Sensorless Drive \citep{uci}) and three image datasets (MNIST \citep{mnist}, FMNIST \citep{fashion-mnist} and CIFAR-10 \citep{cifar10}). For count prediction, we use the Bike Sharing dataset \citep{bike-sharing} to predict the number of bike rentals within an hour. For regression, we also use the Bike Sharing dataset where the target is viewed as continuous, real-world UCI datasets used in \citet{evidential-regression, probabilistic-backprop-scalable-bnn} and the image NYU Depth v2 dataset \citep{nyu-depth} where the goal is to predict the image depth per pixel. All inputs are rescaled with zero mean and unit variance. We also scale the output target for regression. Further details are given in the appendix.

\textbf{Metrics.} Beyond the target prediction error, we evaluate model uncertainty estimation using calibration and OOD detection scores. Furthermore, we report the inference speed. Further results including histograms with uncertainty estimates or latent space visualization are presented in appendix. \textit{\underline{Target error:}} We use the accuracy (Accuracy) for classification and the Root Mean Squared Error (RMSE) for regression and count prediction. \textit{\underline{Calibration:}} For classification, we use the Brier score (Brier) \citep{scoring-rules}. For regression and count prediction, we use the quantile calibration score (Calibration) \citep{accurate-uncertainties-deep-learning-regression}. \textit{\underline{OOD detection:}} We evaluate how the uncertainty scores enable to detect OOD data using the area under the precision-recall curve (AUC-PR) the area under the receiver operating characteristic curve (AUC-ROC) in the appendix. We use two different uncertainty measures: the negative entropy of the predicted target distribution accounting for the aleatoric uncertainty (Alea. OOD) and the predicted evidence or variance of the predicted mean (Epist. OOD). Similarly to \citep{dataset-shift, postnet}, we use four different types of clear OOD samples: Unseen datasets (KMNIST \cite{kmnist}, Fashion-MNIST \citep{fashion-mnist}, SVHN \citep{svhn}, LSUN \citep{lsun}, CelebA \citep{celeba}, KITTI \citep{kitti}), left-out data (classes 9/10 for Sensorless Drive, winter/spring/autumn seasons for Bike Sharing), out-of-domain data not normalized in $[0, 1]$ (OODom) and dataset shifts (corrupted CIFAR-10 \citep{benchmarking-corruptions}). Further details are given in the appendix.

\subsection{Results}

\textbf{Classification.} We show results for the tabular dataset Sensorless Drive with unbounded input domain in Tab.~\ref{tab:sensorless-drive}, and the image datasets MNIST, FMNIST and CIFAR-10 with bounded input domain in Tab.~\ref{tab:cifar10} and appendix. Overall, for classification \oursacro{} performs on par with best single-pass baselines (i.e. $12/30$ top-1 scores, $25/30$ top-2 scores) and NatPE performs the best among multiple-pass models (i.e. $28/30$ top-1 scores). A single \oursacro{} achieves accuracy and calibration performance on par with the most calibrated baselines, namely PostNet and R-PriorNet which requires one normalizing flow per class or training OOD data. Further, NatPE consistently improves accuracy and calibration performance of a single \oursacro{} which underlines the benefit of aggregating multiple models predictions for accuracy and calibration \citep{ensembles}. Without requiring OOD data during training, both \oursacro{} and NatPE achieve excellent OOD detection scores w.r.t. all OOD types. This strongly suggests that \oursacro{} does \emph{not} suffer from the flaws in \citet{anomaly-detection,deep-generative,typicality_OOD_generative}. In particular, \oursacro{} and NatPE achieve almost perfect OODom scores contrary to all other baselines except PostNet. This observation aligns well with the theoretical guarantee of \oursacro{} far from training data (see Thm.~\ref{thm:oodom-guarantee}) which also applies to each NatPE member. The similar performance of \oursacro{} and PostNet for classification is intuitively explained by their akin design: both models perform density estimation on a low-dimensional latent space. Similarly to \cite{postnet}, we compute the average confidence \smash{$\text{avg-conf}=\frac{1}{N}\sum_i^{N}\evidence\dataix=\frac{1}{N}\sum_i^{N}\alpha_0\dataix$} and then compare the average confidence change. The average confidence change is computed by taking the ratio of the average confidence of $N$ corrupted data at severity $s$ and the average confidence of $N$ clean data (i.e. the corrupted data at severity $0$) i.e. \smash{$\frac{\text{avg-conf}_s}{ \text{avg-conf}_0}$} for $s \in [\![ 1, 5 ]\!]$. \oursacro{} maintains a competitive accuracy (Fig.~\ref{fig:data-corruption}, left) while assigning higher epistemic uncertainty as desired (Fig.~\ref{fig:data-corruption}, right). Baselines provide a slower relative confidence decay.

\textbf{Regression \& Count Prediction.} We show the results for the Bike Sharing, the tabular UCI datasets and the image NYU Depth v2 datasets in Tab.~\ref{tab:bike-sharing}, \ref{tab:uci}, \ref{tab:nyu}. For the large NYU dataset, we compare against all baselines which require only a single model to be trained. Overall, \oursacro{} outperforms other single-pass models for $23/26$ scores for regression, thus significantly improving calibration and OOD detection scores. Further, \oursacro{} shows a strong improvement for calibration and OOD detection for count prediction with Poisson distributions among all models. Interestingly, all the models are less calibrated on the Bike Sharing dataset using a target Poisson distribution rather than a target Normal distribution. This suggests a mismatch of the Poisson distribution for this particular task. The almost perfect OODom scores of \oursacro{} validate again Thm.~\ref{thm:oodom-guarantee} which also holds for regression.

\begin{wraptable}{r}{0.45\textwidth}
\vspace{-0mm}
	\centering
    \caption{Batched Inference Time (in ms), NVIDIA GTX 1080 Ti}
\label{tab:inference-time}
\vspace{-3mm}
	\resizebox{0.43\textwidth}{!}{
    \begin{tabular}{lcc}
        \toprule
        & \textbf{CIFAR-10} & \textbf{NYU Depth v2} \\
        & \textbf{(batch size 4,096)} & \textbf{(batch size 4)} \\
        \midrule
        \textbf{Dropout} & 407.91 $\pm$ 5.65 & 650.96 $\pm$ 0.22 \\
        \textbf{Ensemble} & 361.61 $\pm$ 5.41 & 649.78 $\pm$ 0.18 \\
        \textbf{R-PriorNet} & 61.83 $\pm$ 2.57 & $-$ \\
        \textbf{EnD$^2$} & 61.83 $\pm$ 2.57 & $-$ \\
        \textbf{PostNet} & 88.56 $\pm$ 0.06 & $-$ \\
        \textbf{EvReg} & $-$ & 129.88 $\pm$ 0.75 \\
        \textbf{\oursacro{}} & 75.64 $\pm$ 0.04 & 137.13 $\pm$ 0.18 \\
        \textbf{NatPE} & 370.17 $\pm$ 0.09 & 676.74 $\pm$ 0.38 \\
        \bottomrule
    \end{tabular}
}
\vspace{-3mm}
\end{wraptable}

\textbf{Inference Speed.} We show the average inference time per batch for all models on CIFAR-10 for classification and the NYU Depth v2 dataset for regression in Tab.~\ref{tab:inference-time}. \oursacro{} shows a significant improvement over Dropout and Ensemble which are both approximately five times slower since they require five forward passes for prediction. Notably, the \oursacro{} speedup does not deteriorate target error and uncertainty scores. \oursacro{} is slightly slower than R-PriorNet, EnD$^2$ and EvReg as they do not evaluate an additional normalizing flow. However, \oursacro{} -- which uses a single normalizing flow -- is faster than PostNet -- which scales linearly w.r.t. the number of classes since it evaluates one normalizing flow per class. Lastly, \oursacro{} is the only single-pass model that can be used for \emph{both} tasks.

\section{Conclusion}

We introduce \ours{} which belongs to the family of models parametrizing conjugate prior distributions.
\oursacro{} is capable of efficient uncertainty estimation for \emph{any} task where the target distribution is in the exponential family (incl. classification, regression and count prediction). \oursacro{} relies on the Bayes formula and the general form of exponential family distributions to perform a closed-form input-dependent posterior update over the target distribution. Further, an ensemble of \oursacro{s} has a principled Bayesian combination interpretation. Theoretically, \oursacro{} guarantees high uncertainty far from training data. Experimentally, \oursacro{} achieves fast, versatile and high-quality uncertainty predictions with strong performance in calibration and OOD detection.

\clearpage
\section{Ethics Statement} \label{sec:broader-impact}

Accurate uncertainty estimation aims at improving trust in safety-critical domains subject to automation, and in a maintenance context where the underlying data distribution might slowly shift over time. In this regard, \oursacro{} significantly improves the applicability of uncertainty estimation across a wide range of prediction domains (e.g. classification, regression, count prediction, etc) while maintaining a fast inference time. This could be particularly beneficial in industrial applications with time pressure and potential critical consequences (e.g. finance, medicine, policy decision making, etc).

Nonetheless, while \oursacro{} achieves high-quality uncertainty estimation, there is always a risk that \oursacro{} does not fully capture the real-world complexity e.g. for OOD data close to ID data. Furthermore, we raise awareness about two other risks of excessive trust related to the \emph{Dunning-Kruger effect} \citep{dunning-kruger}: human excessive trust in Machine Learning model capacity, and human excessive trust in its own interpretation capacity. Therefore, we encourage practitioners to proactively confront the model design and its uncertainty estimates to desired behaviors in real-world use cases.
\section{Reproducibility Statement}

We provide all datasets and the model code at the following project page \footnote{\url{https://www.daml.in.tum.de/natpn}}. In App.~\ref{sec:dataset}, we give a detailed description for each dataset used in this paper. This description includes the task description, the dataset size, the input/output dimensions, the input/output preprossessing and the train/validation/test splits used in the experiments. In App.~\ref{sec:model}, we give a detailed description for the architecture and grid search performed for each model used in this paper. Specifically, we provide an hyper-parameter study for \oursacro{} in App.~\ref{sec:ablation-study}, In App.~\ref{sec:experiment}, we give a detailed description of the metrics used in the experiments. Finally, we provide a complete proof of Thm.~\ref{thm:oodom-guarantee}, a detailed description of the Bayesian loss and relevant formulae for exponential family distributions in App.~\ref{sec:proofs}, App.~\ref{sec:loss} and App.~\ref{sec:formulae}. 

\bibliographystyle{iclr2022_conference}
\bibliography{bibliography}

\newpage



\appendix
\newpage
\onecolumn
\appendix

\section{Theorem \ref{thm:oodom-guarantee}}
\label{sec:proofs}

We prove theorem \ref{thm:oodom-guarantee} based on Lemmas \ref{lem:relu-regions} and \ref{lem:limit-not-density}. Lemma \ref{lem:relu-regions} states that the input space can be divided in a finite number of linear regions \citep{understanding-nn-relu}. Lemma \ref{lem:limit-not-density} states that a probability density with bounded derivatives has to converge to $0$ at infinity \citep{limit-existence-infinity}. We additionally recall Lemma \ref{lem:limit-density} which provide a similar convergence guarantee without the bounded derivative constraint \citep{integrable-infinity}. Finally, Lemma \ref{lem:limit-gmm-radial} particularly shows that the guarantee of theorem 1 can be obtained with Gaussian Mixtures which are commonly used for density estimation or radial flows which are used in the experiments.

\begin{lemma}
\label{lem:relu-regions}
\citep{understanding-nn-relu} Let $\{Q_l\}_l^{R}$ be the set of linear regions associated to the piecewise ReLU network $f_{\phi}(\x)$. For any $\x \in \real^\inputdim$, there exists $\delta^* \in \real^{+}$ and $l^*\in {1,..., R}$ such that $\delta \x \in Q_{l^*}$ for all $\delta > \delta^*$.
\end{lemma}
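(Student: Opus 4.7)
The plan is to exploit the polyhedral structure of ReLU linear regions. A piecewise ReLU network $f_\phi$ partitions $\real^\inputdim$ into finitely many convex polyhedra $Q_1,\ldots,Q_R$, one per activation pattern of the hidden neurons, and the boundaries of all these polyhedra are contained in a \emph{finite} collection of affine hyperplanes $\mathcal{H} = \{H_k\}_{k=1}^K$ with $H_k = \{y : w_k^\top y + b_k = 0\}$. I would invoke this decomposition as standard (citing \cite{understanding-nn-relu}); the underlying bookkeeping is that there are only finitely many possible activation patterns, and within each one every neuron's pre-activation is a single affine function of $\x$, so the union of all such pre-activation zero-sets across all patterns gives the finite $\mathcal{H}$.

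Given this, fix $\x \in \real^\inputdim$ and study when the ray $\rho(\delta) = \delta\x$, $\delta > 0$, meets an $H_k$. The equation $(w_k^\top \x)\delta + b_k = 0$ has either no positive solution, a unique solution $\delta_k = -b_k / (w_k^\top \x)$, or all positive $\delta$ as solutions (the last occurring only when $w_k^\top \x = 0 = b_k$, i.e.\ the whole ray lies inside $H_k$). Provided $\x$ avoids the Lebesgue-null exceptional set $\mathcal{E} = \bigcup_{k : b_k = 0}\{y : w_k^\top y = 0\}$, the degenerate third case never arises, so the crossing set $S = \{\delta > 0 : \delta\x \in \bigcup_k H_k\}$ is finite. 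Set $\delta^* = \max(S \cup \{0\})$. For $\delta > \delta^*$ no sign $\mathrm{sign}(w_k^\top \delta\x + b_k)$ can change, since a sign flip requires a crossing of some $H_k$. Because a point's linear region is uniquely determined by its vector of pre-activation signs, a single $l^* \in \{1,\ldots,R\}$ satisfies $\delta\x \in Q_{l^*}$ for all $\delta > \delta^*$, which is the claim.

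The main obstacle is the finiteness assertion in the first step for \emph{deep} networks: the pre-activation of a neuron in layer $\ell$ is only piecewise affine globally, not affine, because its input depends on the activation pattern of earlier layers. The remedy is to enumerate over all combinations of earlier-layer activation patterns; there are finitely many such combinations, and for each the relevant pre-activation restricted to that combined region is affine in $\x$, which yields the finite enumeration of hyperplanes entering $\mathcal{H}$. Strictly, the lemma is phrased for \emph{every} $\x$, whereas the argument above handles $\x \notin \mathcal{E}$; for the remaining measure-zero set the tail of the ray lies in a boundary shared between several closed regions, so one can still choose some $l^*$ whose closure contains the ray tail (treating each $Q_l$ as a closed polyhedron). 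This small fix is benign and is absorbed naturally by the ``almost any $\x$'' phrasing of Theorem~\ref{thm:oodom-guarantee}.
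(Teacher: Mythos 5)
Your argument is correct, but note that the paper itself does not prove this lemma at all: it is imported verbatim from the cited reference \cite{understanding-nn-relu} (it is Lemma~3.1 of that work), so there is no in-paper proof to match. Your self-contained argument via the hyperplane arrangement is valid, including the honest treatment of the deep-network subtlety (the zero set of a deep neuron's pre-activation is only a ``bent'' hyperplane, and you correctly restore finiteness by enumerating over the finitely many activation patterns of the earlier layers and taking the affine extensions on each cell). The standard proof in the cited reference is shorter and avoids this bookkeeping entirely: the activation-pattern regions $Q_l$ are finitely many \emph{convex} polyhedra covering $\real^\inputdim$, so by pigeonhole some $Q_{l^*}$ contains an unbounded increasing sequence of points $\delta_k \x$ on the ray, and convexity of $Q_{l^*}$ then forces the entire tail $\{\delta\x : \delta \ge \delta_1\}$ to lie in $Q_{l^*}$. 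That route buys brevity and sidesteps the degenerate-ray discussion; your route buys an explicit $\delta^*$ (the largest positive crossing parameter) and makes transparent exactly which $\x$ could be problematic. One small remark: your exclusion of the null set $\mathcal{E}$ is not actually needed even for your own argument --- when the ray lies entirely inside some $H_k$ the corresponding pre-activation sign is constantly zero along the ray and hence never triggers a region change, so the crossing set restricted to the remaining hyperplanes is still finite and the conclusion holds for every $\x$, matching the ``for any $\x$'' phrasing of the lemma; your closed-region fallback handles this anyway, so the gap is cosmetic.
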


\begin{lemma}
\label{lem:limit-not-density}
\citep{limit-existence-infinity} Let $p \in L^1(0, \infty)$ with bounded first derivative $p'$, then $p(\delta)\underset{\delta \rightarrow \infty}{\rightarrow} 0$. This convergence is stronger than in Lem.~\ref{lem:limit-density} as the limit is not in density but with standard limit notation.
\end{lemma}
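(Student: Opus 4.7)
The plan is to prove this by contrapositive: if $p(\delta)$ does not converge to $0$ at infinity, then $p$ cannot be integrable on $(0,\infty)$. This is a classical Barb\u{a}lat-style argument that exploits Lipschitz continuity (which follows from the bounded derivative) to turn a single large value of $|p|$ into a bump of definite area.

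First, I would suppose for contradiction that $p(\delta) \not\to 0$ as $\delta \to \infty$. Then there exists some $\varepsilon > 0$ and a sequence $\delta_n \to \infty$ with $|p(\delta_n)| \geq \varepsilon$ for every $n$. Let $M := \sup_{t > 0} |p'(t)| < \infty$ be the bound on the derivative; note $M > 0$ may be assumed without loss, since if $M = 0$ then $p$ is constant on $(0,\infty)$ and the only integrable constant is $0$, contradicting $|p(\delta_n)| \geq \varepsilon$.

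Next, I would use the Lipschitz bound to propagate the pointwise lower bound into an interval bound. By the mean value theorem, for any $t$ with $|t - \delta_n| \leq \varepsilon/(2M)$ we have $|p(t) - p(\delta_n)| \leq M \cdot \varepsilon/(2M) = \varepsilon/2$, hence $|p(t)| \geq \varepsilon/2$. Thus on each interval $I_n := [\delta_n - \varepsilon/(2M),\, \delta_n + \varepsilon/(2M)]$ of length $\varepsilon/M$, the integral $\int_{I_n} |p|\,dt \geq (\varepsilon/M)(\varepsilon/2) = \varepsilon^2/(2M)$.

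The only remaining step is to extract a subsequence along which the intervals $I_n$ are pairwise disjoint. Since $\delta_n \to \infty$, I can pick inductively $n_1 < n_2 < \cdots$ with $\delta_{n_{k+1}} > \delta_{n_k} + \varepsilon/M$, which ensures the intervals $I_{n_k}$ are disjoint subsets of $(0,\infty)$. Summing,
\begin{equation*}
\int_0^\infty |p(t)|\,dt \;\geq\; \sum_{k=1}^\infty \int_{I_{n_k}} |p(t)|\,dt \;\geq\; \sum_{k=1}^\infty \frac{\varepsilon^2}{2M} \;=\; +\infty,
\end{equation*}
contradicting $p \in L^1(0,\infty)$. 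The main obstacle, if any, is simply keeping the quantifiers and the subsequence construction clean; no deep machinery is required, and the argument is essentially Barb\u{a}lat's lemma specialized to the half-line.
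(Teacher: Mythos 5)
Your proof is correct: the contrapositive/Barb\u{a}lat-style argument (use the Lipschitz bound from $\sup|p'|\le M$ to inflate each point where $|p|\ge\varepsilon$ into an interval of length $\varepsilon/M$ on which $|p|\ge\varepsilon/2$, then extract disjoint such intervals to contradict integrability) is the standard and complete way to establish this lemma, and all the quantifier bookkeeping, the $M=0$ edge case, and the disjointness of the subsequence of intervals are handled properly. Note that the paper itself gives no proof of this statement --- it is imported verbatim from the cited reference \cite{limit-existence-infinity} and used as a black box in the proof of Theorem~\ref{thm:oodom-guarantee} --- so there is no in-paper argument to compare against; your write-up simply supplies the classical proof that the citation points to.
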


\begin{lemma}
\label{lem:limit-density}
\citep{integrable-infinity} Let $p \in L^1(0, \infty)$, then $p(\delta)\underset{\delta \rightarrow \infty}{\rightarrow} 0$ in density. This means that the sets where $p(t)$ is far from its $0$ limit (i.e. $\{ t \geq 0: |p(t)| \geq \epsilon \}$ with $\epsilon > 0$) has zero density.
\end{lemma}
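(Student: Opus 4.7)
The plan is to prove the lemma directly via Markov's inequality applied to the nonnegative integrable function $|p|$, which in fact delivers a conclusion strictly stronger than density zero: every super-level set has finite Lebesgue measure, from which density zero follows immediately.

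First I would fix an arbitrary threshold $\epsilon > 0$ and consider the super-level set $E_\epsilon := \{t \in [0, \infty) : |p(t)| \geq \epsilon\}$. The key one-line estimate is the chain $\epsilon \cdot \mu(E_\epsilon) \leq \int_{E_\epsilon} |p(t)|\, dt \leq \int_0^\infty |p(t)|\, dt = \|p\|_{L^1} < \infty$, where the first inequality uses that $|p| \geq \epsilon$ pointwise on $E_\epsilon$ and the second uses monotonicity of the integral. Rearranging gives $\mu(E_\epsilon) \leq \epsilon^{-1}\|p\|_{L^1}$, a finite bound independent of any truncation parameter, so $E_\epsilon$ has finite total Lebesgue measure on $[0, \infty)$.

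Next I would translate this finite-measure fact into the advertised density statement. The natural density of $E_\epsilon$ in $[0, \infty)$ is $\lim_{T \to \infty} \mu(E_\epsilon \cap [0, T])/T$, and because the numerator is uniformly bounded above by the constant $\mu(E_\epsilon) < \infty$, dividing by $T$ and letting $T \to \infty$ yields $0$. Since $\epsilon > 0$ was arbitrary, the collection of points where $|p|$ exceeds any positive threshold has density zero in $[0, \infty)$, which is precisely the meaning of $p(\delta) \to 0$ in density.

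There is essentially no technical obstacle here; the argument is a textbook consequence of Markov's inequality for $|p|$. The only care required is notational: the lemma statement contains a minor typo in the set-builder expression, so I would pin down "convergence in density" at the start in terms of the natural density on $[0, \infty)$, and explicitly contrast it with the stronger pointwise limit of Lemma \ref{lem:limit-not-density}, which additionally requires bounded $p'$ (a hypothesis we do not have here and which is precisely why the weaker density-based conclusion is stated).
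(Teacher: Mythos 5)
Your proof is correct. Note that the paper itself does not prove this lemma at all --- it is stated as an imported result with a citation to the literature --- so you are supplying a self-contained argument where the authors rely on an external reference. Your Markov/Chebyshev estimate $\mu(E_\epsilon) \leq \epsilon^{-1}\|p\|_{L^1} < \infty$ is valid and in fact yields something stronger than density zero (finite total measure of every super-level set), and the passage to natural density by dividing the uniformly bounded numerator $\mu(E_\epsilon \cap [0,T])$ by $T$ is immediate. Since the lemma itself \emph{defines} convergence in density as ``each set $\{t : |p(t)| \geq \epsilon\}$ has zero density,'' your argument establishes exactly the stated claim; the only point worth flagging is that if one instead wanted the classical formulation of density convergence --- a single exceptional set $E$ of density zero outside of which $p(t) \to 0$ --- one would need an additional diagonalization over $\epsilon_n = 1/n$, which your finite-measure bound makes routine but which you do not carry out. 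Your closing remark correctly identifies the role of the bounded-derivative hypothesis in Lemma~\ref{lem:limit-not-density} as what upgrades the conclusion from density convergence to a genuine pointwise limit.
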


\begin{lemma}
\label{lem:limit-gmm-radial}
Let $\prob(\z \condition \bm{\omega})$ be parametrized with a Gaussian Mixture Model (GMM) or a radial flow, then $\prob(\z \condition \bm{\omega})\underset{||\z|| \rightarrow \infty}{\rightarrow} 0$.
\end{lemma}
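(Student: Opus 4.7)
The plan is to treat the two parametrizations separately, since the Gaussian mixture case is elementary while the radial flow case requires controlling a composition of nonlinear transformations.

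For the Gaussian mixture case, I would write $\prob(\z \condition \bm{\omega}) = \sum_{k=1}^{K} w_k \mathcal{N}(\z; \mu_k, \Sigma_k)$ with a finite number $K$ of components, positive weights $w_k$ summing to one, and positive definite covariances $\Sigma_k$. Each Gaussian term is bounded above by a multiple of $\exp(-\tfrac{1}{2}\lambda_{\min}(\Sigma_k^{-1})(||\z|| - ||\mu_k||)^2)$ for $||\z|| \geq ||\mu_k||$, using the Rayleigh quotient bound on the quadratic form together with the reverse triangle inequality. Each component therefore decays to zero as $||\z|| \to \infty$, and since $K$ is finite, so does the sum.

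For the radial flow case, recall that the density is the pushforward of a Gaussian base density $p_0$ through a composition $f = f_T \circ \cdots \circ f_1$ of radial transformations $f_t(\z) = \z + \beta_t \, h(\alpha_t, r_t)\,(\z - \z_{0,t})$ with $r_t = ||\z - \z_{0,t}||$ and $h(\alpha, r) = 1/(\alpha + r)$, so that $\prob(\z \condition \bm{\omega}) = p_0(f^{-1}(\z)) \, |\det J_{f^{-1}}(\z)|$. The key observation is that each $f_t$ is asymptotically the identity: $||f_t(\z) - \z|| \to 0$ as $r_t \to \infty$, and the Jacobian is a rank-one perturbation of the identity whose determinant equals $(1 + \beta_t h)^{H-1}(1 + \beta_t h + \beta_t r_t h')$, which tends to $1$ as $r_t \to \infty$ because both $h(\alpha_t, r_t) \to 0$ and $r_t h'(\alpha_t, r_t) \to 0$.

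From this I would derive, by induction over the $T$ layers, that (i) $||f^{-1}(\z)|| \to \infty$ as $||\z|| \to \infty$, because a continuous bijection that is asymptotically the identity has a proper inverse (the preimage of any bounded set remains bounded), and (ii) $|\det J_{f^{-1}}(\z)|$ is uniformly bounded outside a compact set. Combining (i) with the decay of the Gaussian base density $p_0(\z_0) \to 0$ as $||\z_0|| \to \infty$ (a special case of the GMM argument with one component) and with (ii) then yields $\prob(\z \condition \bm{\omega}) \to 0$. The main obstacle I expect is the clean inductive proof of (i): making precise that each $f_t^{-1}$ is defined and proper on the parameter regime that guarantees invertibility of the radial flow, and that properness is preserved under composition, so that pulling $\z$ back through all $T$ transformations really does send the norm to infinity in the base coordinate.
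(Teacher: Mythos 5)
Your proposal follows essentially the same route as the paper: the GMM case is handled componentwise, and for the radial flow the density is factored into the base density evaluated at the transformed point times the Jacobian determinant, with $\lVert\cdot\rVert\to\infty$ propagated through the transformation and the determinant shown to tend to $1$; the paper works directly with the normalizing-direction map $g(\z)=\z+\beta h(\alpha,r)(\z-\z_0)$, which lets it skip your properness argument for $f^{-1}$, and it likewise defers stacked flows to an induction. One small correction: your claim that each radial layer is ``asymptotically the identity'' in the sense $\lVert f_t(\z)-\z\rVert\to 0$ is false, since $\lVert f_t(\z)-\z\rVert=\beta_t r_t/(\alpha_t+r_t)\to\beta_t$; what is true, and what both your properness step and the paper actually use, is that the displacement is uniformly bounded by $\beta_t$, which already gives $\lVert f_t(\z)\rVert\geq\lVert\z\rVert-\beta_t$ and hence the divergence of the norm and the properness of the composition. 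With that substitution your argument is sound.
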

\begin{proof}
We prove now lem.~\ref{lem:limit-gmm-radial} for GMM and radial flow. The proof is straightforward for the GMM parametrization since every Gaussian component of the mixture has $0$ limit when $||\z|| \rightarrow \infty$.

Let denote now $p_1(z) = \prob(\z \condition \bm{\omega})$ be parametrized with a radial flow transformation $g(\z)$ and a base unit Gaussian distribution $p_0$ i.e.:
\begin{align*}
    p_1(\z) = p_0(g(\z)) \times |\text{det}\frac{\partial g(\z)}{\partial\z}|
\end{align*}
Further, we can express the transformation $g(\z)$ and its determinant $\text{det}\frac{\partial g(z)}{\partial\z}$ as follows:
\begin{align*}
    g(\z) &= \z + \beta h(\alpha, r) (\z - \z_0)\\
    \text{det}\frac{\partial g(z)}{\partial\z} &= \frac{1+\beta h(\alpha, r)+\beta h'(\alpha, r)r}{(1 + \beta h(\alpha, r))^{\latentdim-1}}
\end{align*}
where $h(\alpha, r) =\frac{1}{\alpha + r}$ and $r=||\z - \z_0||$. On one hand, we have $||g(\z)|| \rightarrow +\infty$ when $||\z|| \rightarrow \infty$ since $||\beta h(\alpha, r) (\z - \z_0)|| < \beta$. Thus, the base Gaussian density $p_0(g(\z))\rightarrow 0$ when $||\z|| \rightarrow \infty$. On the other hand, we have $|\text{det}\frac{\partial g(z)}{\partial\z}|\rightarrow 1$ since $\beta h(\alpha, r) \rightarrow 0$ and $\beta h'(\alpha, r)r \rightarrow 0$ when $||\z|| \rightarrow \infty$. Therefore, the transformed density $p_0(g(\z)) \times |\text{det}\frac{\partial g(\z)}{\partial\z}| \rightarrow 0$ when $||\z|| \rightarrow \infty$ which ends the proof. Note that this proof can be extended to stacked radial flows by induction.
\end{proof}

\begin{theorem*}
Let a \oursacro{} model parametrized with a (deep) encoder $f_{\phi}$ with piecewise ReLU activations, a decoder $g_{\psi}$ and the density $\prob(\z \condition \bm{\omega})$. Let $f_{\phi}(\x)= V^{(l)}\x + a^{(l)}$ be the piecewise affine representation of the ReLU network $f_{\phi}$ on the finite number of affine regions $Q^{(l)}$ \citep{understanding-nn-relu}. Suppose that $V^{(l)}$ have independent rows and the density function $\prob(\z \condition \bm{\omega})$ has bounded derivatives, then for almost any $\x$ we have \smash{$\prob(f_{\phi}(\delta \cdot \x) \condition \bm{\omega}) \underset{\delta \rightarrow \infty}{\rightarrow} 0$}. i.e the evidence becomes small far from training data.
\end{theorem*}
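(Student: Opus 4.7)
The plan is to reduce the statement to a one-dimensional limit along a ray and then invoke Lemma~\ref{lem:limit-not-density}. Fix $\x \in \real^D$. By Lemma~\ref{lem:relu-regions}, there is a threshold $\delta^*$ and an affine region $Q^{(l^*)}$ such that $\delta\x \in Q^{(l^*)}$ for all $\delta \geq \delta^*$, and on this region $f_\phi(\delta\x) = \delta V^{(l^*)}\x + a^{(l^*)}$. Writing $v = V^{(l^*)}\x$ and $a = a^{(l^*)}$, what I need to control is $q(\delta) := \prob(\delta v + a \condition \bm{\omega})$ for large $\delta$. The independent-rows hypothesis makes $V^{(l^*)}$ surjective from $\real^D$ onto $\real^H$, so $\ker V^{(l^*)}$ has Lebesgue measure zero; since there are only finitely many regions, the union of these kernels is still null, and for almost every $\x$ the vector $v$ is nonzero, so $q$ is an evaluation of the density along a ray escaping to infinity.

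To apply Lemma~\ref{lem:limit-not-density} to $q$ on $(\delta^*, \infty)$ I need $q \in L^1$ and $q'$ bounded. The derivative bound is immediate, since $q'(\delta) = v \cdot \nabla\prob(\delta v + a\condition\bm{\omega})$ is controlled by $\|v\|$ times the assumed bound on $\nabla\prob$. For integrability I would first observe that a bounded gradient combined with $\int\prob = 1$ forces $\prob(\cdot\condition\bm{\omega})$ to be uniformly bounded (a value $\prob(\z_0) = K$ implies $\prob \geq K/2$ on a ball of radius proportional to $K$ about $\z_0$, which eventually contradicts normalization). Switching to spherical coordinates centered at $a$ then gives
\begin{equation*}
    \int_{S^{H-1}} \int_0^\infty \prob(se + a \condition \bm{\omega})\, ds\, de \;=\; \int_{\real^H} \frac{\prob(\z\condition\bm{\omega})}{\|\z - a\|^{H-1}}\, d\z,
\end{equation*}
which is finite because the singularity $\|\z - a\|^{-(H-1)}$ combines with the Jacobian $s^{H-1}\,ds\,de$ to a bounded factor near $a$ (using boundedness of $\prob$), while away from $a$ the integrand is dominated by $\prob$. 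Fubini then yields $\int_0^\infty \prob(se + a\condition\bm{\omega})\, ds < \infty$ for almost every direction $e \in S^{H-1}$, and the substitution $s = \delta\|v\|$ gives $q \in L^1(\delta^*, \infty)$ for every $\x$ whose unit direction $v/\|v\|$ lies in the corresponding full-measure subset of $S^{H-1}$.

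The remaining step is to transfer ``almost every direction'' to ``almost every $\x$''. The map $\Phi(\x) = V^{(l^*)}\x/\|V^{(l^*)}\x\|$ from $\real^D \setminus \ker V^{(l^*)}$ to $S^{H-1}$ is smooth, and its differential factors as the surjection $V^{(l^*)}$ composed with the orthogonal projection onto $(V^{(l^*)}\x)^\perp$, which has rank exactly $H-1$ thanks to the independent-rows assumption; hence $\Phi$ is a submersion and preimages of Lebesgue-null sets are null. Intersecting with the full-measure set from the previous paragraph produces a full-measure set of inputs $\x$ for which the hypotheses of Lemma~\ref{lem:limit-not-density} hold, and the lemma concludes $q(\delta) \to 0$. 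The main obstacle is this $L^1$-along-a-ray condition: a single ray can in principle graze a heavy tail of $\prob$, so the conclusion cannot be made uniform in $\x$, which is exactly what forces the ``almost any $\x$'' qualifier. The independent-rows assumption is precisely what allows the generic Fubini statement on $S^{H-1}$ to be pushed back to input space via $\Phi$; without it, positive-measure sets of inputs could concentrate on null sets of directions and the transfer would fail. If the bounded-derivative hypothesis is dropped, Lemma~\ref{lem:limit-density} still yields convergence in density, matching the weaker fallback remarked after the theorem.
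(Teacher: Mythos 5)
Your proof follows the same skeleton as the paper's: use Lemma~\ref{lem:relu-regions} to reduce to a ray $\delta \mapsto \delta V^{(l)}\x + a^{(l)}$ in the latent space, verify that the restriction $q(\delta)$ of the density to that ray is integrable with bounded derivative, and invoke Lemma~\ref{lem:limit-not-density}. The substantive difference is in the integrability step, and there your argument is genuinely stronger than the paper's. The paper simply writes $\int_{\delta^*}^{+\infty} \prob(\delta\cdot(V^{(l)}\x) + a^{(l)} \condition \bm{\omega})\, d\delta = \int^{S_{\x}}\prob(\z\condition\bm{\omega})\, d\z < +\infty$, i.e.\ it asserts that the line integral of the density along the half-line is finite because the density is normalized over $\real^\latentdim$. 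That inference is false in general: a half-line has $\latentdim$-dimensional measure zero, and an $L^1(\real^\latentdim)$ function restricted to a line need not be integrable on that line (a ridge decaying like $1/\delta$ along one ray inside a tube of rapidly shrinking width is a counterexample). You correctly identify this as the crux, and repair it: boundedness of $\prob$ follows from the bounded-gradient hypothesis plus normalization, the polar-coordinate identity $\int_{S^{\latentdim-1}}\int_0^\infty \prob(se+a\condition\bm{\omega})\,ds\,de = \int_{\real^\latentdim}\prob(\z\condition\bm{\omega})\,\|\z-a\|^{1-\latentdim}\,d\z$ is finite by splitting near and away from $a$, Tonelli gives integrability along almost every direction, and the submersion $\x \mapsto V^{(l)}\x/\|V^{(l)}\x\|$ (which has full rank precisely because of the independent-rows hypothesis) pulls the null set of bad directions back to a null set of inputs, over the finitely many regions. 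This makes the ``for almost any $\x$'' qualifier do real work beyond excluding $\ker V^{(l)}$, which is all the paper uses it for. Your version is the one that actually closes the argument; what the paper's shorter version buys is only brevity.
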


\begin{proof}
We prove now thm.~\ref{thm:oodom-guarantee}. Let $\x \in \real^\inputdim $ be a non-zero input and $f_{\phi}$ be a ReLU network. Lem.~\ref{lem:relu-regions} implies that there exists $\delta^* \in \real^{+}$ and $l \in \{1,..., R\}$ such that $\delta \cdot \x \in Q^{(l)}$ for all $\delta > \delta^*$. Thus, $\z_{\delta} = f_{\phi}(\delta \cdot \x) = \delta \cdot (V^{(l)} \x) + a^{(l)}$ for all $\delta > \delta^*$. Note that for $\delta\in [\delta^*, +\infty]$,  $\z_{\delta}$ follows an affine half line $S_{\x} = \{\z \condition \z = \delta \cdot (V^{(l)} \x) + a^{(l)}, \delta > \delta^* \}$ in the latent space. Further, note that $V^{(l)}\x \neq 0$ and $|| \z_\delta|| \underset{\delta \rightarrow \infty}{\rightarrow} + \infty$ since $\x \neq 0$ and $V^{(l)}$ has independent rows.

We now define the function $p(\delta)=\prob(\z_{\delta} \condition \bm{\omega})$ which is the density function $\prob(\z \condition \bm{\omega})$ restricted on the affine half line $S_{\x}$. Since $\prob(\z \condition \bm{\omega})$ is a normalized probability density, then the function $\delta \mapsto p(\delta - \delta^*)$ is integrable on $[0, +\infty]$. Indeed we have:
\begin{align*}
    \int_{0}^{+\infty} p(\delta - \delta^*) d\delta &= \int_{\delta^*}^{+\infty} p(\delta) d\delta \\
    &= \int_{\delta^*}^{+\infty} \prob(\delta \cdot (V^{(l)} \x) + a^{(l)} \condition \bm{\omega}) d\delta\\
    &= \int^{S_{\x}} \prob(\z \condition \bm{\omega}) d\z < +\infty
\end{align*}
Further since the function $\prob(\z \condition \bm{\omega})$ has bounded derivatives, we can apply  Lem.~\ref{lem:limit-not-density} to the function $\delta \mapsto p(\delta - \delta^*)$ to get the expected result i.e.
\begin{align*}
    \prob(f_{\phi}(\delta \cdot \x) \condition \bm{\omega}) = p(\delta) = p((\delta + \delta^*) - \delta^*) \underset{\delta \rightarrow \infty}{\rightarrow} 0
\end{align*}
which ends the proof.

Alternatively a slightly weaker conclusion also holds if the density function does not have bounded derivatives using lem.~\ref{lem:limit-density} (instead of lem.~\ref{lem:limit-not-density}) with the notion of limit in density. The stronger conclusion is valid if we parametrize $\prob(\z \condition \bm{\omega})$ with a Gaussian Mixture Model or a radial flow density according to lem.~\ref{lem:limit-gmm-radial} since $|| \z_\delta|| \underset{\delta \rightarrow \infty}{\rightarrow} + \infty$.
\end{proof}

Further, we provide additional comments on the assumption that a trained network converges to linear transformation with exactly two or more dependent rows in Thm.~\ref{thm:oodom-guarantee}. Under this realistic condition \cite{overconfident-relu}, the null space is reduced to $0$ according to the rank-nullity theorem meaning that there should be no dead input feature/pixel. If this condition does not hold, this would mean that this specific input feature/pixel is not informative for the prediction task. Thus it could be desired in practice that it does not affect the uncertainty on the prediction. This latter aspect is discussed in the “Task-Specific OOD” paragraph in Sec.~\ref{sec:limitations}.

\section{Bayesian Loss} 
\label{sec:loss}

\oursacro{} minimizes the following Bayesian formulation:
\begin{equation}
    \mathcal{L}\dataix = - \underbrace{\expectation_{\expparam\dataix \sim \prior^{\text{post},(\idata)}}[\log \prob(\y\dataix\condition \expparam \dataix)]}_\text{(i)} - \underbrace{\entropy[\prior^{\text{post},(\idata)}]}_\text{(ii)}
\end{equation}
where $\entropy[\prior^{\text{post},(\idata)}]$ denotes the entropy of the predicted posterior distribution $\prior^{\text{post},(\idata)}$. This loss is generally not equal to the ELBO loss. While the term \textbf{(i)} can be viewed as an ELBO loss without KL regularization, the term \textbf{(ii)} is not necessarily equal to the prior KL regularization term in the ELBO loss since a proper uniform prior might not exist (e.g. the target $\y$ is a real number). Indeed, if the target $y$ is a real number, there exists no uniform prior on $\expparam$ and the Bayesian loss and ELBO loss are different i.e. $KL(\prior||\prior^\text{prior}) = \int \prior(\theta) \log (\frac{\prior(\theta)}{\prior^\text{prior}(\theta)}) d\theta \neq \int \prior(\theta) \log \prior(\theta) d\theta = \entropy(\prior)$. Nonetheless, when a uniform prior $\prior^\text{unif}$ exists (e.g. the target $\y$ is a class), the loss optimization can be seen as an amortized variational optimization of an ELBO loss \citep{amortized-variational-inference} i.e. $\mathcal{L}\dataix = - \expectation_{\prior^{\text{post},(\idata)}}[\log \prob(\y\dataix\condition \expparam \dataix)] + \text{KL}[\prior^{\text{post},(\idata)}\|\prior^\text{unif}]$ where the predicted distribution $\prior^{\text{post},(\idata)}$ is the variational distribution --- which approximates the true posterior distribution. Indeed, the KL regularization term is equal to the entropy regularization term i.e. $KL(\prior||\prior^\text{unif}) = \int \prior(\theta) \log (\frac{\prior(\theta)}{\prior^\text{unif}(\theta)} d\theta) \propto \int \prior(\theta) \log Q(\theta) d\theta) = \entropy(\prior)$. Hence, the loss name ``Bayesian loss'' \citep{postnet} is motivated by its difference with the ELBO loss and its Bayesian property at optimum.

\section{Formulae for Exponential Family Distributions}
\label{sec:formulae}

\subsection{General Case}

\textbf{Target Distribution.} An exponential family distribution on a target variable $\y \in \real$ with natural parameters $\expparam \in \real^\suffstatdim$ can be denoted as
\begin{equation} \label{eq:exp-family}
    \prob(\y \condition \expparam) = h(\y) \exp\left(\expparam^T \bm{u}(\y) - A(\expparam)\right)
\end{equation}
where ${h: \real \rightarrow \real}$ is the carrier measure, ${A: \real^\suffstatdim \rightarrow \real}$ the log-normalizer and ${\bm{u}: \real \rightarrow \real^\suffstatdim}$ the sufficient statistics.

\textbf{Conjugate Prior Distribution.} An exponential family distribution $\prob$ always admits a conjugate prior:
\begin{equation}\label{eq:conj-prior}
    \prior(\expparam \condition \priorparam, \evidence) = \eta(\priorparam, \evidence) \exp\left( \evidence \, \bm{\theta}^T\priorparam  - \evidence A(\expparam) \right)
\end{equation}
where $\eta : \real^L \times \real \rightarrow \real$ is a normalization factor and $A$ the log-normalizer of the distribution $\prob$ as in Eq.~\ref{eq:exp-family}).

\textbf{Posterior Predictive Distribution.} The posterior predictive distribution is given as $\int \prob(\y\dataix|\expparam)\prior(\expparam|\priorparam^{\text{post}, (i)}, \evidence^{\text{post}, (i)}) d\expparam$ where the parameter $\expparam$ is marginalized out  \citep{uncertainty-deep-learning}. This distribution can always be computed in closed form for exponential family distributions:
\begin{align}
\prob(\y \condition \priorparam, \evidence) = h(\y) \frac{\eta(\priorparam, \evidence)}{\eta\left(\frac{\evidence \priorparam + \bm{u}(\y)}{\evidence +1}, \evidence + 1 \right)}
\end{align}
where $h$ is the carrier measure defined in Eq.~\ref{eq:exp-family} and $\eta$ is the normalization factor defined in Eq.~\ref{eq:conj-prior}. In particular, the posterior predictive distributions for Categorical, Normal and Poisson target distributions are Categorical, Student and Negative Binomial distributions, respectively.

\textbf{Likelihood.} The log-likelihood of an exponential family distribution can be written as follows:
\begin{equation}\label{eq:log-likelihood}
    \log{\prob(\y\dataix \condition \bm{\expparam})} = \log{h(\y\dataix)} + \expparam^T \bm{u}(y\dataix) - A(\expparam)
\end{equation}

\textbf{Expected Log-Likelihood.} Given the log-likelihood of an exponential family distribution, its expectation under the conjugate prior distribution $\prior(\expparam|\priorparam, \evidence)$ can be written as
\begin{equation}\label{eq:expected-log-likelihood}
    \expectation_{\expparam \sim \prior(\priorparam, \evidence)}[\log{\prob(\y\dataix \condition \bm{\expparam})}] = \log{h(\y\dataix)} + \expectation_{\expparam \sim \prior(\priorparam, \evidence)}[\expparam]^T \bm{u}(y\dataix) - \expectation_{\expparam \sim \prior(\priorparam, \evidence)}[A(\expparam)]
\end{equation}
where $\expectation_{\prior(\expparam|\priorparam, \evidence)}[\expparam] = \priorparam$ \citep{exponential-family-stats, conjugate-prior-exponential-family}.

\textbf{Entropy.} The entropy of a random variable $\y \sim \prob(\y | \expparam)$ for an exponential family distribution $\prob$ can be written as follows \citep{exponential-entropy}:
\begin{equation}
    \entropy[\prob(\y | \expparam)] = A(\expparam) - \expparam^T \nabla_{\bm{\theta}}A(\expparam) - \expectation_{y \sim \prob(\expparam)}[\log{h(\y)}]
\end{equation}


\subsection{Categorical \& Dirichlet Distributions}

The Dirichlet distribution $\bm{p} \sim \DDir(\bm{\alpha})$ is the conjugate prior of the categorical distributions $\bm{\y} \sim \DCat(\bm{p})$.

\textbf{Target Distribution.} The density and the entropy of the categorical distribution are:
\begin{align}\label{eq:density-entropy-categorical}
    &\DCat(\y \condition \bm{p}) = \sum_{i=1}^K{\mathbb{I}[\y_i = 1] \, p_i} \\
    &\entropy[\DCat(\bm{p})] = \sum_{\iclass=1}^\nclass{\log{p_\iclass}}
\end{align}

\textbf{Conjugate Prior Distribution.} The density and the entropy of the Dirichlet distribution are:
\begin{align}\label{eq:density-entropy-dirichlet}
        &\DDir(\bm{p} \condition \bm{\alpha}) = \frac{\Gamma\left(\sum_{\iclass=1}^\nclass{\alpha_\iclass}\right)}{\prod_{\iclass=1}^K{\Gamma(\alpha_\iclass)}} \prod_{\iclass=1}^\nclass{p_\iclass^{\alpha_\iclass-1}}\\
        &\entropy[\DDir(\bm{\alpha})] = \log B(\bm{\alpha}) + (\alpha_0 - \nclass) \psi(\alpha_0) - \sum_\iclass (\alpha_\iclass - 1) \psi(\alpha_\iclass)
\end{align}
where $\psi(\alpha)$ and $B(\bm{\alpha})$ denote Digamma and Beta functions, respectively, and $\alpha_0 = \sum_{c}{\alpha_c}$.

\textbf{Expected Log-Likelihood.} The expected likelihood of the categorical distribution $\DCat(\bm{p})$ under the Dirichlet distribution $\DDir(\bm{\alpha})$ is
\begin{align}\label{eq:expected-likelihood-cat-dir}
    \expectation_{\bm{p} \sim \DDir(\bm{\alpha})}[\log \DCat(\y \condition \bm{p})] = \psi(\alpha_{\y}) - \psi(\alpha_0)
\end{align}
where $\psi(\alpha)$ denotes Digamma function.


\subsection{Normal \& Normal-Inverse-Gamma Distributions}

The Normal-Inverse-Gamma (NIG) distribution $\mu, \sigma \sim \DNIG(\mu_0, \lambda, \alpha, \beta)$ is the conjugate prior of the normal distribution $\y \sim \DNormal(\mu, \sigma)$. Note that as both parameters $\lambda$ and $\alpha$ can be viewed as pseudo-counts. However, the natural prior parametrization enforces a single pseudo-count $\evidence$ corresponding to $\lambda = 2 \alpha$.

\textbf{Target Distribution.} The density and the entropy of the Normal distribution are:
\begin{align}\label{eq:density-entropy-normal}
    &\DNormal(\y \condition \mu, \sigma) = \frac{1}{\sigma \sqrt{2\pi}} \exp\left( -\frac{(x - \mu)^2}{2\sigma^2} \right)\\
    &\entropy[\DNormal(\mu, \sigma)] = \frac{1}{2}\log(2\pi\sigma^2)
\end{align}

\textbf{Conjugate Prior Distribution.} The density and the entropy of the NIG distribution are:
\begin{align}\label{eq:density-entropy-nig}
    &\DNIG(\mu, \sigma \condition \mu_0, \lambda, \alpha, \beta) = \frac{\beta^\alpha \sqrt{\lambda}}{\Gamma(\alpha) \sqrt{2\pi\sigma^2}} \left(\frac{1}{\sigma^2}\right)^{\alpha+1} \exp\left( - \frac{2\beta + \lambda (\mu - \mu_0)^2}{2 \sigma^2}\right)\\
    &\entropy[\DNIG(\mu_0, \lambda, \alpha, \beta)] = \frac{1}{2} + \log\left((2\pi)^{\frac{1}{2}}\beta^{\frac{3}{2}}\Gamma(\alpha)\right) - \frac{1}{2} \log{\lambda} + \alpha - \left(\alpha+\frac{3}{2}\right)\psi(\alpha)
\end{align}
where $\Gamma(\alpha)$ denotes the Gamma function.

\textbf{Expected Log-Likelihood.} The expected likelihood of the Normal distribution $\DNormal(\mu, \sigma)$ under the NIG distribution $\DNIG(\mu_0, \lambda, \alpha, \beta)$ is:
\begin{align}\label{eq:expected-likelihood-normal-nig}
\expectation_{(\mu, \sigma) \sim \DNIG(\mu, \lambda, \alpha, \beta)}&[\log \DNormal(\y \condition \mu, \sigma)] \\
&= \expectation\left[-\frac{(y-\mu)^2}{2\sigma^2} - \log(\sigma\sqrt{2\pi}) \right]\\
&= \frac{1}{2}\left(-\expectation\left[ \frac{(y-\mu_0)^2}{2\sigma^2} \right] - \expectation\left[ \log\sigma^2 \right] - \log2\pi \right)\\
&= \frac{1}{2}\left(-\y^2\expectation\left[ \frac{1}{\sigma^2} \right] + 2 \y \expectation\left[\frac{\mu}{\sigma^2}\right] - \expectation\left[ \frac{\mu^2}{\sigma^2} \right] + \expectation\left[ \log\frac{1}{\sigma^2} \right] - \log2\pi \right)\\
&= \frac{1}{2}\left(- \frac{\alpha}{\beta} (\y - \mu_0)^2 - \frac{1}{\lambda} + \psi(\alpha) - \log{\beta} - \log{2\pi}\right)
\end{align}
where $\psi(\alpha)$ denotes the Digamma function. We used here the moments of the NIG distribution $\expectation\left[\frac{\mu}{\sigma^2}\right]=\frac{\alpha \mu_0}{\beta}$, $\expectation\left[\frac{1}{\sigma^2}\right]=\frac{\alpha}{\beta}$, $\expectation\left[\frac{\mu^2}{\sigma^2}\right]=\frac{\alpha \mu_0^2}{\beta} + \frac{1}{\lambda}$, and the moment of the inverse Gamma distribution $\expectation\left[\log{\frac{1}{\sigma^2}}\right]=\psi(\alpha) - \log{\beta}$.


\subsection{Poisson \& Gamma Distributions}

The Gamma distribution $\lambda \sim \DGamma(\alpha, \beta)$ is the conjugate prior of the Poisson distributions $\y \sim \DPoi(\lambda)$. 

\textbf{Target Distribution.} The density and the entropy of the Poisson distribution are:
\begin{align}\label{eq:density-entropy-poisson}
    &\DPoi(\y \condition \lambda) = \frac{\lambda^\y \exp(-\lambda)}{\y!}\\
    &\entropy[\DPoi(\lambda)] = \lambda(1 - \log(\lambda))) + \exp(-\lambda)\sum_{k=0}^{\infty}\frac{\lambda^k\log(k!)}{k!}
\end{align}

\textbf{Conjugate Prior Distribution.} The density and the entropy of the Gamma distribution are:
\begin{align}\label{eq:density-entropy-gamma}
        &\DGamma(\lambda \condition \alpha, \beta) = \frac{\beta^\alpha}{\Gamma(\alpha)} \lambda^{\alpha - 1}\exp(-\beta \lambda)\\
        &\entropy[\DGamma(\alpha, \beta)] = \alpha + \log{\Gamma(\alpha)} - \log{\beta} + (1 - \alpha) \psi(\alpha)
\end{align}
where $\Gamma(\alpha)$ denotes the Gamma function.

\textbf{Expected Log-Likelihood.} The expected likelihood of the Poisson distribution $\DPoi(\lambda)$ under the Gamma distribution $\DGamma(\alpha, \beta)$ is
\begin{align}\label{eq:expected-likelihood-poisson-gamma}
\expectation_{\lambda \sim \DGamma(\alpha, \beta)}[\log \DPoi(\y \condition \lambda)] &= \expectation[\log{\lambda}] \y - \expectation[\lambda] - \sum_{k=1}^{\y} \log k\\
    &= (\psi(\alpha) - \log{\beta}) \y - \frac{\alpha}{\beta} - \sum_{k=1}^{\y} \log k
\end{align}
where $\psi(\alpha)$ denotes Digamma function. We used here the moments the Gamma distributions $\expectation[\log{\lambda}]=\psi(\alpha) - \log{\beta}$ and $\expectation[\lambda]=\frac{\alpha}{\beta}$. Note that $\sum_{k=1}^{\y} \log k$ is constant w.r.t. parameters $\alpha, \beta$.


\section{Approximation of Entropies}

The computation of a distribution's entropy often requires subtracting huge numbers from each other. While these numbers tend to be very close together, this introduces numerical challenges. For large parameter values, we therefore approximate the entropy by substituting numerically unstable terms and simplifying the resulting formula. For this procedure, we make use of the following equivalences (taken from \citet{loggamma} and \citet{digamma}, respectively):
\begin{equation}
    \log{\Gamma(x)} \approx \frac{1}{2} \log{2\pi} - x + \left(x - \frac{1}{2}\right) \log{x}
\end{equation}
\begin{equation}\label{eq:approx-digamma}
    \psi(x) = \log{x} - \frac{1}{2x} + \mathcal{O}\left( \frac{1}{x^2} \right)
\end{equation}
We note that Eq.~\ref{eq:approx-digamma} especially implies $\psi(x) \approx \log{x}$ and $x \, \psi(x) \approx x \log{x} - \frac{1}{2}$ for large $x$.

\subsection{Dirichlet Distribution}

We consider a Dirichlet distribution $\DDir(\bm{\alpha})$ of order $K$ with $\alpha_0 = \sum_{i=1}^K{\alpha_i}$. For $\alpha_0 \ge 10^4$, we use the following approximation:
\begin{equation}
    \entropy\left[\DDir(\bm{\alpha})\right] \approx \frac{K-1}{2} (1 + \log{2\pi}) + \frac{1}{2} \sum_{i=1}^K{\log{\alpha_i}} - \left(K - \frac{1}{2}\right) \log{\sum_{i=1}^K{\alpha_i}} 
\end{equation}

\subsection{Normal-Inverse-Gamma Distribution}

We consider a Normal-Inverse-Gamma distribution $\DNIG(\mu, \lambda, \alpha, \beta)$. For $\alpha \ge 10^4$, we use the following approximation:
\begin{equation}\label{eq:entropy}
    \entropy\left[\DNIG(\mu, \lambda, \alpha, \beta)\right] \approx 1 + \log{2\pi} - 2 \log{\alpha} + \frac{3}{2} \log{\beta} - \frac{1}{2} \log{\lambda}
\end{equation}

\subsection{Gamma Distribution}

We consider a Gamma distribution $\DGamma(\alpha, \beta)$. For $\alpha \ge 10^4$, we use the following approximation:
\begin{equation}
    \entropy\left[\DGamma(\alpha, \beta)\right] \approx \frac{1}{2} + \frac{1}{2} \log{2\pi} + \frac{1}{2} \log{\alpha} - \log{\beta}
\end{equation}


\section{Formulae for Uncertainty Estimates}

\textbf{Aleatoric Uncertainty.} The entropy of the target distribution $\prob(\y | \expparam)$ was used to estimate the aleatoric uncertainty i.e. $\entropy[\prob(\y | \expparam)]$.

\textbf{Epistemic Uncertainty.} The evidence parameter $\evidence^{\text{post}, (i)}$ was used to estimate the epistemic uncertainty. Due to its interpretation as a pseudo-count of observed labels, the posterior evidence parameter is indeed a natural indicator for the epistemic uncertainty.

\textbf{Predictive Uncertainty.} The entropy of the posterior distribution $\prior(\expparam|\priorparam^{\text{post}, (i)}, \evidence^{\text{post}, (i)})$ was used to estimate the predictive uncertainty.

\section{Dataset Details}
\label{sec:dataset}

We use a train/validation/test split in all experiments. For datasets with a dedicated test split, we split the rest of the data into training and validation sets of size 80\%/20\%. For all other datasets, we used 70\%/15\%/15\% for the train/validation/test sets. All inputs are rescaled with zero mean and unit variance. Similarly, we also scale the output target for regression. We provide the datasets at the project page \footnote{\url{https://www.daml.in.tum.de/natpn}}.

\textbf{Sensorless Drive \citep{uci}} This is a tabular dataset where the goal is to classify extracted motor current measurements into $11$ different classes. We remove the last two classes (9 and 10) from training and use them as the OOD dataset for OOD detection experiments. Each input is composed of $48$ attributes describing motor behavior. The dataset contains $58,509$ samples in total.

\textbf{MNIST \citep{mnist} \& Fashion-MNIST \citep{fashion-mnist}} These are image dataset where the goal is to classify pictures of hand-drawn digits into $10$ classes (from digit $0$ to digit $9$) or classify pictures of clothers. Each input is composed of a $1 \times 28 \times 28$ tensor. The dataset contains $70,000$ samples. For OOD detection experiments againt MNIST data, we use KMNIST \citep{kmnist} and Fashion-MNIST \citep{fashion-mnist} containing images of Japanese characters and images of clothes, respectively. For OOD detection experiments againt Fasgion-MNIST data, we use KMNIST \citep{kmnist} and MNIST \citep{mnist} containing images of Japanese characters and images of digits, respectively. It uses the MIT License (MIT).

\textbf{CIFAR-10 \citep{cifar10}} This is an image dataset where the goal is to classify a picture of objects into $10$ classes (airplane, automobile, bird, cat, deer, dog, frog, horse, ship, truck). Each input is a $3 \times 32 \times 32$ tensor. The dataset contains $60,000$ samples. For OOD detection experiments, we use street view house numbers (SVHN) \citep{svhn} containing images of numbers and CelebA \citep{celeba} containing images of celebrity faces. We do not use CIFAR100 \citep{cifar10} or TinyImageNet \citep{tiny-imagenet} as OOD as they also contain images of vehicles and animals similar to CIFAR10. This rightly questions to what extent are these datasets really OOD for CIFAR10. Furthermore, we generate the corrupted CIFAR-10 dataset \citep{benchmarking-corruptions} with 15 corruption types per image, each with 5 different severities. It uses the MIT License (MIT).

\textbf{Bike Sharing \citep{bike-sharing}} This is a tabular dataset where the goal is to predict the total number of rentals within an hour. Each input is composed of $15$ attributes. We removed features related to the year period (i.e. record index, date, season, months) which would make OOD detection trivial, leading to 11 attributes. The dataset contains $17,389$ samples in total. For OOD detection, we removed the attribute season from the input data and only trained on the summer season. The samples related to winter, spring and autumn were used as OOD datasets.

\textbf{Concrete \citep{uci}} This is a tabular dataset where the goal is to predict the compressive strength of high-performance concrete. Each input is composed of $8$ attributes. The dataset contains $1,030$ samples in total. For OOD detection, we use the Energy and Kin8nm datasets which have the same input size.

\textbf{Kin8nm \citep{uci}} This is a tabular dataset where the goal is to predict the forward kinematics of an 8-link robot arm. Each input is composed of $8$ attributes. The dataset contains $8,192$ samples in total. For OOD detection, we the Concrete and Energy datasets which have the same input size.

\textbf{NYU Depth v2 \citep{nyu-depth}} This is an image dataset where the goal is to predict the depth of room images at each pixel position. All inputs are of shape 3x640x480 tensors while we rescale outputs to be 320x240 tensors at both training and test time. This setting is slightly different from \citet{uncertainty-bayesian-computer-vision} and \citet{nyu-depth}. Indeed, \citep{uncertainty-bayesian-computer-vision} up-scales the model output to 640x480 at training and test time while \citep{nyu-depth} up-scales the model output to 640x480 at test time only. The dataset contains $50,000$ samples in total available on the DenseDepth GitHub \footnote{\url{https://github.com/ialhashim/DenseDepth}}. For OOD detection, we use the KITTI \citep{kitti} dataset containing images of driving cars and two out of the 20 categories from the LSUN \citep{lsun} dataset.

\section{Model Details}
\label{sec:model}

We train all models using $5$ seeds except for the large NYU dataset where we use a single randomly selected seed. All models are optimized with the Adam optimizer without further learning rate scheduling. We perform early stopping by checking loss improvement every epoch and a patience $p$ selected per dataset (Sensorless Drive: $p=15$, MNIST: $p=15$, CIFAR10: $p=20$, Bike Sharing: $p=50$, Concrete: $p=50$, Kin8nm: $p=30$, NYU Depth v2: $p=2$). We train all models on a single GPU (NVIDIA GTX 1080 Ti or NVIDIA GTX 2080 Ti, 11 GB memory). All models are trained after a grid search for the learning rate in $[1e^{-2}, 5e^{-4}]$. The backbone architecture is shared across models and selected per dataset to match the task needs (Sensorless Drive: 3 lin. layers with 64 hidden dim, MNIST: 6 conv. layers with 32/32/32/64/64/64 filters + 3 lin. layers with hidden dim 1024/128/64, CIFAR10: 8 conv. layers with 32/32/32/64/64/128/128/128 filters + 3 lin. layers with hidden dim 1024/128/64, Bike Sharing: 3 lin. layers with 16/16/16 hidden dim, Concrete: 2 lin. layers with 16/16 hidden dim, Kin8nm: 2 lin. layers with 16/16 hidden dim, NYU Depth v2: DenseDepth + 4 upsampling layers with convolutions and skip connections). For the NYU Depth v2 dataset, we use a pretrained DenseNet for initialization of the backbone architecture which was fine-tuned during training. The remaining layers are trained from scratch. All architectures use LeakyReLU activations. For further details, we provide the code at the project page \footnote{\url{https://www.daml.in.tum.de/natpn}}.

\textbf{Baselines.} For the dropout models, we use the best drop out rate $p_{\text{drop}}$ per dataset after a grid search in $\{0.1, 0.25, 0.4\}$ and sample $5$ times for uncertainty estimation. Similarly, we use $m=5$ for the ensemble baseline and the distribution distillation. Note that \citet{dataset-shift} found that a relative small ensemble size (e.g. $m=5$) may indeed be sufficient in practice. We also train Prior Networks where we set $\beta_\text{in}=1e^2$ as suggested in the original papers \citep{priornet, reverse-kl}. Prior Networks use Fashion-MNIST and SVHN as training OOD datasets for MNIST and CIFAR-10, respectively. As there is no available OOD dataset for the Sensorless Drive dataset, we use Gaussian noise as training OOD data. 

\textbf{\ours{}.} We perform a grid search for the entropy regularizer $\lambda$ in the range $[1e^{-5}, 0]$, for the latent dimension $\latentdim$ in $\{4, 8, 16, 32\}$, for the certainty budget $N_\latentdim$ in $\{e^{\frac{1}{2}\latentdim}, e^{\latentdim}, e^{\log(\sqrt{4\pi})\latentdim}\}$, and for normalizing flow type between radial flows \citep{radialflow} with $8$, $16$ layers and Masked Autoregressive flows \citep{maf, made} with $4$, $8$, $16$ layers. Further results on latent dimensions, density types, number of normalizing flow layers and certainty budget are presented in Sec.~\ref{sec:ablation-study}. We use ``warm-up'' training for the normalizing flows for all datasets except for the simple Concrete and Kin8nm datasets, and the NYU Depth v2 dataset which starts from a pretrained encoder. We use ``fine-tuning'' for the normalizing flows for all datasets except for the simple Concrete and Kin8nm datasets. As prior parameters, we set ${\bm{\chi}^\text{prior}=\bm{1}_{\nclass}/C}, {\evidence^\text{prior}=\nclass}$ for classification, ${\priorparam^\text{prior}=(0, 100)^T}, \evidence^\text{prior}=1$ for regression and $\chi^\text{prior}=1, \evidence^\text{prior}=1$ for count prediction. Note that the mean of these prior distributions correspond to an equiprobable Categorical distribution $\DCat(\bm{1}_{\nclass}/C)$, a Normal distribution with large variance $\DNormal(0, 10)$ and a Poisson distribution with a unitary mean $\DPoi(1)$. Those prior target distributions represent the safe default prediction when no evidence is predicted.

\section{Experiment Details}
\label{sec:experiment}

\textbf{Target Error Metric.} For classification, we use the standard accuracy $\frac{1}{\ndata}\sum_{\idata} \mathbb{I}[\bm{\y}^{*,(i)} = \bm{\y}\dataix]$ where $\bm{\y}^{*,(i)}$ is the one-hot true label and $\bm{\y}\dataix$ is the one-hot predicted label. For regression, we use the standard Root Mean Square Error $\sqrt{\frac{1}{\ndata}\sum_{\idata}^{\ndata} (\y^{*,(i)} - \y\dataix)^2}$.

\textbf{Calibration Metric.} For classification, we use the Brier score which is computed as $\frac{1}{\nclass}\sum^{\ndata}_\idata ||\bm{p}\dataix - \bm{\y}\dataix||_2$ where $\bm{p}\dataix$ is the predicted softmax probability and $\bm{\y}\dataix$ is the one-hot encoded ground-truth label. For regression and count prediction, we use the absolute difference between the percentile $p$ and the percentage of target lying in the confidence interval $I_p=[0,\frac{p}{2}]\cup[1-\frac{p}{2},1]$ under the predicted target distribution. Formally, we compute $p_\text{pred} = \frac{1}{\ndata}\sum_\idata \mathbb{I}[F_{\expparam\dataix}( \y^{*, (\idata)})) \in I_p]$ where $F_{\expparam\dataix}(\y^{*, (\idata)})=\prob(\y \leq \y^{*, (\idata)} \condition \expparam\dataix)$ is the cumulative function of the predicted target distribution evaluated at the true target. For example, the percentile $p=0.1$ would be compared to $p_\text{pred} = \frac{1}{\ndata}\sum_\idata \mathbb{I}[F_{\expparam\dataix}( \y^{*, (\idata)})) \in [0, 0.05] \cup [0.95, 1]]$ which should be close to $0.10$ for calibrated predictions. We compute a single calibration score by summing the square difference for $p \in \{0.1, \ldots, 0.9\}$ i.e.  $\sqrt{\sum_p (p - p_\text{pred})^2}$ \citep{accurate-uncertainties-deep-learning-regression}. 

\textbf{OOD Metric.} The OOD detection task can be evaluated as a binary classification. Hence, we assign class 1 to ID data and class 0 to OOD data task and use the aleatoric and epistemic uncertainty estimates as scores for OOD data. It enables to compute final scores using the area under the precision-recall curve (AUC-PR) and the area under the receiver operating characteristic curve (AUC-ROC). Both metrics have been scaled by $100$. We obtain numbers in $[0, 100]$  for all scores instead of $[0, 1]$. Results for AUC-ROC are reported in Sec.~\ref{sec:auroc}. For the aleatoric uncertainty, we use the negative entropy of the predicted target distribution. For the epistemic uncertainty, we use the predicted evidence for models parametrizing conjugate-prior, or the variance of the predicted winning probability class for classification and the variance of the mean for regression and class count prediction for ensemble or dropout.

\textbf{Inference Time Metric.} We measure inference time of models in ms and used NVIDIA GTX 1080 Ti GPUs. We evaluate the inference for one classification dataset (CIFAR-10) and one regression dataset (NYU Depth v2). For evaluation, we use a randomly initialized model and simply push random data through the model with batch size of 4,096 CIFAR10 and batch size of 4 for NYU Depth v2. The final numbers are averaged over 100 batches excluding the first batch due to GPU initialization. Compared models shared the same backbone architecture.

\section{Additional Experiments}

\subsection{MNIST, Fashion MNIST, CIFAR10 and Bike Sharing results}

\begin{table*}[ht]
    \centering
    \caption{Results on MNIST (classification with Categorical target distribution). Best scores among all single-pass models are in bold. Best scores among all models are starred. Gray numbers indicate that R-PriorNet has seen samples from the FMNIST dataset during training.}
    \label{tab:mnist}
    \scriptsize
    \resizebox{1.\textwidth}{!}{
    \begin{tabular}{lcccccccc}
        \toprule
        & \textbf{Accuracy} & \textbf{Brier} & \textbf{K. Alea.} & \textbf{K. Epist.} & \textbf{F. Alea.} & \textbf{F. Epist.} & \textbf{OODom Alea.} & \textbf{OODom Epist.} \\
        \midrule
        \textbf{Dropout} & 99.45 $\pm$ 0.01 & 1.07 $\pm$ 0.05 & 98.27 $\pm$ 0.05 & 97.82 $\pm$ 0.08 & *99.40 $\pm$ 0.03 & 98.01 $\pm$ 0.14 & 43.86 $\pm$ 1.62 & 74.09 $\pm$ 0.92 \\
        \textbf{Ensemble} & 99.46 $\pm$ 0.02 & 1.02 $\pm$ 0.02 & 98.39 $\pm$ 0.07 & 98.43 $\pm$ 0.05 & 99.33 $\pm$ 0.06 & 98.73 $\pm$ 0.08 & 40.98 $\pm$ 1.80 & 66.54 $\pm$ 0.58 \\
        \textbf{NatPE} & *99.55 $\pm$ 0.01 & *0.84 $\pm$ 0.03 & 96.39 $\pm$ 0.73 & *99.61 $\pm$ 0.02 & 97.49 $\pm$ 0.85 & *99.70 $\pm$ 0.04 & *100.00 $\pm$ 0.00 & *100.00 $\pm$ 0.00 \\
        \midrule
        \textbf{StandardNet} & 98.91 $\pm$ 0.06 &               1.81 $\pm$ 0.14 &                   95.81 $\pm$ 0.44 &                   -- &                          96.29 $\pm$ 1.04 &                          -- &                         47.53 $\pm$ 3.44 &                         -- \\
        \textbf{SNGP} & 99.34 $\pm$ 0.03 & 2.62 $\pm$ 0.04 & 98.85 $\pm$ 0.11 & -- & 98.04 $\pm$ 0.34 & -- & \textbf{*100.00 $\pm$ 0.00} & -- \\
        \textbf{R-PriorNet} & 99.35 $\pm$ 0.04 & \textbf{0.97 $\pm$ 0.03} & \textbf{*99.33 $\pm$ 0.18} & 99.28 $\pm$ 0.25 & \textcolor{gray}{100.00 $\pm$ 0.00} & \textcolor{gray}{100.00 $\pm$ 0.00} & 97.48 $\pm$ 0.66 & 31.03 $\pm$ 0.13 \\
        \textbf{EnD$^2$} & 99.24 $\pm$ 0.05 & 6.19 $\pm$ 0.13 & 98.36 $\pm$ 0.15 & 98.76 $\pm$ 0.13 & \textbf{99.25 $\pm$ 0.16} & 99.35 $\pm$ 0.14 & 48.09 $\pm$ 1.38 & 31.60 $\pm$ 0.39 \\
\textbf{PostNet} & 99.36 $\pm$ 0.02 & 1.33 $\pm$ 0.04 & 98.88 $\pm$ 0.05 & 98.79 $\pm$ 0.07 & 98.89 $\pm$ 0.23 & 98.85 $\pm$ 0.23 & \textbf{*100.00 $\pm$ 0.00} & \textbf{*100.00 $\pm$ 0.00} \\
        \textbf{\oursacro{}} & \textbf{99.47 $\pm$ 0.02} & 1.09 $\pm$ 0.03 & 99.20 $\pm$ 0.20 & \textbf{99.39 $\pm$ 0.08} & 99.16 $\pm$ 0.28 & \textbf{99.54 $\pm$ 0.09} & 99.99 $\pm$ 0.01 & \textbf{*100.00 $\pm$ 0.00} \\
        \bottomrule
    \end{tabular}
    }
\end{table*}

\begin{table*}[ht]
 \centering
  \caption{Results on FMNIST (classification with Categorical target distribution). Best scores among all single-pass models are in bold. Best scores among all models are starred. Gray numbers indicate that R-PriorNet has seen samples from the KMNIST dataset during training.}
 \label{tab:fmnist}
 \scriptsize
 \resizebox{\textwidth}{!}{
 \begin{tabular}{lcccccccc}
     \toprule
     & \textbf{Accuracy} & \textbf{Brier} & \textbf{M. Alea.} & \textbf{M. Epist.} & \textbf{K. Alea.} & \textbf{K. Epist.} & \textbf{OODom Alea.} & \textbf{OODom Epist.} \\
     \midrule
     \textbf{Dropout} & 92.44 $\pm$ 0.17 & 13.89 $\pm$ 0.31 & 60.75 $\pm$ 1.41 & 75.85 $\pm$ 1.73 & 76.57 $\pm$ 1.30 & 92.48 $\pm$ 0.46 & 39.97 $\pm$ 0.69 & 90.90 $\pm$ 1.74 \\
     \textbf{Ensemble} & 92.64 $\pm$ 0.10 & 13.63 $\pm$ 0.25 & 77.14 $\pm$ 1.49 & 90.78 $\pm$ 0.75 & 86.20 $\pm$ 0.76 & 95.16 $\pm$ 0.35 & 37.30 $\pm$ 0.83 & 82.93 $\pm$ 0.96 \\
     \textbf{NatPE} & *92.89 $\pm$ 0.06 & 14.44 $\pm$ 0.06 & 82.56 $\pm$ 0.33 & 96.38 $\pm$ 0.29 & 92.12 $\pm$ 0.17 & *98.79 $\pm$ 0.09 & *100.00 $\pm$ 0.00 & *100.00 $\pm$ 0.00 \\
     \midrule
     \textbf{StandardNet} & 90.28 $\pm$ 0.24 &              17.12 $\pm$ 0.53 &                  71.81 $\pm$ 2.43 &                  -- &                   82.28 $\pm$ 0.97 &                   -- &                         32.82 $\pm$ 0.73 &                         -- \\
     \textbf{SNGP} & 91.38 $\pm$ 0.08 &              16.73 $\pm$ 0.46 &                  89.40 $\pm$ 1.66 &                  -- &                   95.31 $\pm$ 0.42 &                   -- &                        \textbf{100.00 $\pm$ 0.00} &                        -- \\          \textbf{R-PriorNet} & 91.53 $\pm$ 0.10 & \textbf{*12.21 $\pm$ 0.20} & \textbf{*98.83 $\pm$ 0.49} & \textbf{*99.54 $\pm$ 0.18} & \textcolor{gray}{99.96 $\pm$ 0.02} & \textcolor{gray}{99.99 $\pm$ 0.00} & 72.23 $\pm$ 6.32 & 48.84 $\pm$ 6.09 \\
     \textbf{EnD$^2$} & \textbf{91.84 $\pm$ 0.03} & 29.23 $\pm$ 0.79 & 79.32 $\pm$ 1.39 & 91.61 $\pm$ 1.04 & 91.99 $\pm$ 0.06 & 98.36 $\pm$ 0.20 & 43.70 $\pm$ 3.37 & 36.73 $\pm$ 3.74 \\
     \textbf{PostNet} & 91.04 $\pm$ 0.10 & 16.11 $\pm$ 0.30 & 90.56 $\pm$ 1.25 & 92.10 $\pm$ 1.77 & \textbf{*96.65 $\pm$ 0.33} & 97.06 $\pm$ 0.42 & \textbf{*100.00 $\pm$ 0.00} & \textbf{*100.00 $\pm$ 0.00} \\
     \textbf{\oursacro{}} & 91.65 $\pm$ 0.14 & 14.88 $\pm$ 0.30 & 81.12 $\pm$ 2.77 & 96.51 $\pm$ 0.81 & 93.03 $\pm$ 1.00 & \textbf{98.38 $\pm$ 0.23} & 99.99 $\pm$ 0.01 & \textbf{*100.00 $\pm$ 0.00} \\
     \bottomrule
 \end{tabular}
 }
\end{table*}

\begin{table*}[ht]
    \centering
    	\vspace{-4mm}
    \caption{Classification results on CIFAR-10 with Categorical target distribution. Best scores among all single-pass models are in bold. Best scores among all models are starred. Gray numbers indicate that R-PriorNet has seen samples from the SVHN dataset during training.}
    \vspace{-3mm}
    \resizebox{.9\textwidth}{!}{
    \begin{tabular}{lcccccccc}
        \toprule
        & \textbf{Accuracy} & \textbf{Brier} & \textbf{SVHN Alea.} & \textbf{SVHN Epist.} & \textbf{CelebA Alea.} & \textbf{CelebA Epist.} & \textbf{OODom Alea.} & \textbf{OODom Epist.} \\
        \midrule
        \textbf{Dropout} & 88.15 $\pm$ 0.20 & 19.59 $\pm$ 0.41 & 80.63 $\pm$ 1.59 & 73.09 $\pm$ 1.51 & 71.84 $\pm$ 4.28 & 71.04 $\pm$ 3.92 & 18.42 $\pm$ 1.11 & 49.69 $\pm$ 9.10 \\
        \textbf{Ensemble} & *89.95 $\pm$ 0.11 & 17.33 $\pm$ 0.17 & 85.26 $\pm$ 0.84 & 82.51 $\pm$ 0.63 & 76.20 $\pm$ 0.87 & 74.23 $\pm$ 0.78 & 25.30 $\pm$ 4.02 & 89.21 $\pm$ 7.55 \\
        \textbf{NatPE} & 89.21 $\pm$ 0.09 & 17.41 $\pm$ 0.12 & 85.66 $\pm$ 0.34 & *83.16 $\pm$ 0.67 & *78.95 $\pm$ 1.15 & *82.06 $\pm$ 1.30 & 87.27 $\pm$ 1.79 & *98.88 $\pm$ 0.26 \\
        \midrule
        \textbf{SNGP} & 84.06 $\pm$ 1.68 & 30.49 $\pm$ 2.99 & 79.95 $\pm$ 1.82 & -- & 67.82 $\pm$ 3.67 & -- & \textbf{*96.00 $\pm$ 1.67} & -- \\
        \textbf{R-PriorNet} & \textbf{88.94 $\pm$ 0.23} & \textbf{*15.99 $\pm$ 0.32} & \textcolor{gray}{99.87 $\pm$ 0.02} & \textcolor{gray}{99.94 $\pm$ 0.01} & 67.74 $\pm$ 4.86 & 59.55 $\pm$ 7.90 & 42.21 $\pm$ 8.77 & 38.25 $\pm$ 9.82 \\
        \textbf{EnD$^2$} & 84.03 $\pm$ 0.25 & 40.84 $\pm$ 0.36 & \textbf{*86.47 $\pm$ 0.66} & \textbf{81.84 $\pm$ 0.92} & 75.54 $\pm$ 1.79 & 75.94 $\pm$ 1.82 & 42.19 $\pm$ 8.77 & 15.79 $\pm$ 0.27 \\
        \textbf{PostNet} & 87.95 $\pm$ 0.20 & 20.19 $\pm$ 0.40 & 82.35 $\pm$ 0.68 & 79.24 $\pm$ 1.49 & 72.96 $\pm$ 2.33 & 75.84 $\pm$ 1.61 & 85.89 $\pm$ 4.10 & 92.30 $\pm$ 2.18 \\
        \textbf{\oursacro{}} & 87.90 $\pm$ 0.16 & 19.99 $\pm$ 0.46 & 82.29 $\pm$ 1.11 & 77.83 $\pm$ 1.22 & \textbf{76.01 $\pm$ 1.18} & \textbf{76.87 $\pm$ 3.38} & 93.67 $\pm$ 3.03 & \textbf{94.90 $\pm$ 3.09} \\
        \bottomrule
    \end{tabular}
    }
    \label{tab:cifar10-app}
            \vspace{-0mm}
\end{table*}

\begin{table*}[ht]
    \centering
        \caption{Results on the Bike Sharing Dataset with Normal $\DNormal$ and Poison $\DPoi$ target distributions. Best scores among all single-pass models are in bold. Best scores among all models are starred.}
    \label{tab:bike-sharing-app}
    \vspace{-3mm}
    \resizebox{0.9\textwidth}{!}{
    \begin{tabular}{lcccccc}
        \toprule
        & \textbf{RMSE} & \textbf{Calibration} & \textbf{Winter Epist.} & \textbf{Spring Epist.} & \textbf{Autumn Epist.} & \textbf{OODom Epist.} \\
        \midrule
        \textbf{Dropout-$\DNormal$} & 70.20 $\pm$ 1.30 & 6.05 $\pm$ 0.77 & 15.26 $\pm$ 0.51 & 13.66 $\pm$ 0.16 & 15.11 $\pm$ 0.46 & 99.99 $\pm$ 0.01 \\
        \textbf{Ensemble-$\DNormal$} & *48.02 $\pm$ 2.78 & 5.88 $\pm$ 1.00 & 42.46 $\pm$ 2.29 & 21.28 $\pm$ 0.38 & 21.97 $\pm$ 0.58 & *100.00 $\pm$ 0.00 \\
        \midrule
        \textbf{StandardNet-$\DNormal$} & 58.49 $\pm$ 4.37 &                    2.32 $\pm$ 0.88 &                   -- &                   -- &                   -- &                         -- \\
        \textbf{EvReg-$\DNormal$} & \textbf{49.58 $\pm$ 1.51} & 3.77 $\pm$ 0.81 & 17.19 $\pm$ 0.76 & 15.54 $\pm$ 0.65 & 14.75 $\pm$ 0.29 & 34.99 $\pm$ 17.02 \\
        \textbf{\oursacro{}-$\DNormal$} & 49.85 $\pm$ 1.38 & \textbf{*1.95 $\pm$ 0.34} & \textbf{*55.04 $\pm$ 6.81} & \textbf{*23.25 $\pm$ 1.20} & \textbf{*27.78 $\pm$ 2.47} & \textbf{*100.00 $\pm$ 0.00} \\
        \bottomrule
    \end{tabular}
    }
            \vspace{-3mm}
\end{table*}



\subsection{Uncertainty Visualization on Toy Datasets}

We visualize the aleatoric and the epistemic uncertainty for two toy datasets with three classes with the same number of training examples for the three classes (see Fig.~\ref{fig:toy-visualization-app}) and different number of training examples for the three classes (see Fig.~\ref{fig:toy-visualization-2-app}). The predictions are more aleatorically certain close to training samples. The preidctions are more epistemically uncertain close to fewer training examples and very epistemically uncertain for region far from all training data.

\begin{figure}[ht!]
    \centering
    \caption{Visualization of the aleatoric and epistemic uncertainty on a 2D toy dataset with 3 classes with $900$ training samples for each class.}
    \label{fig:toy-visualization-app}
    \begin{subfigure}[t]{.75\textwidth}
        \centering
        \includegraphics[width=1\textwidth]{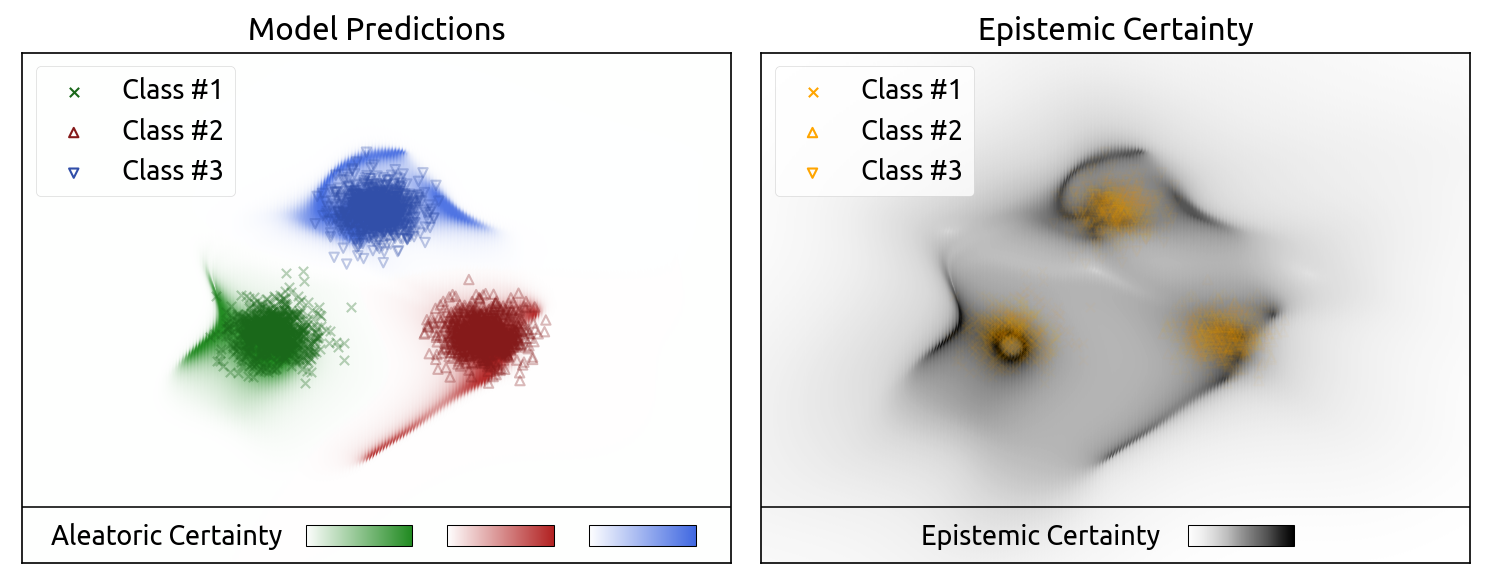}
    \end{subfigure}%
\end{figure}

\begin{figure}[ht!]
    \centering
    \caption{Visualization of the aleatoric and epistemic uncertainty on a 2D toy dataset with 3 classes with $900$ training samples for class 1 (green), $600$ training samples for class 2 (red) and $300$ training samples for class 2 (blue).}
    \label{fig:toy-visualization-2-app}
    \begin{subfigure}[t]{.75\textwidth}
        \centering
        \includegraphics[width=1\textwidth]{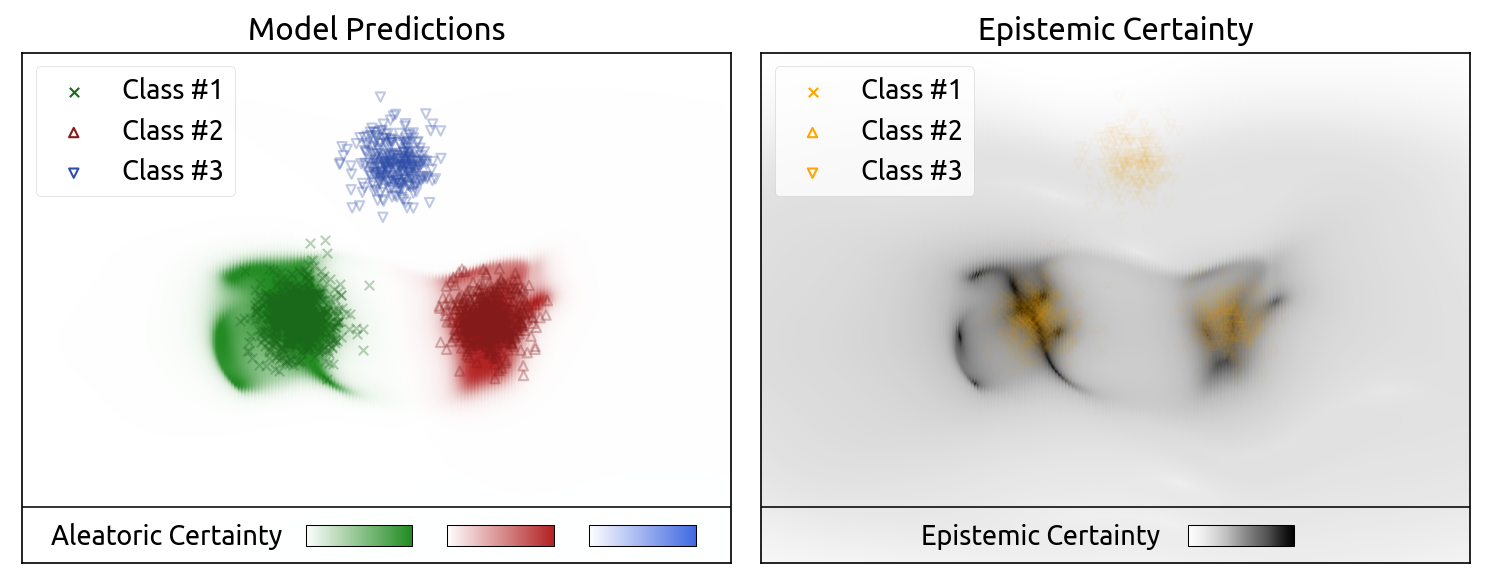}
    \end{subfigure}%
\end{figure}

\subsection{Latent Space Visualizations}

We propose additional visualizations of the latent space for MNIST with t-SNE \citep{tsne} with different perplexities (see Fig.~\ref{fig:latent-space}). For all perplexities, we clearly observe ten green clusters corresponding to the ten classes for MNIST. The KNMIST (OOD) samples in red can easily be separated from the MNIST (ID) samples in green. As desired, \oursacro{} assigns higher log-probabilities used in evidence computation to ID samples from MNIST.

\begin{figure}[ht!]
    \centering
    \includegraphics[width=0.6 \textwidth]{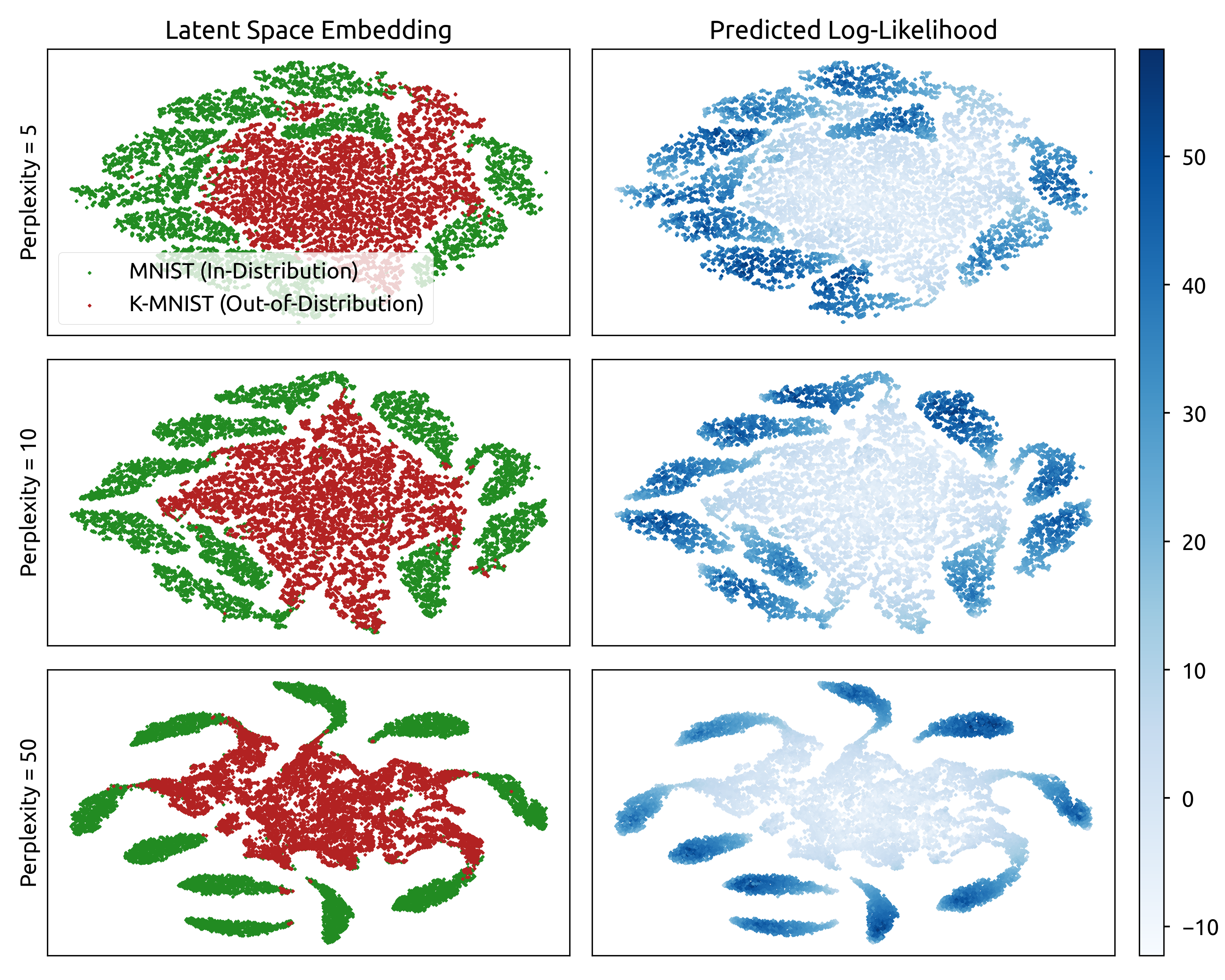}
    \caption{t-SNE visualization of the latent space of \oursacro{} on MNIST (ID) vs KMNIST (OOD). On the left, The ID data (MNIST in green) can easily be distinguished from the OOD data (KMNIST in red). On the right, \oursacro{} correctly assigns higher likelihood to ID data.}
    \label{fig:latent-space}
\end{figure}

\subsection{Histogram of Uncertainty Estimates}

We visualize the histogram distribution of the entropy of the posterior distribution accounting for predictive uncertainty for ID (MNIST/NYU) and OOD (KNMIST and Fashion-MNIST/LSUN classroom and LSUN church + KITTI) (see Fig.~\ref{fig:entropy-histograms}). We clearly observe lower predictive entropy for ID data than for OOD data for both MNIST and NYU datasets. On one hand, the entropy clearly differentiates between ID data (MNIST) and any other OOD datasets (KMNIST, Fashion MNIST, OODom) for classification. We intuitively explain this clear distinction since the samples from the OOD datasets are irrelevant for the digit classification task. On the other hand, the entropy is still a good indicator of ID (NYU) and OOD datasets (LSUN classroom and LSUN church + KITTI) for regression although the distinction between ID and OOD datasets is less strong compared to MNIST. We intuitively explain this behavior since the task of depth estimation is still relevant to LSUN classroom and LSUN church + KITTI.

\begin{figure}[ht!]
    \centering
    \caption{Histogram of the entropy of the posterior distribution accounting for the predictive uncertainty of \oursacro{} on MNIST (ID) vs KMNIST, Fashion-MNIST, Out-Of-Domain (OOD) and NYU (ID) vs LSUN classroom and LSUN church + KITTI (OOD). In both cases, low entropy is a good indicator of in-distribution data.}
    \label{fig:entropy-histograms}
    \begin{subfigure}[t]{.5\textwidth}
        \centering
    \includegraphics[width=1.\textwidth]{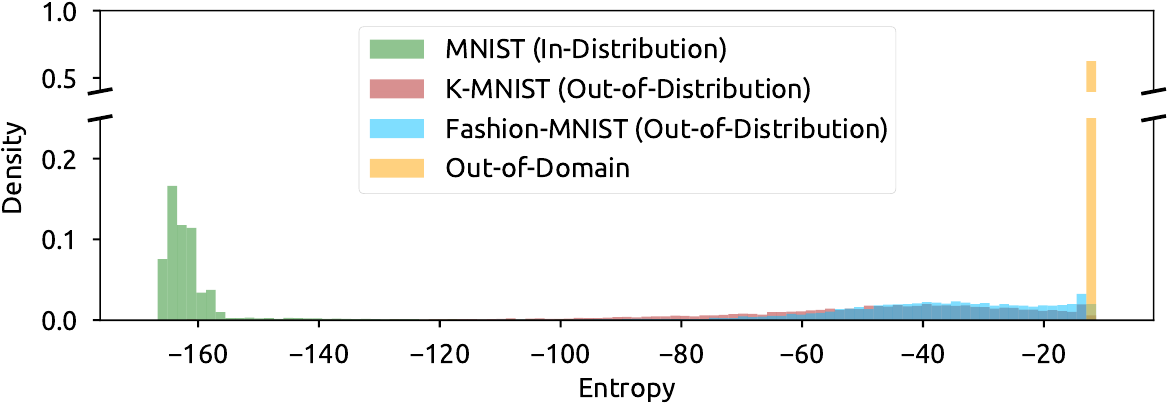}
    \end{subfigure}%
    \begin{subfigure}[t]{.5\textwidth}
        \centering
    \includegraphics[width=1.0\textwidth]{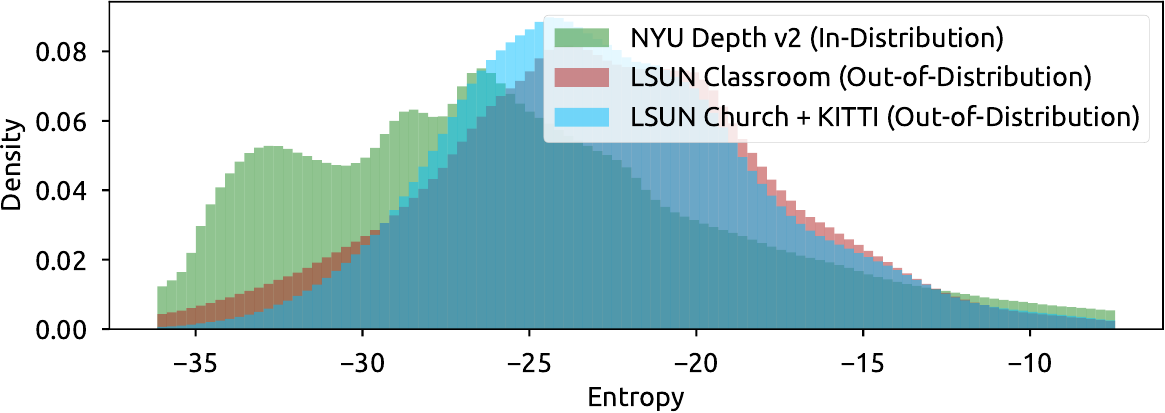}
    \end{subfigure}%
\end{figure}

\subsection{Uncertainty Visualization on NYU Depth v2 Dataset}

We visualize the prediction and the predictive uncertainty per pixel for the NYU Depth v2 dataset (see Fig.~\ref{fig:nyu-uncertainty-visualization}). We observe accurate target predictions compared to the ground truth depth of the images. Further \oursacro{} assigns higher uncertainty for pixels close to object edges, which is reasonable since the depth abruptly change at these locations.

\begin{figure}[ht!]
    \centering
    \caption{Visualization of the predicted depth and predictive uncertainty estimates of \oursacro{} per pixel on the NYU Depth v2 dataset. \oursacro{} predicts accurate depth uncertainty and reasonably assigns higher uncertainty to object edges.}
    \label{fig:nyu-uncertainty-visualization}
    \begin{subfigure}[t]{.5\textwidth}
        \centering
        \includegraphics[width=1\textwidth]{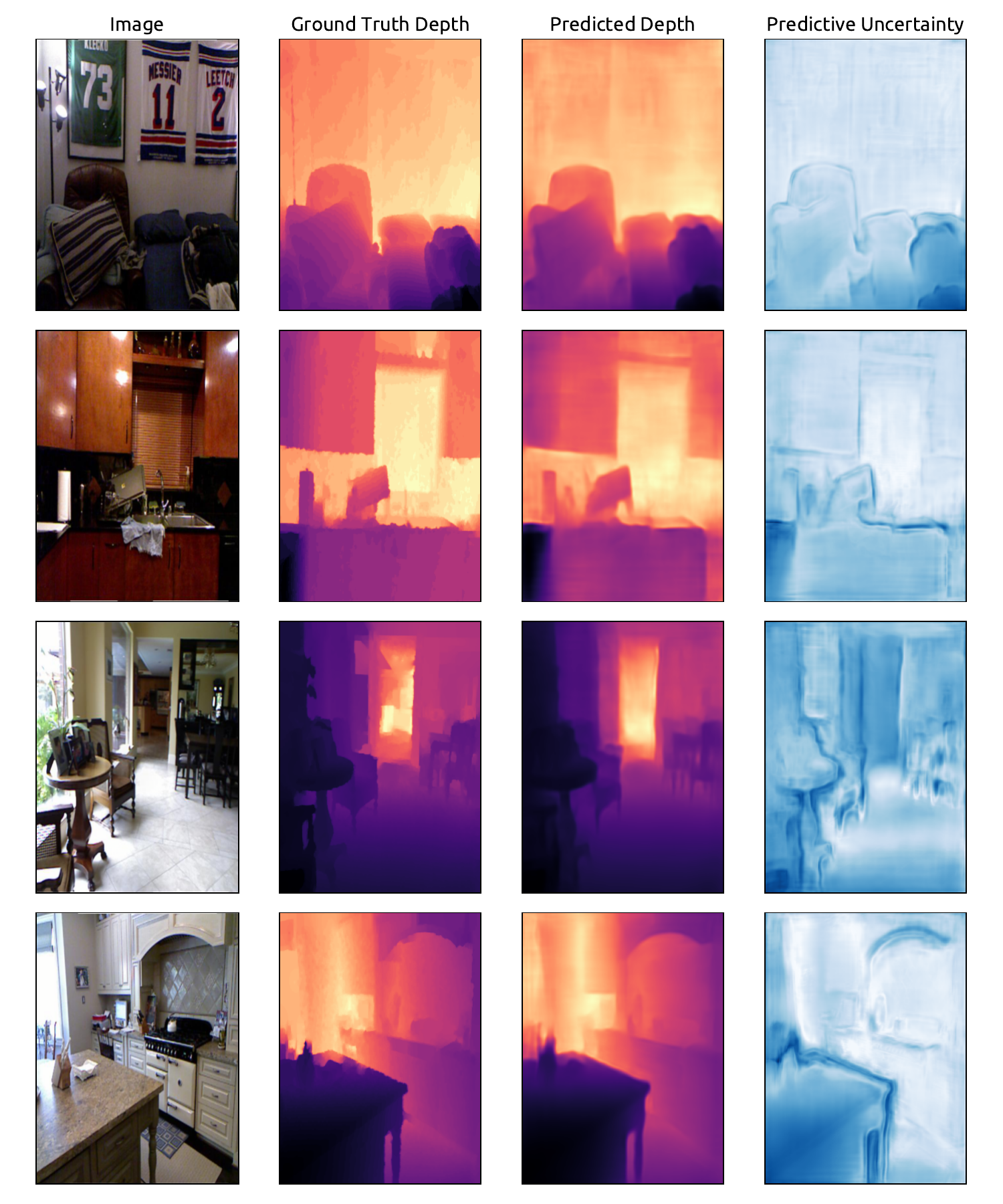}
    \end{subfigure}%
    \begin{subfigure}[t]{.5\textwidth}
        \centering
        \includegraphics[width=1\textwidth]{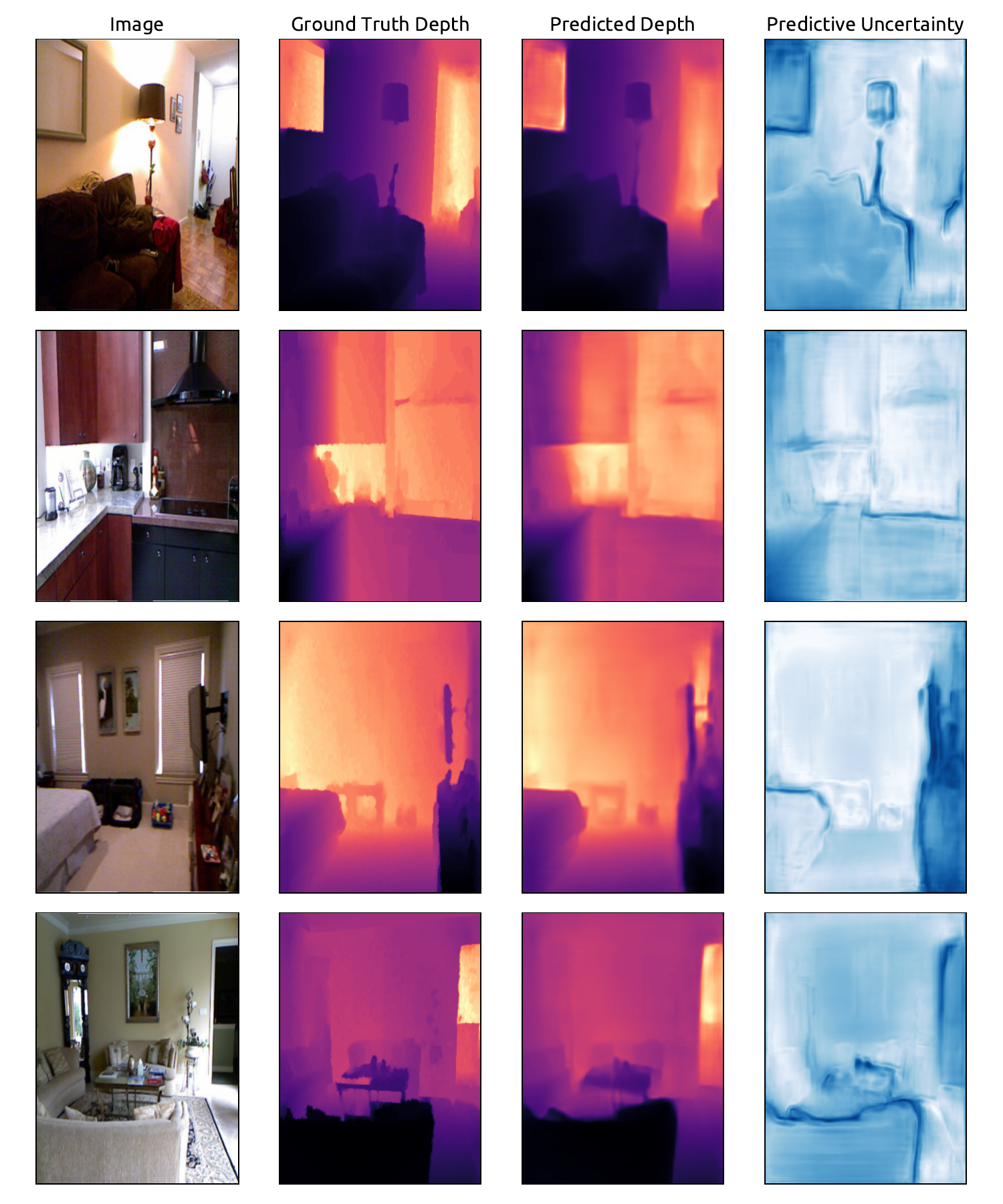}
    \end{subfigure}%
\end{figure}

\subsection{Hyper-Parameter Study}
\label{sec:ablation-study}

As a ablation study, we also report the results of the grid search on the latent dimension, normalizing flow types and number of normalizing flow layers for MNIST, CIFAR-10 and Bike Sharing datasets in Tab.~\ref{tab:ablation-mnist}, \ref{tab:ablation-cifar10}, \ref{tab:ablation-bikesharing-normal}, \ref{tab:ablation-bikesharing-poisson}. While most models converge to fairly good uncertainty estimates, we notice that 16 layers of simple radial flows on latent spaces of 16 dimensions were achieving very good results in practice. 

Changing the flow type or the number of normalizing flow layers does not lead to strong variations of the results except for Bike sharing with Poisson target distributions. In this case, more complex MAF normalizing flows improve \oursacro{} performance. 

The latent dimension appears to be a more important choice for the model convergence. As an example, a higher latent dimension of $16$ or $32$ leads to significantly better performances than a latent dimension of $4$ on MNIST, CIFAR10 and Bike Sharing datasets. We hypothesize that too low latent dimensions are less able to encode the necessary information for the prediction task, leading to worse target errors.

Further, we compare three different variants of the certainty budget $N_H$ used in the evidence computation $\evidence\dataix = N_H\prob(\z\dataix \condition \bm{\omega})$: a constant unit budget (i.e. $N_H=1$ (I)) corresponding to a fixed budget regardless of the number of training data and the latent dimension, a budget equals to the number of training data (i.e. $N_H=N$ (II)) similarly to \citet{postnet}, or a budget which scales exponentially with the number of latent dimensions (e.g. $N_\latentdim$ equal to $e^{\frac{1}{2}\latentdim}$ (III), $e^{\latentdim}$ (IV) or $e^{\log(\sqrt{4\pi})\latentdim}$ (V)). We observe that scaling the budget w.r.t. the latent space dimension ($H$-budget) is more stable in practice than constant budget ($1$-budget) and budget related to the number of training data ($N$-budget) (see. Tab.~\ref{tab:ablation-mnist},\ref{tab:ablation-cifar10},\ref{tab:ablation-bikesharing-normal},\ref{tab:ablation-bikesharing-poisson}). In aprticular, the $H$-budgets achieve more  better results on higher latent dimensions and performance on par with the other certainty budget scheme otherwise. The intuition is that due to the curse of dimensionality, the expected value of a probability density function $\expectation_{\z}[\prob(\z)]$ tends to decrease exponentially fast. For example, we have $\expectation_{\z}[\prob(\z)] = \frac{1}{(\sqrt{4\pi})^H}$ when $\z \sim \DNormal(\bm{0, \bm{1}})$ in a $H$-dimensional space. Increasing the certainty budget $N_H$ exponentially w.r.t. to the dimension $H$ avoids numerical issues by allocating close to $0$ evidence to latent representations. In our experiments, we use a grid search in different exponential scaling $N_\latentdim$ equal to $e^{\frac{1}{2}\latentdim}$ (III), $e^{\latentdim}$ (IV),  $e^{\log(\sqrt{4\pi})\latentdim}$ (V).

\subsection{OOD Detection with AUC-ROC Scores}
\label{sec:auroc}

In addition to the AUC-APR scores, we report the OOD detection results for MNIST, CIFAR10 and Bike Sharing datasets in Tab.~\ref{tab:auroc-mnist}, \ref{tab:auroc-cifar10}, \ref{tab:auroc-bikesharing} with AUC-ROC scores. Similarly as with AUC-APR scores, \oursacro{} and NatPE achieve very competitive performances compare to the baselines. It particular, they outperform all baselines to detect challenging OODom data.

\begin{table*}[ht]
    \centering
    \caption{MNIST comparison (<latent dim> -- <certainty budget> -- <radial layers>/<MAF layers>). Bold and starred number indicate best score among all models.}
    \label{tab:ablation-mnist}
    \scriptsize
    \resizebox{\textwidth}{!}{
    \begin{tabular}{lcccccccc}
        \toprule
        & \textbf{Accuracy} & \textbf{Brier} & \textbf{K. Alea.} & \textbf{K. Epist.} & \textbf{F. Alea.} & \textbf{F. Epist.} & \textbf{OODom Alea.} & \textbf{OODom Epist.} \\
        \midrule
        \textbf{$H$ = 4, $N_H$ = I, Flow = 8/0} & 61.92 $\pm$ 3.79 & 64.33 $\pm$ 3.98 & 88.70 $\pm$ 1.65 & 87.66 $\pm$ 2.47 & 90.07 $\pm$ 3.43 & 89.52 $\pm$ 2.25 & \textbf{*100.00 $\pm$ 0.00} & \textbf{*100.00 $\pm$ 0.00} \\
        \textbf{$H$ = 4, $N_H$ = I, Flow = 16/0} & 77.46 $\pm$ 6.26 & 59.11 $\pm$ 1.96 & 91.38 $\pm$ 2.50 & 91.48 $\pm$ 0.85 & 92.46 $\pm$ 1.72 & 93.03 $\pm$ 1.60 & 99.91 $\pm$ 0.04 & \textbf{*100.00 $\pm$ 0.00} \\
        \textbf{$H$ = 4, $N_H$ = I, Flow = 0/4} & 75.04 $\pm$ 10.10 & 59.46 $\pm$ 9.92 & 92.36 $\pm$ 3.34 & 92.64 $\pm$ 4.06 & 93.28 $\pm$ 2.66 & 91.90 $\pm$ 4.71 & 99.82 $\pm$ 0.05 & \textbf{*100.00 $\pm$ 0.00} \\
        \textbf{$H$ = 4, $N_H$ = I, Flow = 0/8} & 80.97 $\pm$ 4.29 & 58.35 $\pm$ 1.75 & 91.87 $\pm$ 2.55 & 92.65 $\pm$ 2.52 & 93.58 $\pm$ 2.93 & 96.50 $\pm$ 1.32 & 99.85 $\pm$ 0.04 & \textbf{*100.00 $\pm$ 0.00} \\
        \midrule
        \textbf{$H$ = 4, $N_H$ = II, Flow = 8/0} & 99.33 $\pm$ 0.05 & 3.43 $\pm$ 0.08 & 92.45 $\pm$ 0.33 & 67.84 $\pm$ 2.95 & 96.55 $\pm$ 1.23 & 71.57 $\pm$ 4.00 & \textbf{*100.00 $\pm$ 0.00} & \textbf{*100.00 $\pm$ 0.00} \\
        \textbf{$H$ = 4, $N_H$ = II, Flow = 16/0} & 99.36 $\pm$ 0.03 & 2.72 $\pm$ 0.08 & 94.87 $\pm$ 0.49 & 86.13 $\pm$ 1.47 & 95.03 $\pm$ 0.44 & 88.62 $\pm$ 1.80 & \textbf{*100.00 $\pm$ 0.00} & \textbf{*100.00 $\pm$ 0.00} \\
        \textbf{$H$ = 4, $N_H$ = II, Flow = 0/4} & 98.86 $\pm$ 0.05 & 4.26 $\pm$ 0.90 & 98.93 $\pm$ 0.16 & 98.04 $\pm$ 0.38 & 99.16 $\pm$ 0.23 & 98.52 $\pm$ 0.52 & 99.71 $\pm$ 0.02 & \textbf{*100.00 $\pm$ 0.00} \\
        \textbf{$H$ = 4, $N_H$ = II, Flow = 0/8} & 98.77 $\pm$ 0.15 & 5.44 $\pm$ 1.57 & 98.43 $\pm$ 0.26 & 97.87 $\pm$ 0.40 & 98.85 $\pm$ 0.26 & 98.35 $\pm$ 0.44 & 99.76 $\pm$ 0.03 & \textbf{*100.00 $\pm$ 0.00} \\
        \midrule
        \textbf{$H$ = 4, $N_H$ = III, Flow = 8/0} & 78.14 $\pm$ 5.55 & 48.46 $\pm$ 4.78 & 88.79 $\pm$ 1.05 & 87.28 $\pm$ 1.96 & 90.56 $\pm$ 1.10 & 89.69 $\pm$ 1.88 & \textbf{*100.00 $\pm$ 0.00} & 99.99 $\pm$ 0.01 \\
        \textbf{$H$ = 4, $N_H$ = III, Flow = 16/0} & 79.56 $\pm$ 5.27 & 41.44 $\pm$ 4.86 & 93.92 $\pm$ 1.25 & 93.99 $\pm$ 0.39 & 95.61 $\pm$ 0.88 & 95.64 $\pm$ 0.58 & 99.92 $\pm$ 0.07 & \textbf{*100.00 $\pm$ 0.00} \\
        \textbf{$H$ = 4, $N_H$ = III, Flow = 0/4} & 76.08 $\pm$ 7.67 & 58.56 $\pm$ 11.60 & 93.72 $\pm$ 3.51 & 94.58 $\pm$ 2.66 & 94.34 $\pm$ 3.39 & 96.60 $\pm$ 1.86 & 99.86 $\pm$ 0.05 & \textbf{*100.00 $\pm$ 0.00} \\
        \textbf{$H$ = 4, $N_H$ = III, Flow = 0/8} & 79.91 $\pm$ 10.84 & 50.78 $\pm$ 10.99 & 94.91 $\pm$ 2.23 & 96.44 $\pm$ 1.49 & 95.97 $\pm$ 1.37 & 98.35 $\pm$ 0.54 & 99.78 $\pm$ 0.07 & \textbf{*100.00 $\pm$ 0.00} \\
        \midrule
        \textbf{$H$ = 4, $N_H$ = IV, Flow = 8/0} & 85.31 $\pm$ 3.89 & 34.01 $\pm$ 5.20 & 89.65 $\pm$ 1.24 & 86.81 $\pm$ 1.36 & 92.78 $\pm$ 0.40 & 86.93 $\pm$ 2.47 & \textbf{*100.00 $\pm$ 0.00} & 99.99 $\pm$ 0.00 \\
        \textbf{$H$ = 4, $N_H$ = IV, Flow = 16/0} & 89.36 $\pm$ 0.15 & 26.48 $\pm$ 2.64 & 94.56 $\pm$ 0.65 & 93.48 $\pm$ 0.88 & 96.03 $\pm$ 1.01 & 94.74 $\pm$ 1.26 & 99.98 $\pm$ 0.02 & \textbf{*100.00 $\pm$ 0.00} \\
        \textbf{$H$ = 4, $N_H$ = IV, Flow = 0/4} & 91.64 $\pm$ 6.70 & 22.61 $\pm$ 13.67 & 96.63 $\pm$ 1.92 & 93.07 $\pm$ 1.49 & 98.43 $\pm$ 0.52 & 94.55 $\pm$ 2.01 & 99.81 $\pm$ 0.06 & \textbf{*100.00 $\pm$ 0.00} \\
        \textbf{$H$ = 4, $N_H$ = IV, Flow = 0/8} & 98.14 $\pm$ 0.46 & 12.73 $\pm$ 5.11 & 98.68 $\pm$ 0.25 & 97.61 $\pm$ 0.64 & 98.93 $\pm$ 0.20 & 97.73 $\pm$ 0.63 & 99.78 $\pm$ 0.05 & \textbf{*100.00 $\pm$ 0.00} \\
        \midrule
        \textbf{$H$ = 4, $N_H$ = V, Flow = 8/0} & 93.59 $\pm$ 2.32 & 20.40 $\pm$ 3.68 & 92.06 $\pm$ 1.39 & 87.15 $\pm$ 2.25 & 93.55 $\pm$ 1.03 & 87.11 $\pm$ 2.97 & \textbf{*100.00 $\pm$ 0.00} & 99.99 $\pm$ 0.00 \\
        \textbf{$H$ = 4, $N_H$ = V, Flow = 16/0} & 97.19 $\pm$ 1.97 & 14.66 $\pm$ 3.13 & 94.67 $\pm$ 1.45 & 92.82 $\pm$ 1.04 & 95.80 $\pm$ 1.17 & 95.27 $\pm$ 0.70 & \textbf{*100.00 $\pm$ 0.00} & \textbf{*100.00 $\pm$ 0.00} \\
        \textbf{$H$ = 4, $N_H$ = V, Flow = 0/4} & 96.69 $\pm$ 1.56 & 16.51 $\pm$ 6.21 & 97.78 $\pm$ 0.63 & 94.58 $\pm$ 1.06 & 96.95 $\pm$ 1.76 & 92.94 $\pm$ 1.74 & 99.80 $\pm$ 0.03 & \textbf{*100.00 $\pm$ 0.00} \\
        \textbf{$H$ = 4, $N_H$ = V, Flow = 0/8} & 98.34 $\pm$ 0.19 & 6.72 $\pm$ 1.54 & 98.85 $\pm$ 0.15 & 98.02 $\pm$ 0.49 & 99.08 $\pm$ 0.08 & 98.41 $\pm$ 0.34 & 99.69 $\pm$ 0.04 & \textbf{*100.00 $\pm$ 0.00} \\
        \midrule
        \midrule
        \textbf{$H$ = 16, $N_H$ = I, Flow = 8/0} & 99.45 $\pm$ 0.04 & 1.49 $\pm$ 0.11 & 97.75 $\pm$ 0.32 & 89.16 $\pm$ 0.34 & 98.13 $\pm$ 0.27 & 89.16 $\pm$ 0.41 & \textbf{*100.00 $\pm$ 0.00} & \textbf{*100.00 $\pm$ 0.00} \\
        \textbf{$H$ = 16, $N_H$ = I, Flow = 16/0} & 99.48 $\pm$ 0.01 & 1.41 $\pm$ 0.07 & \textbf{*99.32 $\pm$ 0.11} & 99.34 $\pm$ 0.05 & \textbf{*99.52 $\pm$ 0.07} & \textbf{*99.59 $\pm$ 0.05} & \textbf{*100.00 $\pm$ 0.00} & \textbf{*100.00 $\pm$ 0.00} \\
        \textbf{$H$ = 16, $N_H$ = I, Flow = 0/4} & 99.18 $\pm$ 0.05 & 1.83 $\pm$ 0.08 & 98.78 $\pm$ 0.11 & 97.92 $\pm$ 0.15 & 99.39 $\pm$ 0.07 & 98.32 $\pm$ 0.51 & 99.93 $\pm$ 0.02 & \textbf{*100.00 $\pm$ 0.00} \\
        \textbf{$H$ = 16, $N_H$ = I, Flow = 0/8} & 99.13 $\pm$ 0.06 & 1.97 $\pm$ 0.09 & 98.97 $\pm$ 0.04 & 98.33 $\pm$ 0.07 & \textbf{*99.52 $\pm$ 0.04} & 99.28 $\pm$ 0.09 & 99.88 $\pm$ 0.02 & \textbf{*100.00 $\pm$ 0.00} \\
        \midrule
        \textbf{$H$ = 16, $N_H$ = II, Flow = 8/0} & 99.42 $\pm$ 0.02 & 1.42 $\pm$ 0.10 & 96.62 $\pm$ 0.37 & 82.95 $\pm$ 1.46 & 97.34 $\pm$ 0.59 & 82.85 $\pm$ 1.37 & \textbf{*100.00 $\pm$ 0.00} & \textbf{*100.00 $\pm$ 0.00} \\
        \textbf{$H$ = 16, $N_H$ = II, Flow = 16/0} & 99.39 $\pm$ 0.02 & 1.52 $\pm$ 0.23 & 98.19 $\pm$ 0.38 & 95.92 $\pm$ 1.30 & 98.48 $\pm$ 0.37 & 96.20 $\pm$ 1.30 & \textbf{*100.00 $\pm$ 0.00} & \textbf{*100.00 $\pm$ 0.00} \\
        \textbf{$H$ = 16, $N_H$ = II, Flow = 0/4} & 99.24 $\pm$ 0.04 & 1.74 $\pm$ 0.08 & 98.65 $\pm$ 0.12 & 97.49 $\pm$ 0.20 & 99.29 $\pm$ 0.09 & 98.62 $\pm$ 0.20 & 99.94 $\pm$ 0.02 & \textbf{*100.00 $\pm$ 0.00} \\
        \textbf{$H$ = 16, $N_H$ = II, Flow = 0/8} & 99.26 $\pm$ 0.04 & 1.73 $\pm$ 0.09 & 98.89 $\pm$ 0.06 & 98.09 $\pm$ 0.12 & 99.33 $\pm$ 0.11 & 98.73 $\pm$ 0.27 & 99.95 $\pm$ 0.01 & \textbf{*100.00 $\pm$ 0.00} \\
        \midrule
        \textbf{$H$ = 16, $N_H$ = III, Flow = 8/0} & 99.47 $\pm$ 0.02 & 1.90 $\pm$ 0.50 & 95.08 $\pm$ 1.14 & 83.27 $\pm$ 0.43 & 96.03 $\pm$ 1.33 & 82.95 $\pm$ 0.30 & \textbf{*100.00 $\pm$ 0.00} & \textbf{*100.00 $\pm$ 0.00} \\
        \textbf{$H$ = 16, $N_H$ = III, Flow = 16/0} & 99.47 $\pm$ 0.02 & \textbf{*1.09 $\pm$ 0.03} & 99.20 $\pm$ 0.20 & \textbf{*99.39 $\pm$ 0.08} & 99.16 $\pm$ 0.28 & 99.54 $\pm$ 0.09 & 99.99 $\pm$ 0.01 & \textbf{*100.00 $\pm$ 0.00} \\
        \textbf{$H$ = 16, $N_H$ = III, Flow = 0/4} & 99.25 $\pm$ 0.03 & 1.65 $\pm$ 0.07 & 98.72 $\pm$ 0.07 & 97.95 $\pm$ 0.15 & 99.44 $\pm$ 0.04 & 98.96 $\pm$ 0.17 & 99.95 $\pm$ 0.01 & \textbf{*100.00 $\pm$ 0.00} \\
        \textbf{$H$ = 16, $N_H$ = III, Flow = 0/8} & 99.26 $\pm$ 0.05 & 1.78 $\pm$ 0.07 & 98.89 $\pm$ 0.08 & 98.24 $\pm$ 0.20 & 99.23 $\pm$ 0.28 & 98.70 $\pm$ 0.37 & 99.95 $\pm$ 0.01 & \textbf{*100.00 $\pm$ 0.00} \\
        \midrule
        \textbf{$H$ = 16, $N_H$ = IV, Flow = 8/0} & 99.33 $\pm$ 0.04 & 1.58 $\pm$ 0.03 & 97.08 $\pm$ 0.35 & 77.40 $\pm$ 1.79 & 98.15 $\pm$ 0.61 & 77.46 $\pm$ 1.82 & \textbf{*100.00 $\pm$ 0.00} & \textbf{*100.00 $\pm$ 0.00} \\
        \textbf{$H$ = 16, $N_H$ = IV, Flow = 16/0} & 99.37 $\pm$ 0.03 & 1.47 $\pm$ 0.11 & 97.52 $\pm$ 0.50 & 93.53 $\pm$ 0.89 & 98.04 $\pm$ 0.55 & 94.28 $\pm$ 0.60 & \textbf{*100.00 $\pm$ 0.00} & \textbf{*100.00 $\pm$ 0.00} \\
        \textbf{$H$ = 16, $N_H$ = IV, Flow = 0/4} & 99.23 $\pm$ 0.06 & 1.74 $\pm$ 0.10 & 98.64 $\pm$ 0.06 & 97.00 $\pm$ 0.38 & 99.31 $\pm$ 0.07 & 98.24 $\pm$ 0.35 & 99.93 $\pm$ 0.02 & \textbf{*100.00 $\pm$ 0.00} \\
        \textbf{$H$ = 16, $N_H$ = IV, Flow = 0/8} & 99.17 $\pm$ 0.03 & 1.83 $\pm$ 0.04 & 98.54 $\pm$ 0.10 & 97.43 $\pm$ 0.23 & 99.15 $\pm$ 0.15 & 98.28 $\pm$ 0.34 & 99.93 $\pm$ 0.01 & \textbf{*100.00 $\pm$ 0.00} \\
        \midrule
        \textbf{$H$ = 16, $N_H$ = V, Flow = 8/0} & 99.36 $\pm$ 0.03 & 1.46 $\pm$ 0.02 & 97.60 $\pm$ 0.63 & 72.45 $\pm$ 3.67 & 98.25 $\pm$ 0.83 & 72.69 $\pm$ 3.58 & \textbf{*100.00 $\pm$ 0.00} & \textbf{*100.00 $\pm$ 0.00} \\
        \textbf{$H$ = 16, $N_H$ = V, Flow = 16/0} & 99.38 $\pm$ 0.02 & 1.58 $\pm$ 0.12 & 96.17 $\pm$ 0.61 & 90.44 $\pm$ 1.80 & 96.80 $\pm$ 0.33 & 91.12 $\pm$ 1.87 & \textbf{*100.00 $\pm$ 0.00} & \textbf{*100.00 $\pm$ 0.00} \\
        \textbf{$H$ = 16, $N_H$ = V, Flow = 0/4} & 99.15 $\pm$ 0.04 & 1.89 $\pm$ 0.08 & 98.57 $\pm$ 0.13 & 96.06 $\pm$ 0.34 & 99.08 $\pm$ 0.11 & 97.79 $\pm$ 0.42 & 99.91 $\pm$ 0.01 & \textbf{*100.00 $\pm$ 0.00} \\
        \textbf{$H$ = 16, $N_H$ = V, Flow = 0/8} & 99.23 $\pm$ 0.04 & 1.98 $\pm$ 0.08 & 98.53 $\pm$ 0.04 & 97.03 $\pm$ 0.19 & 98.94 $\pm$ 0.13 & 97.83 $\pm$ 0.20 & 99.90 $\pm$ 0.01 & \textbf{*100.00 $\pm$ 0.00} \\
        \midrule
        \midrule
        \textbf{$H$ = 32, $N_H$ = I, Flow = 8/0} & \textbf{*99.49 $\pm$ 0.03} & 2.28 $\pm$ 0.89 & 92.36 $\pm$ 1.75 & 76.18 $\pm$ 2.10 & 93.90 $\pm$ 1.59 & 76.38 $\pm$ 2.11 & \textbf{*100.00 $\pm$ 0.00} & \textbf{*100.00 $\pm$ 0.00} \\
        \textbf{$H$ = 32, $N_H$ = I, Flow = 16/0} & 99.47 $\pm$ 0.01 & 1.47 $\pm$ 0.26 & 96.20 $\pm$ 0.25 & 95.95 $\pm$ 1.21 & 95.96 $\pm$ 0.77 & 95.97 $\pm$ 1.14 & \textbf{*100.00 $\pm$ 0.00} & \textbf{*100.00 $\pm$ 0.00} \\
        \textbf{$H$ = 32, $N_H$ = I, Flow = 0/4} & 99.29 $\pm$ 0.02 & 1.42 $\pm$ 0.04 & 99.02 $\pm$ 0.05 & 98.14 $\pm$ 0.16 & 99.46 $\pm$ 0.02 & 98.57 $\pm$ 0.11 & 99.95 $\pm$ 0.01 & \textbf{*100.00 $\pm$ 0.00} \\
        \textbf{$H$ = 32, $N_H$ = I, Flow = 0/8} & 99.27 $\pm$ 0.04 & 1.45 $\pm$ 0.06 & 98.95 $\pm$ 0.08 & 98.52 $\pm$ 0.08 & 99.35 $\pm$ 0.06 & 98.80 $\pm$ 0.04 & 99.99 $\pm$ 0.00 & \textbf{*100.00 $\pm$ 0.00} \\
        \midrule
        \textbf{$H$ = 32, $N_H$ = II, Flow = 8/0} & 99.44 $\pm$ 0.05 & 1.29 $\pm$ 0.18 & 96.03 $\pm$ 1.88 & 73.89 $\pm$ 2.45 & 96.83 $\pm$ 1.13 & 73.98 $\pm$ 2.41 & \textbf{*100.00 $\pm$ 0.00} & \textbf{*100.00 $\pm$ 0.00} \\
        \textbf{$H$ = 32, $N_H$ = II, Flow = 16/0} & 99.46 $\pm$ 0.05 & 1.94 $\pm$ 0.79 & 96.57 $\pm$ 1.17 & 94.13 $\pm$ 1.47 & 97.82 $\pm$ 0.84 & 94.69 $\pm$ 1.39 & \textbf{*100.00 $\pm$ 0.00} & \textbf{*100.00 $\pm$ 0.00} \\
        \textbf{$H$ = 32, $N_H$ = II, Flow = 0/4} & 99.26 $\pm$ 0.01 & 1.48 $\pm$ 0.02 & 98.90 $\pm$ 0.05 & 97.98 $\pm$ 0.06 & 99.29 $\pm$ 0.05 & 98.44 $\pm$ 0.09 & 99.94 $\pm$ 0.01 & \textbf{*100.00 $\pm$ 0.00} \\
        \textbf{$H$ = 32, $N_H$ = II, Flow = 0/8} & 99.30 $\pm$ 0.06 & 1.39 $\pm$ 0.09 & 98.89 $\pm$ 0.07 & 98.14 $\pm$ 0.14 & 99.45 $\pm$ 0.06 & 98.64 $\pm$ 0.13 & 99.98 $\pm$ 0.01 & \textbf{*100.00 $\pm$ 0.00} \\
        \midrule
        \textbf{$H$ = 32, $N_H$ = III, Flow = 8/0} & \textbf{*99.49 $\pm$ 0.01} & 1.20 $\pm$ 0.10 & 97.91 $\pm$ 0.79 & 73.77 $\pm$ 3.06 & 98.73 $\pm$ 0.57 & 73.79 $\pm$ 2.97 & \textbf{*100.00 $\pm$ 0.00} & \textbf{*100.00 $\pm$ 0.00} \\
        \textbf{$H$ = 32, $N_H$ = III, Flow = 16/0} & 99.44 $\pm$ 0.02 & 1.90 $\pm$ 0.69 & 96.16 $\pm$ 1.53 & 90.95 $\pm$ 1.20 & 97.42 $\pm$ 1.12 & 91.46 $\pm$ 1.24 & \textbf{*100.00 $\pm$ 0.00} & \textbf{*100.00 $\pm$ 0.00} \\
        \textbf{$H$ = 32, $N_H$ = III, Flow = 0/4} & 99.31 $\pm$ 0.02 & 1.38 $\pm$ 0.03 & 98.89 $\pm$ 0.10 & 97.73 $\pm$ 0.06 & 99.29 $\pm$ 0.11 & 98.41 $\pm$ 0.08 & 99.97 $\pm$ 0.01 & \textbf{*100.00 $\pm$ 0.00} \\
        \textbf{$H$ = 32, $N_H$ = III, Flow = 0/8} & 99.28 $\pm$ 0.04 & 1.33 $\pm$ 0.05 & 98.80 $\pm$ 0.09 & 98.16 $\pm$ 0.05 & 99.27 $\pm$ 0.08 & 98.52 $\pm$ 0.13 & 99.98 $\pm$ 0.01 & \textbf{*100.00 $\pm$ 0.00} \\
        \midrule
        \textbf{$H$ = 32, $N_H$ = IV, Flow = 8/0} & 99.36 $\pm$ 0.03 & 1.47 $\pm$ 0.05 & 97.97 $\pm$ 0.39 & 71.87 $\pm$ 1.31 & 99.00 $\pm$ 0.10 & 72.28 $\pm$ 1.30 & \textbf{*100.00 $\pm$ 0.00} & \textbf{*100.00 $\pm$ 0.00} \\
        \textbf{$H$ = 32, $N_H$ = IV, Flow = 16/0} & 99.38 $\pm$ 0.02 & 2.41 $\pm$ 0.66 & 95.45 $\pm$ 1.73 & 83.06 $\pm$ 1.62 & 96.91 $\pm$ 1.24 & 83.48 $\pm$ 1.65 & \textbf{*100.00 $\pm$ 0.00} & \textbf{*100.00 $\pm$ 0.00} \\
        \textbf{$H$ = 32, $N_H$ = IV, Flow = 0/4} & 99.22 $\pm$ 0.03 & 1.50 $\pm$ 0.09 & 98.77 $\pm$ 0.08 & 97.49 $\pm$ 0.11 & 99.27 $\pm$ 0.05 & 98.33 $\pm$ 0.09 & 99.97 $\pm$ 0.01 & \textbf{*100.00 $\pm$ 0.00} \\
        \textbf{$H$ = 32, $N_H$ = IV, Flow = 0/8} & 99.25 $\pm$ 0.04 & 1.34 $\pm$ 0.07 & 98.62 $\pm$ 0.09 & 97.70 $\pm$ 0.15 & 99.24 $\pm$ 0.06 & 98.09 $\pm$ 0.34 & 99.99 $\pm$ 0.00 & \textbf{*100.00 $\pm$ 0.00} \\
        \midrule
        \textbf{$H$ = 32, $N_H$ = V, Flow = 8/0} & 99.35 $\pm$ 0.02 & 1.43 $\pm$ 0.08 & 98.43 $\pm$ 0.29 & 64.15 $\pm$ 3.79 & 99.22 $\pm$ 0.16 & 64.37 $\pm$ 3.84 & \textbf{*100.00 $\pm$ 0.00} & \textbf{*100.00 $\pm$ 0.00} \\
        \textbf{$H$ = 32, $N_H$ = V, Flow = 16/0} & 99.33 $\pm$ 0.03 & 1.76 $\pm$ 0.33 & 97.41 $\pm$ 0.93 & 84.15 $\pm$ 1.84 & 98.26 $\pm$ 0.60 & 85.01 $\pm$ 2.01 & \textbf{*100.00 $\pm$ 0.00} & \textbf{*100.00 $\pm$ 0.00} \\
        \textbf{$H$ = 32, $N_H$ = V, Flow = 0/4} & 99.18 $\pm$ 0.05 & 1.58 $\pm$ 0.08 & 98.61 $\pm$ 0.09 & 96.71 $\pm$ 0.24 & 99.11 $\pm$ 0.07 & 97.34 $\pm$ 0.17 & 99.94 $\pm$ 0.01 & \textbf{*100.00 $\pm$ 0.00} \\
        \textbf{$H$ = 32, $N_H$ = V, Flow = 0/8} & 99.19 $\pm$ 0.03 & 1.58 $\pm$ 0.06 & 98.58 $\pm$ 0.10 & 96.98 $\pm$ 0.15 & 99.19 $\pm$ 0.08 & 96.99 $\pm$ 0.28 & 99.99 $\pm$ 0.00 & \textbf{*100.00 $\pm$ 0.00} \\
        \bottomrule
    \end{tabular}
    }
\end{table*}

\begin{table*}[ht]
    \centering
    \caption{CIFAR10 comparison (<latent dim> -- <certainty budget> -- <radial layers>/<MAF layers>). Bold and starred number indicate best score among all models.}
    \label{tab:ablation-cifar10}
    \scriptsize
    \resizebox{\textwidth}{!}{
    \begin{tabular}{lcccccccc}
        \toprule
        & \textbf{Accuracy} & \textbf{Brier} & \textbf{SVHN Alea.} & \textbf{SVHN Epist.} & \textbf{CelebA Alea.} & \textbf{CelebA Epist.} & \textbf{OODom Alea.} & \textbf{OODom Epist.} \\
        \midrule
        \textbf{$H$ = 4, $N_H$ = I, Flow = 8/0} & 42.73 $\pm$ 6.40 & 81.56 $\pm$ 2.72 & 67.03 $\pm$ 3.15 & 49.84 $\pm$ 6.09 & 56.86 $\pm$ 2.13 & 39.01 $\pm$ 7.65 & 95.33 $\pm$ 4.48 & 97.68 $\pm$ 2.32 \\
        \textbf{$H$ = 4, $N_H$ = I, Flow = 16/0} & 46.39 $\pm$ 6.64 & 77.73 $\pm$ 3.57 & 66.43 $\pm$ 2.39 & 46.63 $\pm$ 5.92 & 68.22 $\pm$ 3.82 & 41.98 $\pm$ 6.52 & 98.93 $\pm$ 0.85 & 99.61 $\pm$ 0.39 \\
        \textbf{$H$ = 4, $N_H$ = I, Flow = 0/4} & 53.79 $\pm$ 2.51 & 80.83 $\pm$ 4.54 & 59.77 $\pm$ 5.82 & 35.96 $\pm$ 2.59 & 64.95 $\pm$ 1.92 & 48.27 $\pm$ 5.85 & 97.76 $\pm$ 1.90 & 99.86 $\pm$ 0.07 \\
        \textbf{$H$ = 4, $N_H$ = I, Flow = 0/8} & 51.11 $\pm$ 3.67 & 79.13 $\pm$ 2.10 & 58.22 $\pm$ 4.19 & 33.22 $\pm$ 8.03 & 63.97 $\pm$ 3.12 & 42.75 $\pm$ 9.47 & 84.62 $\pm$ 10.88 & 88.24 $\pm$ 11.76 \\
        \midrule
        \textbf{$H$ = 4, $N_H$ = II, Flow = 8/0} & 88.02 $\pm$ 0.12 & 22.61 $\pm$ 0.18 & 75.40 $\pm$ 2.82 & 40.98 $\pm$ 3.98 & 62.73 $\pm$ 6.72 & 35.68 $\pm$ 3.66 & 82.17 $\pm$ 3.49 & 60.66 $\pm$ 7.27 \\
        \textbf{$H$ = 4, $N_H$ = II, Flow = 16/0} & 87.80 $\pm$ 0.09 & 22.43 $\pm$ 0.33 & 78.00 $\pm$ 1.68 & 60.02 $\pm$ 2.96 & 63.55 $\pm$ 2.08 & 46.15 $\pm$ 4.20 & 88.55 $\pm$ 3.72 & 88.82 $\pm$ 2.85 \\
        \textbf{$H$ = 4, $N_H$ = II, Flow = 0/4} & 87.10 $\pm$ 0.10 & 22.42 $\pm$ 0.16 & 83.35 $\pm$ 0.72 & 67.89 $\pm$ 4.11 & 75.76 $\pm$ 1.33 & 63.04 $\pm$ 4.04 & 89.20 $\pm$ 3.60 & 91.43 $\pm$ 3.81 \\
        \textbf{$H$ = 4, $N_H$ = II, Flow = 0/8} & 86.28 $\pm$ 0.74 & 24.15 $\pm$ 1.52 & 83.69 $\pm$ 0.48 & 67.20 $\pm$ 3.22 & 75.30 $\pm$ 1.66 & 63.56 $\pm$ 5.25 & 80.92 $\pm$ 8.99 & 83.04 $\pm$ 7.45 \\
        \midrule
        \textbf{$H$ = 4, $N_H$ = III, Flow = 8/0} & 68.54 $\pm$ 4.06 & 64.35 $\pm$ 3.30 & 71.61 $\pm$ 2.06 & 57.05 $\pm$ 5.90 & 70.14 $\pm$ 2.66 & 51.87 $\pm$ 3.56 & 93.39 $\pm$ 6.00 & 95.61 $\pm$ 3.82 \\
        \textbf{$H$ = 4, $N_H$ = III, Flow = 16/0} & 69.41 $\pm$ 4.00 & 59.98 $\pm$ 4.79 & 74.52 $\pm$ 1.11 & 52.59 $\pm$ 6.96 & 73.48 $\pm$ 2.43 & 51.59 $\pm$ 7.09 & 93.43 $\pm$ 3.42 & 93.45 $\pm$ 4.86 \\
        \textbf{$H$ = 4, $N_H$ = III, Flow = 0/4} & 64.80 $\pm$ 4.59 & 61.82 $\pm$ 5.62 & 67.99 $\pm$ 3.37 & 37.00 $\pm$ 6.95 & 66.88 $\pm$ 5.61 & 38.00 $\pm$ 10.01 & 99.70 $\pm$ 0.02 & 99.97 $\pm$ 0.02 \\
        \textbf{$H$ = 4, $N_H$ = III, Flow = 0/8} & 61.49 $\pm$ 5.78 & 66.53 $\pm$ 6.29 & 57.58 $\pm$ 5.84 & 34.37 $\pm$ 3.79 & 62.22 $\pm$ 6.68 & 39.86 $\pm$ 9.87 & 94.73 $\pm$ 3.13 & 98.71 $\pm$ 1.02 \\
        \midrule
        \textbf{$H$ = 4, $N_H$ = IV, Flow = 8/0} & 84.45 $\pm$ 0.44 & 38.86 $\pm$ 1.13 & 73.65 $\pm$ 1.66 & 53.64 $\pm$ 1.71 & 71.68 $\pm$ 2.13 & 52.30 $\pm$ 1.49 & 69.65 $\pm$ 5.61 & 66.36 $\pm$ 7.76 \\
        \textbf{$H$ = 4, $N_H$ = IV, Flow = 16/0} & 83.16 $\pm$ 0.83 & 35.51 $\pm$ 1.55 & 80.65 $\pm$ 2.67 & 59.70 $\pm$ 3.43 & 76.69 $\pm$ 3.23 & 50.09 $\pm$ 2.76 & 80.24 $\pm$ 7.59 & 83.05 $\pm$ 5.77 \\
        \textbf{$H$ = 4, $N_H$ = IV, Flow = 0/4} & 72.13 $\pm$ 2.90 & 48.67 $\pm$ 3.63 & 72.17 $\pm$ 1.95 & 27.43 $\pm$ 3.60 & 71.77 $\pm$ 2.24 & 27.09 $\pm$ 2.25 & 99.22 $\pm$ 0.53 & 99.85 $\pm$ 0.09 \\
        \textbf{$H$ = 4, $N_H$ = IV, Flow = 0/8} & 72.34 $\pm$ 1.84 & 51.55 $\pm$ 3.08 & 68.02 $\pm$ 3.39 & 28.48 $\pm$ 2.16 & 69.02 $\pm$ 2.91 & 32.72 $\pm$ 4.89 & 99.13 $\pm$ 0.44 & 98.91 $\pm$ 0.68 \\
        \midrule
        \textbf{$H$ = 4, $N_H$ = V, Flow = 8/0} & 86.32 $\pm$ 0.64 & 32.24 $\pm$ 1.38 & 73.15 $\pm$ 4.29 & 50.79 $\pm$ 3.87 & 66.37 $\pm$ 6.81 & 47.11 $\pm$ 7.35 & 73.58 $\pm$ 7.20 & 63.29 $\pm$ 5.99 \\
        \textbf{$H$ = 4, $N_H$ = V, Flow = 16/0} & 86.18 $\pm$ 0.39 & 27.92 $\pm$ 0.56 & 79.67 $\pm$ 2.46 & 56.96 $\pm$ 4.80 & 74.68 $\pm$ 2.69 & 45.29 $\pm$ 5.32 & 77.59 $\pm$ 10.23 & 80.75 $\pm$ 8.14 \\
        \textbf{$H$ = 4, $N_H$ = V, Flow = 0/4} & 80.39 $\pm$ 0.94 & 37.33 $\pm$ 2.23 & 77.81 $\pm$ 2.52 & 31.59 $\pm$ 3.34 & 74.14 $\pm$ 1.57 & 31.93 $\pm$ 3.75 & 93.63 $\pm$ 5.76 & 93.85 $\pm$ 5.61 \\
        \textbf{$H$ = 4, $N_H$ = V, Flow = 0/8} & 77.84 $\pm$ 0.71 & 42.01 $\pm$ 1.03 & 75.27 $\pm$ 2.23 & 40.87 $\pm$ 6.42 & 73.46 $\pm$ 1.81 & 33.18 $\pm$ 5.09 & 98.59 $\pm$ 0.72 & 99.08 $\pm$ 0.61 \\
        \midrule
        \midrule
        \textbf{$H$ = 16, $N_H$ = I, Flow = 8/0} & 82.83 $\pm$ 0.84 & 30.89 $\pm$ 1.71 & 81.35 $\pm$ 0.60 & 65.51 $\pm$ 2.65 & 75.48 $\pm$ 1.65 & 61.89 $\pm$ 3.89 & 99.73 $\pm$ 0.24 & 99.82 $\pm$ 0.18 \\
        \textbf{$H$ = 16, $N_H$ = I, Flow = 16/0} & 84.83 $\pm$ 0.60 & 26.24 $\pm$ 1.06 & 82.39 $\pm$ 1.11 & 75.54 $\pm$ 2.52 & 75.30 $\pm$ 0.72 & 69.12 $\pm$ 2.15 & 99.90 $\pm$ 0.06 & 99.96 $\pm$ 0.04 \\
        \textbf{$H$ = 16, $N_H$ = I, Flow = 0/4} & 83.71 $\pm$ 1.37 & 27.35 $\pm$ 2.34 & 81.38 $\pm$ 0.94 & 56.29 $\pm$ 4.77 & 75.88 $\pm$ 0.51 & 51.73 $\pm$ 4.32 & 99.17 $\pm$ 0.65 & 99.92 $\pm$ 0.07 \\
        \textbf{$H$ = 16, $N_H$ = I, Flow = 0/8} & 84.13 $\pm$ 0.53 & 27.02 $\pm$ 1.02 & 80.24 $\pm$ 1.88 & 67.03 $\pm$ 2.59 & 74.59 $\pm$ 0.68 & 58.67 $\pm$ 8.57 & 99.93 $\pm$ 0.01 & \textbf{*100.00 $\pm$ 0.00} \\
        \midrule
        \textbf{$H$ = 16, $N_H$ = II, Flow = 8/0} & 86.82 $\pm$ 0.46 & 22.45 $\pm$ 0.88 & 83.73 $\pm$ 1.26 & 56.60 $\pm$ 3.15 & 76.98 $\pm$ 1.67 & 63.55 $\pm$ 2.26 & 85.41 $\pm$ 11.05 & 87.59 $\pm$ 7.40 \\
        \textbf{$H$ = 16, $N_H$ = II, Flow = 16/0} & 86.40 $\pm$ 0.46 & 23.06 $\pm$ 0.82 & 83.60 $\pm$ 0.96 & 74.04 $\pm$ 2.78 & 76.87 $\pm$ 0.92 & 74.00 $\pm$ 1.92 & 82.09 $\pm$ 10.05 & 92.84 $\pm$ 3.70 \\
        \textbf{$H$ = 16, $N_H$ = II, Flow = 0/4} & 88.02 $\pm$ 0.12 & 19.83 $\pm$ 0.21 & 81.19 $\pm$ 0.77 & 72.46 $\pm$ 1.88 & 74.19 $\pm$ 1.54 & 65.32 $\pm$ 2.02 & 99.14 $\pm$ 0.59 & 99.90 $\pm$ 0.07 \\
        \textbf{$H$ = 16, $N_H$ = II, Flow = 0/8} & 87.73 $\pm$ 0.11 & 19.97 $\pm$ 0.33 & 83.29 $\pm$ 0.70 & 71.93 $\pm$ 1.90 & 74.14 $\pm$ 2.13 & 63.20 $\pm$ 3.16 & 99.94 $\pm$ 0.01 & \textbf{*100.00 $\pm$ 0.00} \\
        \midrule
        \textbf{$H$ = 16, $N_H$ = III, Flow = 8/0} & 86.54 $\pm$ 0.55 & 23.44 $\pm$ 1.10 & 80.97 $\pm$ 1.57 & 59.46 $\pm$ 1.33 & 73.31 $\pm$ 2.36 & 58.10 $\pm$ 3.38 & 93.95 $\pm$ 2.43 & 91.07 $\pm$ 3.26 \\
        \textbf{$H$ = 16, $N_H$ = III, Flow = 16/0} & 86.89 $\pm$ 0.20 & 22.07 $\pm$ 0.41 & 83.11 $\pm$ 0.43 & 72.37 $\pm$ 1.29 & 75.52 $\pm$ 0.83 & 72.66 $\pm$ 3.46 & 96.20 $\pm$ 2.51 & 98.21 $\pm$ 1.06 \\
        \textbf{$H$ = 16, $N_H$ = III, Flow = 0/4} & 88.16 $\pm$ 0.16 & 19.54 $\pm$ 0.22 & 82.50 $\pm$ 1.29 & 62.54 $\pm$ 1.59 & 74.62 $\pm$ 0.98 & 57.76 $\pm$ 2.44 & 99.89 $\pm$ 0.01 & \textbf{*100.00 $\pm$ 0.00} \\
        \textbf{$H$ = 16, $N_H$ = III, Flow = 0/8} & 87.67 $\pm$ 0.17 & 20.23 $\pm$ 0.38 & 84.03 $\pm$ 0.93 & 70.91 $\pm$ 1.97 & 73.43 $\pm$ 1.33 & 61.20 $\pm$ 1.55 & 99.90 $\pm$ 0.03 & 99.99 $\pm$ 0.00 \\
        \midrule
        \textbf{$H$ = 16, $N_H$ = IV, Flow = 8/0} & 87.97 $\pm$ 0.07 & 19.98 $\pm$ 0.17 & \textbf{*84.42 $\pm$ 0.82} & 62.10 $\pm$ 0.56 & 72.25 $\pm$ 1.85 & 65.17 $\pm$ 5.34 & 90.96 $\pm$ 4.50 & 81.63 $\pm$ 9.18 \\
        \textbf{$H$ = 16, $N_H$ = IV, Flow = 16/0} & 87.90 $\pm$ 0.16 & 19.99 $\pm$ 0.46 & 82.29 $\pm$ 1.11 & \textbf{*77.83 $\pm$ 1.22} & 76.01 $\pm$ 1.18 & \textbf{*76.87 $\pm$ 3.38} & 93.67 $\pm$ 3.03 & 94.90 $\pm$ 3.09 \\
        \textbf{$H$ = 16, $N_H$ = IV, Flow = 0/4} & 87.92 $\pm$ 0.22 & 19.93 $\pm$ 0.38 & 82.26 $\pm$ 1.22 & 67.03 $\pm$ 2.75 & 73.28 $\pm$ 1.25 & 63.06 $\pm$ 2.78 & 99.91 $\pm$ 0.02 & \textbf{*100.00 $\pm$ 0.00} \\
        \textbf{$H$ = 16, $N_H$ = IV, Flow = 0/8} & 87.90 $\pm$ 0.19 & 19.70 $\pm$ 0.28 & 81.72 $\pm$ 0.44 & 64.76 $\pm$ 4.18 & 74.90 $\pm$ 0.94 & 66.46 $\pm$ 3.04 & 99.82 $\pm$ 0.12 & 99.99 $\pm$ 0.01 \\
        \midrule
        \textbf{$H$ = 16, $N_H$ = V, Flow = 8/0} & 88.17 $\pm$ 0.17 & 19.78 $\pm$ 0.29 & 83.67 $\pm$ 0.66 & 56.34 $\pm$ 0.85 & 74.32 $\pm$ 2.32 & 62.47 $\pm$ 1.08 & 95.06 $\pm$ 1.51 & 82.09 $\pm$ 6.85 \\
        \textbf{$H$ = 16, $N_H$ = V, Flow = 16/0} & 88.17 $\pm$ 0.16 & 19.81 $\pm$ 0.34 & 81.76 $\pm$ 1.19 & 68.45 $\pm$ 1.60 & 72.98 $\pm$ 1.93 & 71.08 $\pm$ 4.11 & 83.74 $\pm$ 6.25 & 86.85 $\pm$ 3.37 \\
        \textbf{$H$ = 16, $N_H$ = V, Flow = 0/4} & 88.24 $\pm$ 0.12 & 19.30 $\pm$ 0.25 & 83.57 $\pm$ 0.67 & 66.44 $\pm$ 3.12 & 76.16 $\pm$ 1.94 & 63.04 $\pm$ 2.90 & 99.10 $\pm$ 0.84 & 99.97 $\pm$ 0.02 \\
        \textbf{$H$ = 16, $N_H$ = V, Flow = 0/8} & 88.10 $\pm$ 0.20 & 19.32 $\pm$ 0.40 & 81.22 $\pm$ 1.29 & 71.88 $\pm$ 2.05 & 75.42 $\pm$ 0.96 & 66.06 $\pm$ 2.02 & 99.74 $\pm$ 0.18 & 99.99 $\pm$ 0.00 \\
        \midrule
        \midrule
        \textbf{$H$ = 32, $N_H$ = I, Flow = 8/0} & 10.05 $\pm$ 0.28 & 95.02 $\pm$ 0.03 & 27.35 $\pm$ 1.29 & 23.46 $\pm$ 1.45 & 35.30 $\pm$ 1.88 & 34.31 $\pm$ 1.47 & 97.06 $\pm$ 0.64 & \textbf{*100.00 $\pm$ 0.00} \\
        \textbf{$H$ = 32, $N_H$ = I, Flow = 16/0} & 83.94 $\pm$ 0.61 & 27.83 $\pm$ 1.22 & 81.71 $\pm$ 0.62 & 74.15 $\pm$ 2.21 & 75.62 $\pm$ 1.48 & 75.68 $\pm$ 4.15 & 99.94 $\pm$ 0.02 & 99.97 $\pm$ 0.03 \\
        \textbf{$H$ = 32, $N_H$ = I, Flow = 0/4} & 85.30 $\pm$ 0.86 & 24.67 $\pm$ 1.56 & 82.70 $\pm$ 1.54 & 66.68 $\pm$ 2.56 & 76.15 $\pm$ 1.88 & 62.01 $\pm$ 0.96 & 99.93 $\pm$ 0.02 & \textbf{*100.00 $\pm$ 0.00} \\
        \textbf{$H$ = 32, $N_H$ = I, Flow = 0/8} & 85.58 $\pm$ 0.32 & 24.04 $\pm$ 0.73 & 82.77 $\pm$ 1.10 & 57.03 $\pm$ 2.71 & 75.48 $\pm$ 1.25 & 56.44 $\pm$ 1.77 & 99.84 $\pm$ 0.11 & \textbf{*100.00 $\pm$ 0.00} \\
        \midrule
        \textbf{$H$ = 32, $N_H$ = II, Flow = 8/0} & 86.11 $\pm$ 0.41 & 23.61 $\pm$ 0.82 & 83.38 $\pm$ 0.71 & 52.99 $\pm$ 2.55 & 76.14 $\pm$ 1.43 & 60.70 $\pm$ 1.29 & 95.80 $\pm$ 2.55 & 95.44 $\pm$ 2.79 \\
        \textbf{$H$ = 32, $N_H$ = II, Flow = 16/0} & 86.28 $\pm$ 0.31 & 23.11 $\pm$ 0.54 & 83.00 $\pm$ 0.53 & 63.42 $\pm$ 3.35 & 73.87 $\pm$ 1.19 & 72.90 $\pm$ 3.19 & 97.87 $\pm$ 2.04 & 99.92 $\pm$ 0.05 \\
        \textbf{$H$ = 32, $N_H$ = II, Flow = 0/4} & 87.98 $\pm$ 0.12 & 19.62 $\pm$ 0.22 & 82.72 $\pm$ 0.77 & 58.53 $\pm$ 4.37 & 74.45 $\pm$ 1.96 & 57.56 $\pm$ 4.88 & 99.93 $\pm$ 0.04 & \textbf{*100.00 $\pm$ 0.00} \\
        \textbf{$H$ = 32, $N_H$ = II, Flow = 0/8} & 87.41 $\pm$ 0.45 & 20.73 $\pm$ 0.78 & 81.91 $\pm$ 1.56 & 63.56 $\pm$ 2.61 & 75.30 $\pm$ 0.85 & 58.86 $\pm$ 3.76 & 99.97 $\pm$ 0.01 & \textbf{*100.00 $\pm$ 0.00} \\
        \midrule
        \textbf{$H$ = 32, $N_H$ = III, Flow = 8/0} & 86.65 $\pm$ 0.08 & 22.53 $\pm$ 0.18 & 82.38 $\pm$ 0.45 & 51.04 $\pm$ 1.13 & 74.93 $\pm$ 0.97 & 62.11 $\pm$ 1.92 & 95.20 $\pm$ 2.24 & 92.17 $\pm$ 5.43 \\
        \textbf{$H$ = 32, $N_H$ = III, Flow = 16/0} & 85.71 $\pm$ 0.47 & 24.29 $\pm$ 0.88 & 82.65 $\pm$ 0.59 & 66.63 $\pm$ 3.54 & 76.55 $\pm$ 1.22 & 73.27 $\pm$ 1.09 & 98.27 $\pm$ 1.24 & 98.75 $\pm$ 1.14 \\
        \textbf{$H$ = 32, $N_H$ = III, Flow = 0/4} & 88.05 $\pm$ 0.17 & 19.54 $\pm$ 0.34 & 81.85 $\pm$ 0.94 & 63.19 $\pm$ 2.75 & 76.21 $\pm$ 0.84 & 61.44 $\pm$ 4.30 & 99.75 $\pm$ 0.21 & \textbf{*100.00 $\pm$ 0.00} \\
        \textbf{$H$ = 32, $N_H$ = III, Flow = 0/8} & 88.23 $\pm$ 0.37 & 19.48 $\pm$ 0.62 & 84.08 $\pm$ 1.42 & 64.08 $\pm$ 3.71 & \textbf{*78.26 $\pm$ 1.27} & 64.08 $\pm$ 1.55 & \textbf{*99.98 $\pm$ 0.01} & \textbf{*100.00 $\pm$ 0.00} \\
        \midrule
        \textbf{$H$ = 32, $N_H$ = IV, Flow = 8/0} & 88.02 $\pm$ 0.19 & 19.78 $\pm$ 0.32 & 83.24 $\pm$ 0.65 & 50.51 $\pm$ 4.65 & 77.52 $\pm$ 1.20 & 60.08 $\pm$ 2.16 & 95.24 $\pm$ 2.10 & 91.41 $\pm$ 3.61 \\
        \textbf{$H$ = 32, $N_H$ = IV, Flow = 16/0} & 88.29 $\pm$ 0.14 & 19.36 $\pm$ 0.31 & 82.62 $\pm$ 1.72 & 59.43 $\pm$ 5.41 & 73.52 $\pm$ 1.33 & 59.20 $\pm$ 5.22 & 97.56 $\pm$ 1.00 & 99.57 $\pm$ 0.39 \\
        \textbf{$H$ = 32, $N_H$ = IV, Flow = 0/4} & 87.83 $\pm$ 0.14 & 19.92 $\pm$ 0.24 & 83.23 $\pm$ 0.71 & 57.29 $\pm$ 5.16 & 74.99 $\pm$ 1.71 & 65.64 $\pm$ 3.16 & 99.93 $\pm$ 0.04 & \textbf{*100.00 $\pm$ 0.00} \\
        \textbf{$H$ = 32, $N_H$ = IV, Flow = 0/8} & 88.24 $\pm$ 0.28 & \textbf{*19.25 $\pm$ 0.44} & 82.82 $\pm$ 1.17 & 64.69 $\pm$ 2.96 & 73.58 $\pm$ 2.12 & 60.43 $\pm$ 3.12 & 99.27 $\pm$ 0.43 & 99.99 $\pm$ 0.01 \\
        \midrule
        \textbf{$H$ = 32, $N_H$ = V, Flow = 8/0} & 88.25 $\pm$ 0.11 & 19.43 $\pm$ 0.25 & 83.02 $\pm$ 0.60 & 48.85 $\pm$ 3.13 & 73.55 $\pm$ 2.21 & 54.24 $\pm$ 3.53 & 94.02 $\pm$ 2.56 & 83.94 $\pm$ 5.81 \\
        \textbf{$H$ = 32, $N_H$ = V, Flow = 16/0} & 88.30 $\pm$ 0.17 & 19.41 $\pm$ 0.34 & 83.35 $\pm$ 1.37 & 64.63 $\pm$ 3.58 & 75.89 $\pm$ 2.13 & 72.53 $\pm$ 3.19 & 95.68 $\pm$ 0.75 & 95.89 $\pm$ 2.84 \\
        \textbf{$H$ = 32, $N_H$ = V, Flow = 0/4} & \textbf{*88.41 $\pm$ 0.15} & 19.34 $\pm$ 0.26 & 84.03 $\pm$ 0.94 & 59.12 $\pm$ 3.95 & 74.31 $\pm$ 1.96 & 63.33 $\pm$ 1.89 & 99.73 $\pm$ 0.13 & 99.99 $\pm$ 0.00 \\
        \textbf{$H$ = 32, $N_H$ = V, Flow = 0/8} & 88.26 $\pm$ 0.10 & 19.28 $\pm$ 0.10 & 83.68 $\pm$ 0.69 & 60.66 $\pm$ 2.92 & 73.08 $\pm$ 1.57 & 61.10 $\pm$ 5.28 & 99.28 $\pm$ 0.28 & \textbf{*100.00 $\pm$ 0.00} \\
        \bottomrule
    \end{tabular}
    }
\end{table*}

\begin{table*}[ht]
    \centering
    \caption{Bike Sharing (Normal $\DNormal$) comparison (<latent dim> -- <certainty budget> -- <radial layers>/<MAF layers>). Bold and starred number indicate best score among all models.}
    \label{tab:ablation-bikesharing-normal}
    \scriptsize
    \resizebox{\textwidth}{!}{
    \begin{tabular}{lcccccc}
        \toprule
        & \textbf{RMSE} & \textbf{Calibration} & \textbf{Winter Epist.} & \textbf{Spring Epist.} & \textbf{Autumn Epist.} & \textbf{OODom Epist.} \\
        \midrule
        \textbf{$H$ = 4, $N_H$ = I, Flow = 8/0} & 60.12 $\pm$ 4.17 & 21.15 $\pm$ 2.06 & 21.11 $\pm$ 2.59 & 14.97 $\pm$ 0.48 & 17.61 $\pm$ 1.02 & \textbf{*100.00 $\pm$ 0.00} \\
        \textbf{$H$ = 4, $N_H$ = I, Flow = 16/0} & 53.68 $\pm$ 3.45 & 21.92 $\pm$ 1.65 & 26.80 $\pm$ 2.61 & 17.54 $\pm$ 0.87 & 17.96 $\pm$ 1.20 & \textbf{*100.00 $\pm$ 0.00} \\
        \textbf{$H$ = 4, $N_H$ = I, Flow = 0/4} & 83.51 $\pm$ 8.56 & 23.94 $\pm$ 1.94 & 30.25 $\pm$ 6.17 & 17.49 $\pm$ 1.50 & 18.79 $\pm$ 1.30 & \textbf{*100.00 $\pm$ 0.00} \\
        \textbf{$H$ = 4, $N_H$ = I, Flow = 0/8} & 75.16 $\pm$ 8.97 & 25.93 $\pm$ 1.01 & 29.86 $\pm$ 4.07 & 17.71 $\pm$ 0.78 & 17.98 $\pm$ 0.41 & \textbf{*100.00 $\pm$ 0.00} \\
        \midrule
        \textbf{$H$ = 4, $N_H$ = II, Flow = 8/0} & 53.90 $\pm$ 3.72 & 22.76 $\pm$ 0.93 & 24.59 $\pm$ 2.43 & 16.69 $\pm$ 0.76 & 18.30 $\pm$ 1.10 & \textbf{*100.00 $\pm$ 0.00} \\
        \textbf{$H$ = 4, $N_H$ = II, Flow = 16/0} & 56.08 $\pm$ 4.96 & 26.30 $\pm$ 1.32 & 26.93 $\pm$ 4.34 & 17.11 $\pm$ 1.06 & 20.41 $\pm$ 1.84 & \textbf{*100.00 $\pm$ 0.00} \\
        \textbf{$H$ = 4, $N_H$ = II, Flow = 0/4} & 65.03 $\pm$ 8.07 & 23.10 $\pm$ 1.68 & 27.99 $\pm$ 3.46 & 17.04 $\pm$ 1.06 & 22.55 $\pm$ 2.60 & \textbf{*100.00 $\pm$ 0.00} \\
        \textbf{$H$ = 4, $N_H$ = II, Flow = 0/8} & 74.25 $\pm$ 7.89 & 26.51 $\pm$ 1.09 & 31.66 $\pm$ 3.21 & 18.31 $\pm$ 0.90 & 20.35 $\pm$ 2.26 & \textbf{*100.00 $\pm$ 0.00} \\
        \midrule
        \textbf{$H$ = 4, $N_H$ = III, Flow = 8/0} & 51.68 $\pm$ 1.25 & 19.04 $\pm$ 1.75 & 25.06 $\pm$ 2.21 & 17.05 $\pm$ 0.47 & 17.44 $\pm$ 1.50 & \textbf{*100.00 $\pm$ 0.00} \\
        \textbf{$H$ = 4, $N_H$ = III, Flow = 16/0} & 55.23 $\pm$ 4.45 & 19.04 $\pm$ 1.13 & 27.77 $\pm$ 2.88 & 16.69 $\pm$ 0.60 & 18.74 $\pm$ 0.19 & \textbf{*100.00 $\pm$ 0.00} \\
        \textbf{$H$ = 4, $N_H$ = III, Flow = 0/4} & 79.25 $\pm$ 11.47 & 25.54 $\pm$ 0.94 & 26.66 $\pm$ 2.88 & 16.73 $\pm$ 0.92 & 19.28 $\pm$ 0.68 & \textbf{*100.00 $\pm$ 0.00} \\
        \textbf{$H$ = 4, $N_H$ = III, Flow = 0/8} & 92.99 $\pm$ 3.68 & 26.48 $\pm$ 0.41 & 28.84 $\pm$ 1.87 & 17.26 $\pm$ 0.43 & 22.35 $\pm$ 2.26 & \textbf{*100.00 $\pm$ 0.00} \\
        \midrule
        \textbf{$H$ = 4, $N_H$ = IV, Flow = 8/0} & 51.71 $\pm$ 1.50 & 18.36 $\pm$ 2.24 & 23.72 $\pm$ 2.95 & 16.19 $\pm$ 0.75 & 16.77 $\pm$ 0.38 & \textbf{*100.00 $\pm$ 0.00} \\
        \textbf{$H$ = 4, $N_H$ = IV, Flow = 16/0} & 60.81 $\pm$ 5.56 & 21.01 $\pm$ 1.35 & 22.02 $\pm$ 1.85 & 15.43 $\pm$ 0.48 & 17.31 $\pm$ 0.61 & \textbf{*100.00 $\pm$ 0.00} \\
        \textbf{$H$ = 4, $N_H$ = IV, Flow = 0/4} & 57.74 $\pm$ 4.29 & 22.78 $\pm$ 1.02 & 26.75 $\pm$ 1.99 & 17.57 $\pm$ 0.36 & 18.85 $\pm$ 1.22 & \textbf{*100.00 $\pm$ 0.00} \\
        \textbf{$H$ = 4, $N_H$ = IV, Flow = 0/8} & 78.64 $\pm$ 8.29 & 23.29 $\pm$ 1.91 & 34.19 $\pm$ 2.72 & 19.14 $\pm$ 0.86 & 20.46 $\pm$ 0.70 & \textbf{*100.00 $\pm$ 0.00} \\
        \midrule
        \textbf{$H$ = 4, $N_H$ = V, Flow = 8/0} & 53.74 $\pm$ 2.95 & 23.27 $\pm$ 1.51 & 30.83 $\pm$ 4.89 & 17.34 $\pm$ 0.25 & 20.26 $\pm$ 1.24 & \textbf{*100.00 $\pm$ 0.00} \\
        \textbf{$H$ = 4, $N_H$ = V, Flow = 16/0} & 56.01 $\pm$ 2.32 & 23.35 $\pm$ 0.94 & 27.03 $\pm$ 2.64 & 17.52 $\pm$ 1.04 & 17.07 $\pm$ 0.81 & \textbf{*100.00 $\pm$ 0.00} \\
        \textbf{$H$ = 4, $N_H$ = V, Flow = 0/4} & 83.87 $\pm$ 8.67 & 24.20 $\pm$ 2.03 & 27.64 $\pm$ 3.59 & 16.48 $\pm$ 0.78 & 21.24 $\pm$ 2.11 & \textbf{*100.00 $\pm$ 0.00} \\
        \textbf{$H$ = 4, $N_H$ = V, Flow = 0/8} & 90.87 $\pm$ 5.11 & 22.54 $\pm$ 0.99 & 30.61 $\pm$ 2.20 & 17.46 $\pm$ 0.72 & 19.84 $\pm$ 0.93 & \textbf{*100.00 $\pm$ 0.00} \\
        \midrule
        \midrule
        \textbf{$H$ = 16, $N_H$ = I, Flow = 8/0} & 58.52 $\pm$ 5.08 & 9.86 $\pm$ 3.10 & 22.85 $\pm$ 3.51 & 15.57 $\pm$ 1.36 & 19.51 $\pm$ 3.12 & \textbf{*100.00 $\pm$ 0.00} \\
        \textbf{$H$ = 16, $N_H$ = I, Flow = 16/0} & 58.33 $\pm$ 4.80 & 10.29 $\pm$ 4.02 & 38.20 $\pm$ 3.14 & 19.10 $\pm$ 0.91 & 21.55 $\pm$ 1.90 & \textbf{*100.00 $\pm$ 0.00} \\
        \textbf{$H$ = 16, $N_H$ = I, Flow = 0/4} & 52.02 $\pm$ 1.93 & 7.98 $\pm$ 2.43 & 48.51 $\pm$ 5.78 & 22.11 $\pm$ 1.76 & 31.46 $\pm$ 3.21 & \textbf{*100.00 $\pm$ 0.00} \\
        \textbf{$H$ = 16, $N_H$ = I, Flow = 0/8} & 59.43 $\pm$ 5.49 & 9.20 $\pm$ 3.32 & 55.93 $\pm$ 8.21 & 25.67 $\pm$ 2.12 & 32.76 $\pm$ 5.61 & \textbf{*100.00 $\pm$ 0.00} \\
        \midrule
        \textbf{$H$ = 16, $N_H$ = II, Flow = 8/0} & 58.02 $\pm$ 2.35 & 4.25 $\pm$ 1.32 & 23.71 $\pm$ 1.27 & 16.69 $\pm$ 0.69 & 17.95 $\pm$ 0.53 & \textbf{*100.00 $\pm$ 0.00} \\
        \textbf{$H$ = 16, $N_H$ = II, Flow = 16/0} & 58.92 $\pm$ 6.56 & 4.69 $\pm$ 1.23 & 33.81 $\pm$ 5.95 & 18.33 $\pm$ 1.66 & 21.59 $\pm$ 2.26 & \textbf{*100.00 $\pm$ 0.00} \\
        \textbf{$H$ = 16, $N_H$ = II, Flow = 0/4} & 51.87 $\pm$ 2.45 & 8.01 $\pm$ 1.89 & 48.39 $\pm$ 10.55 & 23.12 $\pm$ 3.23 & 25.23 $\pm$ 2.61 & \textbf{*100.00 $\pm$ 0.00} \\
        \textbf{$H$ = 16, $N_H$ = II, Flow = 0/8} & 58.00 $\pm$ 5.68 & 4.57 $\pm$ 1.15 & 51.38 $\pm$ 8.78 & 24.60 $\pm$ 2.22 & 26.31 $\pm$ 3.59 & \textbf{*100.00 $\pm$ 0.00} \\
        \midrule
        \textbf{$H$ = 16, $N_H$ = III, Flow = 8/0} & 60.73 $\pm$ 4.01 & 4.45 $\pm$ 0.89 & 30.21 $\pm$ 2.21 & 17.74 $\pm$ 0.74 & 20.59 $\pm$ 2.19 & \textbf{*100.00 $\pm$ 0.00} \\
        \textbf{$H$ = 16, $N_H$ = III, Flow = 16/0} & 59.87 $\pm$ 5.68 & 5.70 $\pm$ 1.47 & 36.65 $\pm$ 6.92 & 18.95 $\pm$ 1.61 & 21.12 $\pm$ 3.18 & \textbf{*100.00 $\pm$ 0.00} \\
        \textbf{$H$ = 16, $N_H$ = III, Flow = 0/4} & 58.80 $\pm$ 4.49 & 7.34 $\pm$ 2.08 & 56.72 $\pm$ 3.51 & 25.77 $\pm$ 1.72 & 27.21 $\pm$ 3.06 & \textbf{*100.00 $\pm$ 0.00} \\
        \textbf{$H$ = 16, $N_H$ = III, Flow = 0/8} & 54.08 $\pm$ 2.95 & 8.56 $\pm$ 2.70 & 55.37 $\pm$ 8.22 & 25.82 $\pm$ 2.32 & 31.20 $\pm$ 4.73 & \textbf{*100.00 $\pm$ 0.00} \\
        \midrule
        \textbf{$H$ = 16, $N_H$ = IV, Flow = 8/0} & 52.49 $\pm$ 1.77 & 5.54 $\pm$ 1.22 & 33.26 $\pm$ 4.84 & 16.93 $\pm$ 1.23 & 22.32 $\pm$ 1.87 & \textbf{*100.00 $\pm$ 0.00} \\
        \textbf{$H$ = 16, $N_H$ = IV, Flow = 16/0} & 53.29 $\pm$ 3.17 & 4.15 $\pm$ 1.67 & 30.48 $\pm$ 4.05 & 17.72 $\pm$ 1.15 & 20.31 $\pm$ 1.66 & \textbf{*100.00 $\pm$ 0.00} \\
        \textbf{$H$ = 16, $N_H$ = IV, Flow = 0/4} & 56.11 $\pm$ 3.47 & 5.48 $\pm$ 1.63 & 54.10 $\pm$ 10.31 & 23.80 $\pm$ 3.44 & 25.87 $\pm$ 2.60 & \textbf{*100.00 $\pm$ 0.00} \\
        \textbf{$H$ = 16, $N_H$ = IV, Flow = 0/8} & 56.03 $\pm$ 2.51 & 3.74 $\pm$ 1.59 & 55.41 $\pm$ 7.83 & 23.78 $\pm$ 2.41 & 30.82 $\pm$ 2.93 & \textbf{*100.00 $\pm$ 0.00} \\
        \midrule
        \textbf{$H$ = 16, $N_H$ = V, Flow = 8/0} & 55.11 $\pm$ 3.53 & 6.72 $\pm$ 1.17 & 39.26 $\pm$ 7.71 & 19.03 $\pm$ 1.41 & 23.03 $\pm$ 3.74 & \textbf{*100.00 $\pm$ 0.00} \\
        \textbf{$H$ = 16, $N_H$ = V, Flow = 16/0} & 57.19 $\pm$ 4.36 & 6.13 $\pm$ 1.77 & 30.47 $\pm$ 3.14 & 17.24 $\pm$ 1.30 & 22.40 $\pm$ 3.35 & \textbf{*100.00 $\pm$ 0.00} \\
        \textbf{$H$ = 16, $N_H$ = V, Flow = 0/4} & 54.33 $\pm$ 2.57 & 4.31 $\pm$ 1.03 & 40.28 $\pm$ 7.97 & 18.99 $\pm$ 1.47 & 24.85 $\pm$ 3.95 & \textbf{*100.00 $\pm$ 0.00} \\
        \textbf{$H$ = 16, $N_H$ = V, Flow = 0/8} & 55.97 $\pm$ 4.00 & 2.09 $\pm$ 0.55 & 57.11 $\pm$ 9.02 & 26.22 $\pm$ 3.44 & 25.91 $\pm$ 4.23 & \textbf{*100.00 $\pm$ 0.00} \\
        \midrule
        \midrule
        \textbf{$H$ = 32, $N_H$ = I, Flow = 8/0} & 55.82 $\pm$ 1.49 & 3.19 $\pm$ 0.78 & 33.14 $\pm$ 4.82 & 18.14 $\pm$ 1.42 & 19.36 $\pm$ 0.80 & \textbf{*100.00 $\pm$ 0.00} \\
        \textbf{$H$ = 32, $N_H$ = I, Flow = 16/0} & 60.20 $\pm$ 4.97 & 3.11 $\pm$ 0.78 & 40.89 $\pm$ 5.92 & 19.51 $\pm$ 1.50 & 19.08 $\pm$ 1.31 & \textbf{*100.00 $\pm$ 0.00} \\
        \textbf{$H$ = 32, $N_H$ = I, Flow = 0/4} & 59.49 $\pm$ 2.94 & 2.90 $\pm$ 0.54 & 40.57 $\pm$ 8.66 & 19.14 $\pm$ 2.25 & 26.81 $\pm$ 2.90 & \textbf{*100.00 $\pm$ 0.00} \\
        \textbf{$H$ = 32, $N_H$ = I, Flow = 0/8} & 61.86 $\pm$ 4.98 & 1.94 $\pm$ 0.33 & \textbf{*72.07 $\pm$ 8.34} & \textbf{*28.46 $\pm$ 1.65} & 32.84 $\pm$ 3.41 & \textbf{*100.00 $\pm$ 0.00} \\
        \midrule
        \textbf{$H$ = 32, $N_H$ = II, Flow = 8/0} & 57.46 $\pm$ 3.24 & 5.08 $\pm$ 1.96 & 30.09 $\pm$ 2.78 & 17.55 $\pm$ 0.67 & 18.25 $\pm$ 0.78 & \textbf{*100.00 $\pm$ 0.00} \\
        \textbf{$H$ = 32, $N_H$ = II, Flow = 16/0} & 55.34 $\pm$ 2.78 & 2.15 $\pm$ 0.41 & 35.14 $\pm$ 2.49 & 19.24 $\pm$ 1.00 & 20.53 $\pm$ 2.12 & \textbf{*100.00 $\pm$ 0.00} \\
        \textbf{$H$ = 32, $N_H$ = II, Flow = 0/4} & 58.50 $\pm$ 7.43 & 2.23 $\pm$ 0.29 & 57.94 $\pm$ 5.29 & 24.43 $\pm$ 0.60 & 29.14 $\pm$ 3.00 & \textbf{*100.00 $\pm$ 0.00} \\
        \textbf{$H$ = 32, $N_H$ = II, Flow = 0/8} & 54.68 $\pm$ 4.01 & 2.26 $\pm$ 0.60 & 45.24 $\pm$ 5.14 & 19.82 $\pm$ 1.48 & 29.27 $\pm$ 2.28 & \textbf{*100.00 $\pm$ 0.00} \\
        \midrule
        \textbf{$H$ = 32, $N_H$ = III, Flow = 8/0} & 53.35 $\pm$ 3.39 & 2.88 $\pm$ 0.81 & 32.52 $\pm$ 1.58 & 18.28 $\pm$ 1.29 & 20.08 $\pm$ 1.05 & \textbf{*100.00 $\pm$ 0.00} \\
        \textbf{$H$ = 32, $N_H$ = III, Flow = 16/0} & 54.91 $\pm$ 2.39 & 4.51 $\pm$ 0.92 & 35.21 $\pm$ 4.26 & 18.68 $\pm$ 1.00 & 19.81 $\pm$ 1.46 & \textbf{*100.00 $\pm$ 0.00} \\
        \textbf{$H$ = 32, $N_H$ = III, Flow = 0/4} & 58.84 $\pm$ 1.61 & \textbf{*1.49 $\pm$ 0.20} & 50.90 $\pm$ 11.50 & 21.83 $\pm$ 2.72 & 33.23 $\pm$ 5.33 & \textbf{*100.00 $\pm$ 0.00} \\
        \textbf{$H$ = 32, $N_H$ = III, Flow = 0/8} & \textbf{*51.12 $\pm$ 1.93} & 2.17 $\pm$ 0.35 & 55.22 $\pm$ 5.21 & 24.55 $\pm$ 2.43 & 28.35 $\pm$ 2.18 & \textbf{*100.00 $\pm$ 0.00} \\
        \midrule
        \textbf{$H$ = 32, $N_H$ = IV, Flow = 8/0} & 52.58 $\pm$ 2.37 & 6.09 $\pm$ 1.30 & 28.88 $\pm$ 5.31 & 16.77 $\pm$ 1.24 & 20.28 $\pm$ 2.63 & \textbf{*100.00 $\pm$ 0.00} \\
        \textbf{$H$ = 32, $N_H$ = IV, Flow = 16/0} & 58.58 $\pm$ 5.19 & 5.05 $\pm$ 1.43 & 28.78 $\pm$ 2.57 & 16.47 $\pm$ 0.83 & 21.19 $\pm$ 1.64 & \textbf{*100.00 $\pm$ 0.00} \\
        \textbf{$H$ = 32, $N_H$ = IV, Flow = 0/4} & 51.88 $\pm$ 1.57 & 2.72 $\pm$ 0.74 & 55.32 $\pm$ 7.42 & 24.11 $\pm$ 2.10 & \textbf{*35.59 $\pm$ 3.99} & \textbf{*100.00 $\pm$ 0.00} \\
        \textbf{$H$ = 32, $N_H$ = IV, Flow = 0/8} & 53.84 $\pm$ 2.82 & 1.85 $\pm$ 0.33 & 49.44 $\pm$ 3.65 & 21.06 $\pm$ 2.11 & 30.14 $\pm$ 2.12 & \textbf{*100.00 $\pm$ 0.00} \\
        \midrule
        \textbf{$H$ = 32, $N_H$ = V, Flow = 8/0} & 53.52 $\pm$ 1.64 & 8.24 $\pm$ 0.64 & 30.51 $\pm$ 2.22 & 18.50 $\pm$ 0.67 & 20.98 $\pm$ 2.00 & \textbf{*100.00 $\pm$ 0.00} \\
        \textbf{$H$ = 32, $N_H$ = V, Flow = 16/0} & 58.21 $\pm$ 9.11 & 6.91 $\pm$ 1.16 & 35.75 $\pm$ 3.59 & 18.07 $\pm$ 0.91 & 19.09 $\pm$ 1.29 & \textbf{*100.00 $\pm$ 0.00} \\
        \textbf{$H$ = 32, $N_H$ = V, Flow = 0/4} & 56.22 $\pm$ 4.42 & 1.98 $\pm$ 0.45 & 40.97 $\pm$ 5.06 & 20.21 $\pm$ 1.98 & 23.40 $\pm$ 1.69 & \textbf{*100.00 $\pm$ 0.00} \\
        \textbf{$H$ = 32, $N_H$ = V, Flow = 0/8} & 54.06 $\pm$ 3.18 & 2.09 $\pm$ 0.32 & 35.75 $\pm$ 7.51 & 18.86 $\pm$ 2.13 & 22.27 $\pm$ 1.83 & \textbf{*100.00 $\pm$ 0.00} \\
        \bottomrule
    \end{tabular}
    }
\end{table*}


\begin{table*}[ht]
    \centering
    \caption{Bike Sharing (Poisson $\DPoi)$ comparison (<latent dim> -- <certainty budget> -- <radial layers>/<MAF layers>). Bold and starred number indicate best score among all models.}
    \label{tab:ablation-bikesharing-poisson}
    \scriptsize
    \resizebox{\textwidth}{!}{
    \begin{tabular}{lccccc}
        \toprule
        & \textbf{RMSE} & \textbf{Winter Epist.} & \textbf{Spring Epist.} & \textbf{Autumn Epist.} & \textbf{OODom Epist.} \\
        \midrule
        \textbf{$H$ = 4, $N_H$ = I, Flow = 8/0} & 937.56 $\pm$ 238.40 & 33.50 $\pm$ 4.83 & 18.04 $\pm$ 0.58 & 20.44 $\pm$ 1.53 & \textbf{*100.00 $\pm$ 0.00} \\
        \textbf{$H$ = 4, $N_H$ = I, Flow = 16/0} & 777.22 $\pm$ 328.71 & 22.53 $\pm$ 2.16 & 17.00 $\pm$ 0.74 & 18.30 $\pm$ 1.25 & \textbf{*100.00 $\pm$ 0.00} \\
        \textbf{$H$ = 4, $N_H$ = I, Flow = 0/4} & 33780.69 $\pm$ 19607.73 & 45.22 $\pm$ 7.96 & 21.35 $\pm$ 2.09 & 37.89 $\pm$ 4.13 & \textbf{*100.00 $\pm$ 0.00} \\
        \textbf{$H$ = 4, $N_H$ = I, Flow = 0/8} & 22686.67 $\pm$ 5815.18 & 53.51 $\pm$ 5.83 & 21.80 $\pm$ 1.14 & 39.25 $\pm$ 2.59 & \textbf{*100.00 $\pm$ 0.00} \\
        \midrule
        \textbf{$H$ = 4, $N_H$ = II, Flow = 8/0} & 60383.07 $\pm$ 21389.07 & 15.20 $\pm$ 0.57 & 13.50 $\pm$ 0.22 & 15.28 $\pm$ 0.77 & \textbf{*100.00 $\pm$ 0.00} \\
        \textbf{$H$ = 4, $N_H$ = II, Flow = 16/0} & 32982.63 $\pm$ 17819.36 & 33.09 $\pm$ 8.79 & 17.55 $\pm$ 2.25 & 19.17 $\pm$ 1.19 & \textbf{*100.00 $\pm$ 0.00} \\
        \textbf{$H$ = 4, $N_H$ = II, Flow = 0/4} & 21467.31 $\pm$ 17019.59 & 44.60 $\pm$ 8.61 & 20.27 $\pm$ 1.97 & 33.25 $\pm$ 8.69 & \textbf{*100.00 $\pm$ 0.00} \\
        \textbf{$H$ = 4, $N_H$ = II, Flow = 0/8} & 11231.30 $\pm$ 4784.62 & 42.64 $\pm$ 7.58 & 21.06 $\pm$ 2.63 & 23.93 $\pm$ 2.05 & \textbf{*100.00 $\pm$ 0.00} \\
        \midrule
        \textbf{$H$ = 4, $N_H$ = III, Flow = 8/0} & 2048.78 $\pm$ 538.12 & 22.84 $\pm$ 1.90 & 16.59 $\pm$ 0.73 & 17.21 $\pm$ 0.50 & \textbf{*100.00 $\pm$ 0.00} \\
        \textbf{$H$ = 4, $N_H$ = III, Flow = 16/0} & 5181.92 $\pm$ 3581.91 & 25.42 $\pm$ 2.91 & 15.24 $\pm$ 0.53 & 17.00 $\pm$ 1.34 & \textbf{*100.00 $\pm$ 0.00} \\
        \textbf{$H$ = 4, $N_H$ = III, Flow = 0/4} & 35092.52 $\pm$ 10813.54 & 47.42 $\pm$ 5.98 & 22.49 $\pm$ 1.88 & 28.50 $\pm$ 1.52 & \textbf{*100.00 $\pm$ 0.00} \\
        \textbf{$H$ = 4, $N_H$ = III, Flow = 0/8} & 86946.86 $\pm$ 42792.69 & 53.87 $\pm$ 7.18 & 22.87 $\pm$ 2.17 & 33.78 $\pm$ 5.12 & \textbf{*100.00 $\pm$ 0.00} \\
        \midrule
        \textbf{$H$ = 4, $N_H$ = IV, Flow = 8/0} & 10255.94 $\pm$ 6207.90 & 19.89 $\pm$ 1.82 & 14.93 $\pm$ 0.30 & 16.38 $\pm$ 0.57 & \textbf{*100.00 $\pm$ 0.00} \\
        \textbf{$H$ = 4, $N_H$ = IV, Flow = 16/0} & 6665.70 $\pm$ 3160.53 & 29.95 $\pm$ 6.24 & 17.66 $\pm$ 1.06 & 17.33 $\pm$ 0.46 & \textbf{*100.00 $\pm$ 0.00} \\
        \textbf{$H$ = 4, $N_H$ = IV, Flow = 0/4} & 119600.14 $\pm$ 85229.53 & 35.15 $\pm$ 5.20 & 19.58 $\pm$ 2.37 & 27.15 $\pm$ 2.87 & \textbf{*100.00 $\pm$ 0.00} \\
        \textbf{$H$ = 4, $N_H$ = IV, Flow = 0/8} & 132950.39 $\pm$ 98199.93 & 56.85 $\pm$ 8.13 & 24.17 $\pm$ 1.99 & 40.23 $\pm$ 5.69 & \textbf{*100.00 $\pm$ 0.00} \\
        \midrule
        \textbf{$H$ = 4, $N_H$ = V, Flow = 8/0} & 131051.70 $\pm$ 124947.53 & 23.89 $\pm$ 2.55 & 15.68 $\pm$ 0.71 & 17.94 $\pm$ 1.34 & \textbf{*100.00 $\pm$ 0.00} \\
        \textbf{$H$ = 4, $N_H$ = V, Flow = 16/0} & 16481.96 $\pm$ 7339.53 & 26.55 $\pm$ 1.59 & 16.40 $\pm$ 0.25 & 26.36 $\pm$ 5.15 & \textbf{*100.00 $\pm$ 0.00} \\
        \textbf{$H$ = 4, $N_H$ = V, Flow = 0/4} & 28238.41 $\pm$ 10202.38 & 39.68 $\pm$ 9.73 & 19.04 $\pm$ 2.13 & 29.09 $\pm$ 3.57 & \textbf{*100.00 $\pm$ 0.00} \\
        \textbf{$H$ = 4, $N_H$ = V, Flow = 0/8} & 27167.10 $\pm$ 9698.30 & 46.59 $\pm$ 6.57 & 23.03 $\pm$ 1.22 & 27.77 $\pm$ 5.03 & \textbf{*100.00 $\pm$ 0.00} \\
        \midrule
        \midrule
        \textbf{$H$ = 16, $N_H$ = I, Flow = 8/0} & 633.14 $\pm$ 237.64 & 35.20 $\pm$ 5.92 & 18.16 $\pm$ 0.91 & 22.16 $\pm$ 2.43 & \textbf{*100.00 $\pm$ 0.00} \\
        \textbf{$H$ = 16, $N_H$ = I, Flow = 16/0} & 408.28 $\pm$ 246.12 & 45.32 $\pm$ 4.53 & 19.96 $\pm$ 1.66 & 34.91 $\pm$ 6.35 & \textbf{*100.00 $\pm$ 0.00} \\
        \textbf{$H$ = 16, $N_H$ = I, Flow = 0/4} & 276.23 $\pm$ 151.72 & 64.86 $\pm$ 8.92 & 26.40 $\pm$ 3.85 & 37.05 $\pm$ 5.43 & \textbf{*100.00 $\pm$ 0.00} \\
        \textbf{$H$ = 16, $N_H$ = I, Flow = 0/8} & 262.91 $\pm$ 126.12 & 80.30 $\pm$ 4.85 & 32.12 $\pm$ 2.86 & 38.83 $\pm$ 3.14 & \textbf{*100.00 $\pm$ 0.00} \\
        \midrule
        \textbf{$H$ = 16, $N_H$ = II, Flow = 8/0} & 1325.94 $\pm$ 79.27 & 30.06 $\pm$ 3.95 & 18.63 $\pm$ 1.33 & 21.43 $\pm$ 1.82 & \textbf{*100.00 $\pm$ 0.00} \\
        \textbf{$H$ = 16, $N_H$ = II, Flow = 16/0} & 1042.48 $\pm$ 413.69 & 45.96 $\pm$ 5.13 & 20.63 $\pm$ 1.63 & 24.25 $\pm$ 1.57 & \textbf{*100.00 $\pm$ 0.00} \\
        \textbf{$H$ = 16, $N_H$ = II, Flow = 0/4} & 129.87 $\pm$ 79.26 & 71.03 $\pm$ 5.39 & 34.02 $\pm$ 4.65 & 36.16 $\pm$ 5.75 & \textbf{*100.00 $\pm$ 0.00} \\
        \textbf{$H$ = 16, $N_H$ = II, Flow = 0/8} & 182.97 $\pm$ 129.05 & 81.19 $\pm$ 6.60 & 36.17 $\pm$ 5.02 & \textbf{*43.97 $\pm$ 7.12} & \textbf{*100.00 $\pm$ 0.00} \\
        \midrule
        \textbf{$H$ = 16, $N_H$ = III, Flow = 8/0} & 1233.19 $\pm$ 76.60 & 34.22 $\pm$ 4.42 & 19.09 $\pm$ 1.47 & 25.34 $\pm$ 1.63 & \textbf{*100.00 $\pm$ 0.00} \\
        \textbf{$H$ = 16, $N_H$ = III, Flow = 16/0} & 881.87 $\pm$ 367.31 & 38.58 $\pm$ 3.13 & 21.66 $\pm$ 1.47 & 22.85 $\pm$ 1.57 & \textbf{*100.00 $\pm$ 0.00} \\
        \textbf{$H$ = 16, $N_H$ = III, Flow = 0/4} & 89.12 $\pm$ 37.11 & 84.06 $\pm$ 3.99 & 37.00 $\pm$ 3.23 & 36.77 $\pm$ 5.20 & \textbf{*100.00 $\pm$ 0.00} \\
        \textbf{$H$ = 16, $N_H$ = III, Flow = 0/8} & 93.06 $\pm$ 28.11 & 76.50 $\pm$ 6.64 & 30.94 $\pm$ 5.14 & 37.96 $\pm$ 5.69 & \textbf{*100.00 $\pm$ 0.00} \\
        \midrule
        \textbf{$H$ = 16, $N_H$ = IV, Flow = 8/0} & 1893.36 $\pm$ 970.62 & 40.06 $\pm$ 6.68 & 20.01 $\pm$ 1.96 & 25.71 $\pm$ 4.91 & \textbf{*100.00 $\pm$ 0.00} \\
        \textbf{$H$ = 16, $N_H$ = IV, Flow = 16/0} & 2212.13 $\pm$ 1084.13 & 41.60 $\pm$ 4.41 & 17.93 $\pm$ 0.92 & 29.77 $\pm$ 3.60 & \textbf{*100.00 $\pm$ 0.00} \\
        \textbf{$H$ = 16, $N_H$ = IV, Flow = 0/4} & 56.60 $\pm$ 2.25 & 72.66 $\pm$ 6.46 & 32.09 $\pm$ 4.74 & 34.56 $\pm$ 5.52 & \textbf{*100.00 $\pm$ 0.00} \\
        \textbf{$H$ = 16, $N_H$ = IV, Flow = 0/8} & 52.01 $\pm$ 2.10 & 79.58 $\pm$ 5.81 & 33.47 $\pm$ 2.84 & 36.19 $\pm$ 4.91 & \textbf{*100.00 $\pm$ 0.00} \\
        \midrule
        \textbf{$H$ = 16, $N_H$ = V, Flow = 8/0} & 4434.32 $\pm$ 3059.38 & 31.63 $\pm$ 4.31 & 17.82 $\pm$ 1.02 & 20.77 $\pm$ 2.04 & \textbf{*100.00 $\pm$ 0.00} \\
        \textbf{$H$ = 16, $N_H$ = V, Flow = 16/0} & 4115.67 $\pm$ 1891.56 & 47.58 $\pm$ 4.69 & 22.82 $\pm$ 2.46 & 26.51 $\pm$ 5.27 & \textbf{*100.00 $\pm$ 0.00} \\
        \textbf{$H$ = 16, $N_H$ = V, Flow = 0/4} & 50.47 $\pm$ 1.54 & 83.71 $\pm$ 5.23 & \textbf{*37.46 $\pm$ 5.13} & 42.63 $\pm$ 4.37 & \textbf{*100.00 $\pm$ 0.00} \\
        \textbf{$H$ = 16, $N_H$ = V, Flow = 0/8} & 51.79 $\pm$ 0.78 & \textbf{*85.15 $\pm$ 3.61} & 37.03 $\pm$ 2.35 & 42.73 $\pm$ 4.38 & \textbf{*100.00 $\pm$ 0.00} \\
        \midrule
        \midrule
        \textbf{$H$ = 32, $N_H$ = I, Flow = 8/0} & 351.49 $\pm$ 157.14 & 38.59 $\pm$ 5.39 & 21.90 $\pm$ 2.62 & 25.23 $\pm$ 2.66 & \textbf{*100.00 $\pm$ 0.00} \\
        \textbf{$H$ = 32, $N_H$ = I, Flow = 16/0} & 167.67 $\pm$ 116.18 & 45.10 $\pm$ 6.51 & 21.90 $\pm$ 2.76 & 24.84 $\pm$ 3.40 & \textbf{*100.00 $\pm$ 0.00} \\
        \textbf{$H$ = 32, $N_H$ = I, Flow = 0/4} & 50.10 $\pm$ 1.55 & 73.09 $\pm$ 9.70 & 27.10 $\pm$ 2.42 & 40.78 $\pm$ 8.19 & \textbf{*100.00 $\pm$ 0.00} \\
        \textbf{$H$ = 32, $N_H$ = I, Flow = 0/8} & 51.97 $\pm$ 2.57 & 58.80 $\pm$ 10.68 & 23.64 $\pm$ 2.46 & 30.77 $\pm$ 5.51 & \textbf{*100.00 $\pm$ 0.00} \\
        \midrule
        \textbf{$H$ = 32, $N_H$ = II, Flow = 8/0} & 580.40 $\pm$ 250.82 & 38.80 $\pm$ 5.56 & 20.62 $\pm$ 1.89 & 25.80 $\pm$ 1.92 & \textbf{*100.00 $\pm$ 0.00} \\
        \textbf{$H$ = 32, $N_H$ = II, Flow = 16/0} & 49.96 $\pm$ 1.60 & 46.52 $\pm$ 6.52 & 22.78 $\pm$ 1.69 & 27.16 $\pm$ 5.09 & \textbf{*100.00 $\pm$ 0.00} \\
        \textbf{$H$ = 32, $N_H$ = II, Flow = 0/4} & \textbf{*48.85 $\pm$ 0.92} & 60.12 $\pm$ 10.37 & 22.57 $\pm$ 2.44 & 37.51 $\pm$ 6.70 & \textbf{*100.00 $\pm$ 0.00} \\
        \textbf{$H$ = 32, $N_H$ = II, Flow = 0/8} & 50.12 $\pm$ 2.29 & 69.33 $\pm$ 4.57 & 30.96 $\pm$ 2.55 & 35.11 $\pm$ 6.33 & \textbf{*100.00 $\pm$ 0.00} \\
        \midrule
        \textbf{$H$ = 32, $N_H$ = III, Flow = 8/0} & 462.85 $\pm$ 169.59 & 43.86 $\pm$ 7.00 & 20.62 $\pm$ 2.41 & 30.58 $\pm$ 4.59 & \textbf{*100.00 $\pm$ 0.00} \\
        \textbf{$H$ = 32, $N_H$ = III, Flow = 16/0} & 569.28 $\pm$ 219.42 & 54.12 $\pm$ 8.10 & 22.49 $\pm$ 1.98 & 31.49 $\pm$ 4.90 & \textbf{*100.00 $\pm$ 0.00} \\
        \textbf{$H$ = 32, $N_H$ = III, Flow = 0/4} & 49.83 $\pm$ 1.25 & 67.93 $\pm$ 9.50 & 27.61 $\pm$ 3.98 & 31.87 $\pm$ 6.17 & \textbf{*100.00 $\pm$ 0.00} \\
        \textbf{$H$ = 32, $N_H$ = III, Flow = 0/8} & 51.26 $\pm$ 1.53 & 70.68 $\pm$ 5.97 & 32.89 $\pm$ 3.07 & 28.56 $\pm$ 5.79 & \textbf{*100.00 $\pm$ 0.00} \\
        \midrule
        \textbf{$H$ = 32, $N_H$ = IV, Flow = 8/0} & 50.79 $\pm$ 1.07 & 42.61 $\pm$ 8.46 & 18.84 $\pm$ 2.13 & 26.45 $\pm$ 3.68 & \textbf{*100.00 $\pm$ 0.00} \\
        \textbf{$H$ = 32, $N_H$ = IV, Flow = 16/0} & 49.91 $\pm$ 1.54 & 45.15 $\pm$ 7.53 & 23.00 $\pm$ 2.26 & 27.90 $\pm$ 3.31 & \textbf{*100.00 $\pm$ 0.00} \\
        \textbf{$H$ = 32, $N_H$ = IV, Flow = 0/4} & 49.82 $\pm$ 1.59 & 59.64 $\pm$ 7.63 & 26.64 $\pm$ 4.29 & 34.17 $\pm$ 7.65 & \textbf{*100.00 $\pm$ 0.00} \\
        \textbf{$H$ = 32, $N_H$ = IV, Flow = 0/8} & 51.66 $\pm$ 1.79 & 65.31 $\pm$ 9.06 & 28.34 $\pm$ 3.83 & 38.84 $\pm$ 6.46 & \textbf{*100.00 $\pm$ 0.00} \\
        \midrule
        \textbf{$H$ = 32, $N_H$ = V, Flow = 8/0} & 52.95 $\pm$ 1.36 & 39.37 $\pm$ 7.16 & 17.74 $\pm$ 1.36 & 28.73 $\pm$ 5.78 & \textbf{*100.00 $\pm$ 0.00} \\
        \textbf{$H$ = 32, $N_H$ = V, Flow = 16/0} & 146.99 $\pm$ 94.70 & 56.50 $\pm$ 3.61 & 24.44 $\pm$ 2.00 & 34.28 $\pm$ 4.16 & \textbf{*100.00 $\pm$ 0.00} \\
        \textbf{$H$ = 32, $N_H$ = V, Flow = 0/4} & 51.42 $\pm$ 1.68 & 74.23 $\pm$ 7.35 & 27.64 $\pm$ 1.85 & 35.90 $\pm$ 6.06 & \textbf{*100.00 $\pm$ 0.00} \\
        \textbf{$H$ = 32, $N_H$ = V, Flow = 0/8} & 49.31 $\pm$ 1.81 & 76.36 $\pm$ 10.30 & 31.86 $\pm$ 4.07 & 41.32 $\pm$ 4.75 & \textbf{*100.00 $\pm$ 0.00} \\
        \bottomrule
    \end{tabular}
    }
\end{table*}

\begin{table*}[ht]
    \centering
    \caption{MNIST - OOD detection with AUC-ROC scores. Bold numbers indicate best score among single-pass models. Starred numbers indicate best scores among all models. Gray numbers indicate that R-PriorNet has seen samples from the Fashion-MNIST dataset during training.}
    \label{tab:auroc-mnist}
    \scriptsize
    \begin{tabular}{lcccccc}
        \toprule
        & \textbf{K. Alea.} & \textbf{K. Epist.} & \textbf{F. Alea.} & \textbf{F. Epist.} & \textbf{OODom Alea.} & \textbf{OODom Epist.} \\
        \midrule
        \textbf{Dropout} & 98.12 $\pm$ 0.05 & 97.16 $\pm$ 0.11 & *99.26 $\pm$ 0.03 & 96.87 $\pm$ 0.25 & 15.19 $\pm$ 1.70 & 88.16 $\pm$ 0.59 \\
        \textbf{Ensemble} & 98.17 $\pm$ 0.07 & 98.03 $\pm$ 0.05 & 99.15 $\pm$ 0.07 & 98.04 $\pm$ 0.11 & 11.83 $\pm$ 1.81 & 81.53 $\pm$ 0.38 \\
        \textbf{NatPE} & 98.25 $\pm$ 0.27 & 99.48 $\pm$ 0.03 & 98.79 $\pm$ 0.33 & *99.61 $\pm$ 0.07 & *100.00 $\pm$ 0.00 & *100.00 $\pm$ 0.00 \\
        \midrule
        \textbf{R-PriorNet} & \textbf{*99.44 $\pm$ 0.09} & \textbf{*99.59 $\pm$ 0.08} & \textcolor{gray}{100.00 $\pm$ 0.00} & \textcolor{gray}{100.00 $\pm$ 0.00} & 99.44 $\pm$ 0.16 & 1.82 $\pm$ 0.67 \\
        \textbf{EnD$^2$} & 98.21 $\pm$ 0.16 & 98.65 $\pm$ 0.15 & 99.06 $\pm$ 0.20 & 99.21 $\pm$ 0.17 & 56.31 $\pm$ 2.78 & 4.60 $\pm$ 1.89 \\
        \textbf{PostNet} & 98.73 $\pm$ 0.05 & 98.62 $\pm$ 0.06 & 98.65 $\pm$ 0.35 & 98.57 $\pm$ 0.34 & \textbf{*100.00 $\pm$ 0.00} & \textbf{*100.00 $\pm$ 0.00} \\
        \textbf{\oursacro{}} & 99.11 $\pm$ 0.17 & 99.25 $\pm$ 0.09 & \textbf{99.13 $\pm$ 0.24} & \textbf{99.45 $\pm$ 0.11} & 99.98 $\pm$ 0.01 & \textbf{*100.00 $\pm$ 0.00} \\
        \bottomrule
    \end{tabular}
\end{table*}

\begin{table*}[ht]
    \centering
    \caption{CIFAR-10 - OOD detection with AUC-ROC scores. Bold numbers indicate best score among single-pass models. Starred numbers indicate best scores among all models. Gray numbers indicate that R-PriorNet has seen samples from the SVHN dataset during training.}
    \label{tab:auroc-cifar10}
    \scriptsize
    \begin{tabular}{lcccccc}
        \toprule
        & \textbf{SVHN Alea.} & \textbf{SVHN Epist.} & \textbf{CelebA Alea.} & \textbf{CelebA Epist.} & \textbf{OODom Alea.} & \textbf{OODom Epist.} \\
        \midrule
        \textbf{Dropout} & 84.67 $\pm$ 1.42 & 75.79 $\pm$ 0.86 & 75.95 $\pm$ 3.61 & 75.00 $\pm$ 3.21 & 21.75 $\pm$ 6.36 & 78.81 $\pm$ 6.91 \\
        \textbf{Ensemble} & 88.08 $\pm$ 0.85 & 85.70 $\pm$ 0.70 & 78.80 $\pm$ 0.82 & 77.63 $\pm$ 0.61 & 40.53 $\pm$ 9.95 & 96.71 $\pm$ 2.31 \\
        \textbf{NatPE} & 88.73 $\pm$ 0.26 & *86.73 $\pm$ 0.82 & *80.46 $\pm$ 0.82 & *85.75 $\pm$ 1.09 & 92.45 $\pm$ 1.37 & *99.56 $\pm$ 0.11 \\
        \midrule
        \textbf{R-PriorNet} & \textcolor{gray}{99.94 $\pm$ 0.01} & \textcolor{gray}{99.98 $\pm$ 0.00} & 74.69 $\pm$ 2.39 & 70.63 $\pm$ 6.14 & 64.45 $\pm$ 10.72 & 59.61 $\pm$ 13.23 \\
        \textbf{EnD$^2$} & \textbf{*89.56 $\pm$ 0.67} & \textbf{84.36 $\pm$ 0.84} & 77.94 $\pm$ 1.62 & 78.14 $\pm$ 1.66 & 53.05 $\pm$ 6.07 & 4.42 $\pm$ 2.57 \\
        \textbf{PostNet} & 85.52 $\pm$ 0.58 & 84.25 $\pm$ 0.90 & 75.68 $\pm$ 2.05 & 77.96 $\pm$ 2.05 & 93.00 $\pm$ 2.46 & 97.22 $\pm$ 0.86 \\
        \textbf{\oursacro{}} & 85.24 $\pm$ 0.98 & 81.74 $\pm$ 1.05 & \textbf{77.98 $\pm$ 1.22} & \textbf{81.62 $\pm$ 3.15} & \textbf{*96.94 $\pm$ 1.53} & \textbf{97.41 $\pm$ 1.63} \\
        \bottomrule
    \end{tabular}
\end{table*}

\begin{table*}[ht]
    \centering
    \caption{Bike Sharing - OOD detection with AUC-ROC scores. Bold numbers indicate best score among single-pass models. Starred numbers indicate best scores among all models. Normal and Poisson Regression are treated separately.}
    \label{tab:auroc-bikesharing}
    \scriptsize
    \begin{tabular}{lcccc}
        \toprule
        & \textbf{Winter Epist.} & \textbf{Spring Epist.} & \textbf{Autumn Epist.} & \textbf{OODom Epist.} \\
        \midrule
        \textbf{Dropout-$\DNormal$} & 53.98 $\pm$ 1.60 & 51.24 $\pm$ 0.96 & 53.89 $\pm$ 1.16 & *100.00 $\pm$ 0.00 \\
        \textbf{Ensemble-$\DNormal$} & 81.53 $\pm$ 1.11 & 67.07 $\pm$ 0.55 & 67.78 $\pm$ 1.31 & *100.00 $\pm$ 0.00 \\
        \midrule
        \textbf{EvReg-$\DNormal$} & 55.26 $\pm$ 2.14 & 53.76 $\pm$ 1.35 & 52.39 $\pm$ 1.31 & 47.68 $\pm$ 17.67 \\
        \textbf{\oursacro{}-$\DNormal$} & \textbf{*87.67 $\pm$ 3.13} & \textbf{*68.68 $\pm$ 2.58} & \textbf{*71.70 $\pm$ 3.23} & \textbf{*100.00 $\pm$ 0.00} \\
        \midrule
        \midrule
        \textbf{Dropout-$\DPoi$} & 55.30 $\pm$ 0.58 & 50.75 $\pm$ 0.56 & 59.05 $\pm$ 1.15 & \textbf{*100.00 $\pm$ 0.00} \\
        \textbf{Ensemble-$\DPoi$} & 95.31 $\pm$ 0.41 & 75.62 $\pm$ 0.85 & 78.93 $\pm$ 1.35 & \textbf{*100.00 $\pm$ 0.00} \\
        \midrule
        \textbf{\oursacro{}-$\DPoi$} & \textbf{*96.67 $\pm$ 1.02} & \textbf{*78.45 $\pm$ 2.58} & \textbf{*82.42 $\pm$ 1.73} & \textbf{*100.00 $\pm$ 0.00} \\
        \bottomrule
    \end{tabular}
\end{table*}

\end{document}